\def\eqref#1{equation~\ref{#1}}
\def\1{\bm{1}}
\DeclareMathAlphabet{\mathsfit}{\encodingdefault}{\sfdefault}{m}{sl}
\SetMathAlphabet{\mathsfit}{bold}{\encodingdefault}{\sfdefault}{bx}{n}
\def\gD{{\mathcal{D}}}
\def\gF{{\mathcal{F}}}
\def\gG{{\mathcal{G}}}
\def\gK{{\mathcal{K}}}
\def\gS{{\mathcal{S}}}
\newcommand{\E}{\mathbb{E}}
\newcommand{\R}{\mathbb{R}}
\DeclareMathOperator*{\argmin}{arg\,min}
\DeclareMathOperator{\sign}{sign}
\theoremstyle{plain}
\newtheorem{theorem}{Theorem}[section]
\newtheorem{lemma}[theorem]{Lemma}
\theoremstyle{definition}
\newtheorem{assumption}[theorem]{Assumption}
\theoremstyle{remark}
\newtheorem{remark}[theorem]{Remark}
\crefname{assumption}{Assumption}{Assumptions}
\newcommand{\crefdefpart}[2]{%
  \hyperref[#2]{\namecref{#1}~\labelcref*{#1}(\ref*{#2})}%
}
\newcommand{\refdefpart}[2]{%
  \hyperref[#2]{\labelcref*{#1}(\ref*{#2})}%
}
\newcommand{\factdefpart}[2]{%
  \hyperref[#2]{Fact~(\ref*{#2})}%
}
\newcommand{\halfcheck}{\ding{51}\textsuperscript{\kern-0.55em\ding{55}}}
\newcommand{\ubar}[1]{\underaccent{\bar}{#1}}
\title{Noise-Adaptive Layerwise Learning Rates: Accelerating Geometry-Aware Optimization for Deep Neural Network Training}
\author{Jie Hao, 
~Xiaochuan Gong, ~Jie Xu, ~Zhengdao Wang, ~Mingrui Liu \thanks{Correspondence Author: Mingrui Liu (mingruil@gmu.edu).} \\
George Mason University\\
Fairfax, VA 22030, USA \\
\texttt{\{jhao6, xgong2, jxu13,  zwang52, mingruil\}@gmu.edu} \\
}
\begin{document}

\maketitle

% \input{iclr2026/main/abstract}
% \input{iclr2026/main/main}
% 1.Blockwise learing rate is useful, 2. Develop a generic noise-adaptive algorithm that can adapt to many optimizer, which can get rid of block-wise learnign rate tuning. 3. provide the convergence analysis to demonstrate that...

% with a noise-adaptive layerwise algorithm framed via norm-constrained linear minimization oracles (LMOs).To address these issues, Building on the framework of norm-constrained linear minimization oracles (LMOs), we introduce

%\vspace*{-0.3in}

\begin{abstract}

%Blockwise learning rates and geometry-aware optimizers have been demonstrated to be highly effective in accelerating large-scale neural network training, especially in transformer-based models. Yet they pose a dilemma: blockwise learning rates paired with state-of-the-art optimizers show strong performance, but require extensive per-block hyperparameter tuning. In contrast, geometry-aware methods leverage layerwise structure, but they tend to underperform without careful tuning of architecture-specific hyperparameters. 

% (for example, spectral norm for matrix layers such as query-key, value-output, and MLP blocks in Transformers)
%% However, even within the group of layers defined by the same norm, layers can exhibit vastly different geometries. 

%the loss landscapes can be heterogeneous across layers However, even within a group of layers sharing the same norm,
%

% However, even within a group of layers associated with the same norm,  the loss landscapes can be heterogeneous across layers and evolve dynamically during training.
% For example, recent work shows that sharpness can vary widely across transformer layers and evolve during training, 
Geometry-aware optimization algorithms, such as Muon, have achieved remarkable success in training deep neural networks (DNNs).  These methods leverage the underlying geometry of DNNs by selecting appropriate norms for different layers and updating parameters via norm-constrained linear minimization oracles (LMOs). 
However, even within a group of layers associated with the same norm, the local curvature can be heterogeneous across layers and vary dynamically over the course of training. For example, recent work shows that sharpness varies substantially across transformer layers and throughout training, yet standard geometry-aware optimizers impose fixed learning rates to layers within the same group, which may be inefficient for DNN training.

In this paper, we introduce a \emph{noise-adaptive layerwise learning rate} scheme on top of geometry-aware optimization algorithms and substantially accelerate DNN training compared to methods that use fixed learning rates within each group. Our method estimates gradient variance in the dual norm induced by the chosen LMO \emph{on the fly}, and uses it to assign time-varying noise-adaptive layerwise learning rates within each group.
We provide a theoretical analysis showing that our algorithm achieves a sharp convergence rate. Empirical results on transformer architectures such as LLaMA and GPT demonstrate that our approach achieves faster convergence than state-of-the-art optimizers.

\end{abstract}

% \vspace*{-0.15in}
\section{Introduction}
% \vspace*{-0.05in}

% Deep neural networks (DNNs) have achieved extraordinary success across many domains. However, as the scale of models and datasets grows larger, the training cost becomes a major bottleneck. To mitigate this, recent work has turned to geometry-aware optimization methods, such as Muon and its relatives, which leverage layer-specific geometry and suitable norm constraints via linear minimization oracles (LMOs) to better align updates with the structure of the parameter space. 

%

% have demonstrated remarkable empirical success by leveraging different geometry for different layers for model updates: the Muon optimizer updates hidden layers by orthogonalized matrix while updating other layers by AdamW~\cite{loshchilov2017decoupled}. Later on,

Optimization algorithms are cornerstones for modern deep learning, enabling the training of increasingly large neural networks, such as LLaMA \citep{touvron2023llama} and GPT \citep{achiam2023gpt} models. While standard optimizers such as SGD~\citep{robbins1951stochastic} and Adam \citep{kingma2014adam} remain widely used, they often overlook the geometry of neural network parameter spaces. 
Recently, geometry-aware optimization algorithms such as Muon \citep{jordan2024muon} have demonstrated remarkable empirical success by performing orthogonalized updates on matrix parameters. Building on this idea, \cite{pethick2025training} developed a framework that selects appropriate norms for different layers and updates parameters via norm-constrained linear minimization oracles (LMOs). 
% In contrast, geometry-aware optimization algorithms have recently emerged as a powerful alternative. For example, the Muon optimizer \citep{jordan2024muon} leverages geometry of different layers and perform different update rules: it performs matrix-orthogonalized updates for hidden layers while applying AdamW \citep{loshchilov2017decoupled} to others, achieving strong empirical performance. Building on this idea, \cite{pethick2025training} developed an algorithmic framework that selects appropriate norms across the entire neural network and update parameters through norm-constrained linear minimization oracles (LMOs).
These methods go beyond standard optimizers by exploiting structural properties (e.g. layer-wise operator norms) of DNNs rather than treating all parameters uniformly, thus leading to improved performance and acceleration for large-scale foundation model pretraining~\citep{liu2025muon}. 

\begin{figure*}[!t]
    \centering
    \includegraphics[width=0.24\linewidth]{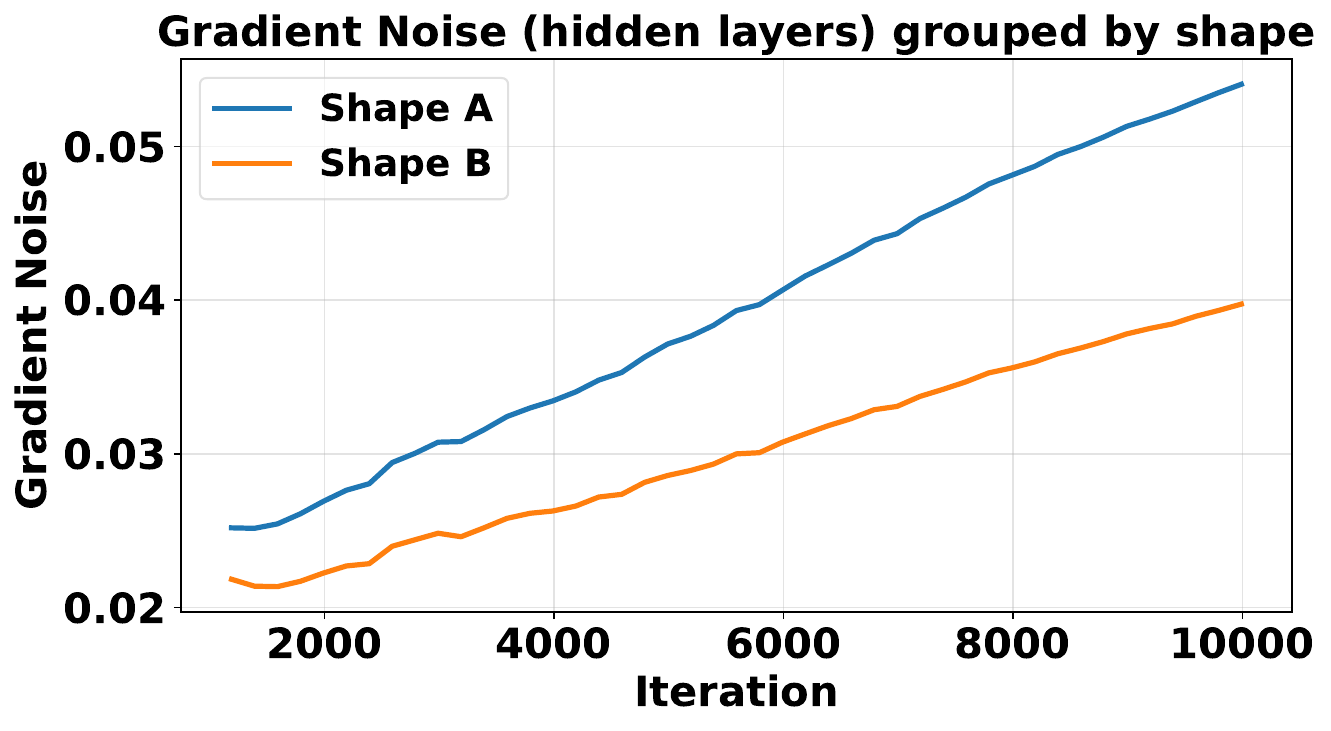}
    \includegraphics[width=0.24\linewidth]{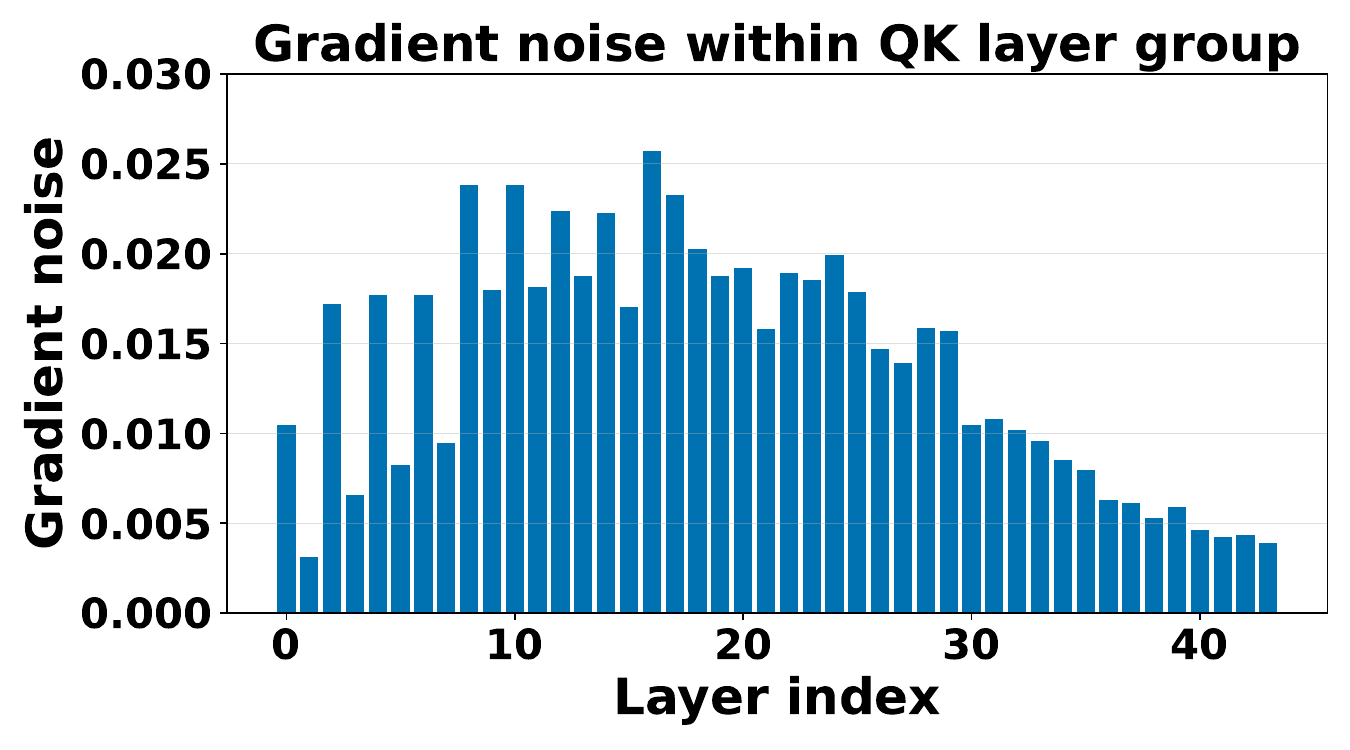}
    \includegraphics[width=0.24\linewidth]{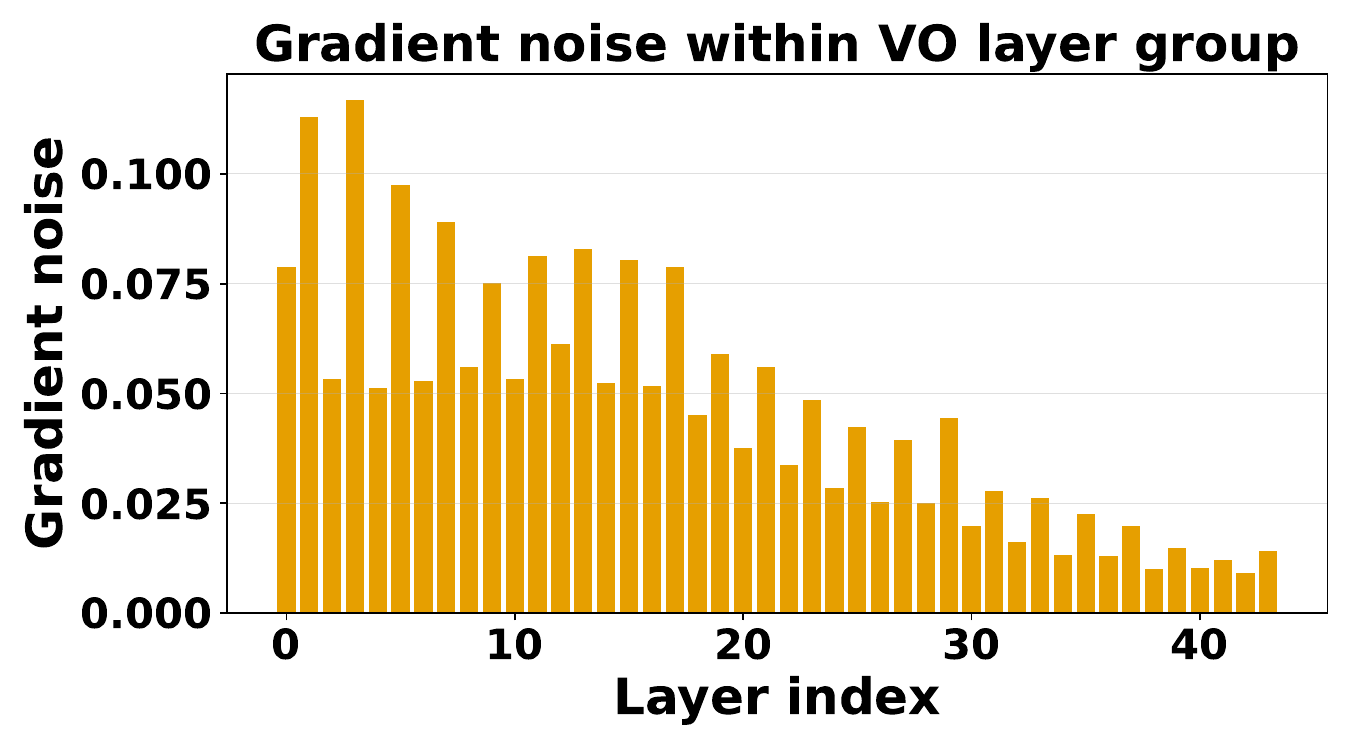}
    \includegraphics[width=0.24\linewidth]{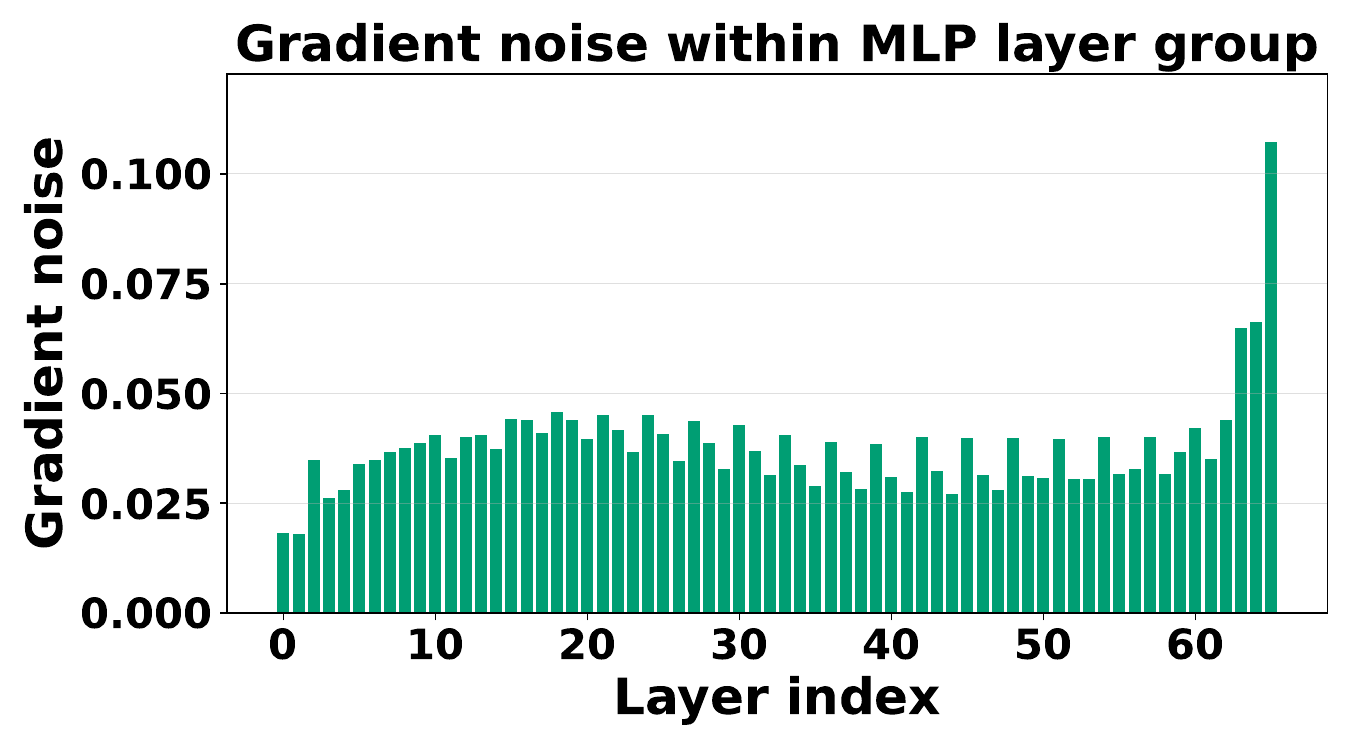}
    % \vspace{-0.1in}
    \caption[]{The stochastic gradient noise is heterogeneous across groups and layers in transformers. The first subfigure shows that average gradient noise in hidden layers varies across parameter groups defined by matrix shape and evolves over training. The last three subfigures illustrate that, within each layer group, the gradient noise varies substantially across layers\footnotemark.}
    % \caption{The stochastic gradient noise is heterogeneous across layer groups and layers in transformers. The stochastic gradient noise is estimated by the nuclear norm (for parameters in Muon optimizer) or 1-norm (\textcolor{red}{$1\to1$ operator norm?}) (for parameters in AdamW optimizer) of the difference between the current step's gradient and the last step's gradient.}
    \label{fig:noise_heterogeneity}
    % \vspace*{-0.1in}
\end{figure*}
% \vspace{-0.1in}
\footnotetext{See \cref{app:noise_verify} for the implementation details.}

Despite their success, most of the existing geometry-aware optimizers simply assign fixed learning rates within groups of layers associated with the same norm choice. However, these algorithms  neglect the heterogeneous and dynamic nature of various layers during the neural network training.   For example, recent studies \citep{wang2025sharpness} have shown that sharpness or local curvature of the objective function can vary substantially across different types of layers (e.g., query-key (QK) layers, value-output (VO) layers, and multilayer perceptron (MLP) in transformers). Moreover, these variations evolve over time, as observed when training with AdamW~\citep{loshchilov2017decoupled}. \citep{riabinin2025gluon} firstly proposed layerwise learning rates for the geometry-aware optimization methods based on smoothness parameters. In contrast, we focus on the heterogeneous noise magnitude of each layer instead of the smoothness parameters. In particular, we have observed similar phenomena in training a LLaMA model with the Muon optimizer\footnote{We follow \url{https://github.com/KellerJordan/modded-nanogpt} to apply Muon optimizer to the transformer hidden layers (including query, key, value, output, MLP layers), and AdamW to the embedding, LM head, normalization layers.}. Figure~\ref{fig:noise_heterogeneity} highlights that the stochastic gradient noise differs substantially across layer groups or layers, and shifts throughout training. Nevertheless, state-of-the-art geometry-aware optimizers such as D-Muon~\citep{liu2025muon} and Scion~\citep{pethick2025training} use the same fixed learning rate for matrices of the same shape, ignoring the fact that gradient noise on layers with the same shape can vary significantly over iterations as shown in Figure~\ref{fig:noise_heterogeneity}. This mismatch suggests that treating such layers uniformly may lead to inefficient training, motivating the need for novel layerwise learning rate schemes.

Layerwise adaptive learning rates~\citep{you2017scaling,you2019large} are widely used in deep learning under standard Euclidean spaces. These optimizers automatically rescale updates according to gradient magnitudes, which reduces manual tuning and often accelerates convergence. However, they disregard the structural geometry of neural networks by treating all parameters as if they belonged to the same category. In reality, neural networks contain diverse parameter groups such as matrices in attention layers, vectors in bias terms, and embedding tables, where different layers in each group exhibit vastly different noise profiles as illustrated in our Figure~\ref{fig:noise_heterogeneity}. The key open question is how to design adaptive learning rates beyond standard Euclidean spaces, enabling geometry-aware optimizers to exploit heterogeneous gradient noise across layers and over the course of training.

%Each group serves a distinct functional role and exhibits different scales and curvature properties in the loss landscape~\citep{wang2025sharpness}. 

%Therefore, it is natural to ask the following question:

%\textcolor{red}{ML: you do not have evidence saying that it leads to slow convergence: cannot make the claim.}
% be inefficient, leading to slower convergence or suboptimal utilization of training dynamics. 
%This mismatch highlights an important gap in the design of geometry-aware optimizers. That naturally raise a question:

%\emph{Can we exploit the stochastic gradient noise from different layers to design a layerwise learning rate scheme on the top of geometry-aware optimizers?}

In this paper, we propose a new geometry-aware optimization algorithm named \emph{Lanton: LAyer-wise Noise-adaptive learning raTe scaling with Operator Norms}. Our algorithm dynamically estimates gradient variance in the dual norm induced by the chosen LMO and uses this estimate to assign layerwise learning rates that adapt over the course of training. Unlike existing approaches, which treat all layers in a group uniformly, our algorithm accounts for the heterogeneity of gradient noise across layers, leading to smaller learning rates for layers with larger gradient noise, thereby enabling finer-grained and more efficient optimization. Importantly, the proposed mechanism is compatible with the geometry-aware optimizers, such as Muon~\citep{jordan2024muon} and D-Muon~\citep{liu2025muon}. Our contribution can be summarized as follows.

\begin{itemize}
 % \vspace*{-0.02in}
    \item We propose a new optimization algorithm named \emph{LANTON: LAyer-wise Noise-adaptive learning raTe scaling with Operator Norms}, which can dynamically capture the gradient noise of each layer and thus accordingly rescale the learning rate of each layer. 
    % \vspace*{-0.02in}
    \item We prove that our method achieves a sharp convergence rate of $\tilde{O}(1/\sqrt{T} + \sqrt{\sum_{\ell}\bar{\sigma}_{\ell}}/T^{1/4})$ for the gradient norm, where $\bar{\sigma}_{\ell}$ denotes an upper bound on the gradient noise of the layer $\ell$. Our bound shows improved noise dependence under the layer-wise noise assumption. By explicitly accounting for the heterogeneous noise levels across layers, our analysis demonstrates the advantage of noise-adaptive layer-wise learning rates.
    %Since different layers exhibit distinct noise magnitudes, we treat them individually rather than uniformly.
        % \vspace*{-0.02in}
    \item Empirically, we evaluate our approach on language model training and image classification, including LLaMA, GPT2 and convolutional neural network, and show that it substantially accelerates training and improves sample efficiency compared to state-of-the-art optimizers.
    % For example, LANTON saves $\sim 1.5 \times$ training samples compared to the state-of-the-art algorithm D-Muon when reaching comparable training or validation loss.
    Our results indicate that dynamically adapting learning rates at the layer level can better capture the evolving optimization landscape, leading to faster convergence and improved training efficiency. Together, these contributions highlight the importance of integrating noise adaptivity into geometry-aware optimization and open new directions for scalable and effective training of deep neural networks.
\end{itemize}

% Yet even within layer groups sharing a norm, there remains substantial heterogeneity: different layers can exhibit very different curvature, or gradient noise, and these properties can evolve during training. We have observed this phenomenon during the training of transformers, such as LLaMA, by muon optimizer\footnote{We follows \url{https://github.com/KellerJordan/modded-nanogpt} to apply muon optimizer to the transformer hidden layers (including query, key, value, output, mlp layers), and AdamW to the embdedding, LM head, normalization layers.}. Figure \ref{fig:noise_heterogeneity} shows the heterogeneity of gradient noise across different layer groups. 

% \vspace{-0.1in}

\section{Related Work}

A long line of work has studied optimization for deep learning. The most classical method is SGD \citep{robbins1951stochastic}. Early advances focused on adaptive learning rates, including Adagrad \citep{duchi2011adaptive}, RMSProp \citep{tieleman2012lecture}, Adadelta \citep{zeiler2012adadelta}, and the widely used Adam \citep{kingma2014adam}. Later developments improved Adam in various ways: AdamW \citep{loshchilov2017decoupled} introduced decoupled weight decay and has become the default choice for deep learning; several variants incorporate variance reduction, such as AdEMAMix \citep{pagliardini2024ademamix} and MARS-AdamW \citep{yuan2024mars}; others target memory efficiency, including Adafactor \citep{shazeer2018adafactor}, Lion \citep{chen2023symbolic}, MeZO \citep{malladi2023fine}, GaLore \citep{zhao2024galore}, Adam-mini \citep{zhang2024adam}, and Signum \citep{zhao2024deconstructing}.

Another line of work approximates or leverages second-order information. K-FAC \citep{martens2015optimizing} and Shampoo \citep{gupta2018shampoo} are classical examples. The substantial compute and memory overheads of second-order optimizers have motivated distributed implementations of Shampoo \citep{anil2020scalable,shi2023distributed}. More recently, lightweight preconditioned optimizers such as Sophia \citep{liu2023sophia} and SOAP \citep{vyas2024soap} have been proposed, achieving substantial speedups over AdamW in large-scale language model pretraining.

A third research direction focuses on layer-wise or block-wise learning rates to accelerate training. LARS \citep{you2017scaling} and LAMB \citep{you2019large} are widely used for large-batch training, while more recent approaches extend AdamW with blockwise learning rates \citep{wang2025sharpness}. 

Several parameter-free or schedule-free optimizers aim to reduce the burden of hyperparameter tuning, including Dog \citep{ivgi2023dog}, Prodigy \citep{mishchenko2023prodigy}, and Schedule-Free AdamW \citep{defazio2024road}.

Most recently, the theory of modular duality in optimization and the perspective of steepest descent under different operator norms \citep{bernstein2024modular, bernstein2024old, large2024scalable} have inspired the design of matrix-based and geometry-aware optimizers, including Muon \citep{jordan2024muon} and Scion \citep{pethick2025training}, as well as variance-reduced variants \citep{liu2025mars,qian2025muon} and distributed implementations such as D-Muon \citep{liu2025muon}, Dion \citep{ahn2025dion}, and MuonBP \citep{khaled2025muonbp}, which further improve training efficiency and stability at scale.

% \textcolor{red}{variance reduction MARS-M \citep{liu2025mars} and Muon-MVR \citep{qian2025muon}, weijie su, benamin grimmar}

% \vspace{-0.1in}
\section{Preliminaries}
% \vspace{-0.05in}
% In this work, we consider the following stochastic optimization problem:
% \begin{align}
%     \min_{X}f(X):=\E_{\xi\in\gD}[F(X;\xi)],
% \end{align}
% where $\xi$ is a randomly selected data sample or random noise following an underlying distribution $\gD$, and $X\in\R^{m\times n}$ is the model parameter. Throughout, we assume that the objective function is bounded from below, i.e., $f^*\coloneqq \inf_{X}f(X) > -\infty$.
In this work, we consider the stochastic optimization problem $\min_{X}f(X):=\E_{\xi\in\gD}[F(X;\xi)]$, where $\xi$ is random noise sampled from an unknown distribution $\gD$, and $X\in\gS$ is the model parameter, where $X=[X_1,\dots,X_p]$, $X_i\in\gS_i:=\R^{m_i\times n_i}$, and $\gS:= \prod_{i=1}^p\gS_i$ (Cartesian products). Similarly, write the gradient as $\nabla f(X) = [\nabla_1 f(X), \dots, \nabla_{p} f(X)] \in \mathcal{S}$, and the stochastic gradient as $\nabla F(X;\xi) = [\nabla_1 F(X;\xi), \dots, \nabla_{p} F(X;\xi)] \in \mathcal{S}$ (here we adopt the notation and setup from \citep{riabinin2025gluon}. We assume that the objective is bounded from below, i.e., $f^*\coloneqq \inf_{X}f(X) > -\infty$.

\textbf{Notations.}
Let $\|\cdot\|$ denote an arbitrary (not necessarily Euclidean) vector/matrix norm with associated dual norm $\|\cdot\|_{*}$, and let $\|\cdot\|_{\text{nuc}}$ denote the nuclear norm. We use $\langle \cdot,\cdot\rangle$ for the trace inner product, defined as $\langle A,B\rangle = \mathrm{tr}(A^{\top}B)$ for $A,B \in \mathbb{R}^{m \times n}$. For two positive functions $f$ and $g$, we write $f \lesssim g$ (resp. $f \gtrsim g$) if there exists $c > 0$ such that $f(x) \leq c g(x)$ (resp. $f(x) \geq c g(x)$) for all $x$. We use standard big-O notation, with $\tilde{O}$ and $\tilde{\Omega}$ used to hide polylogarithmic factors, respectively.
%% The linear minimization oracle (LMO) is a fundamental concept in convex optimization \citep{frank1956algorithm}, particularly in the context of algorithms like the Frank-Wolfe algorithm (also known as the conditional gradient method \citep{jaggi2013revisiting}). Given a convex feasible set $\gK$ and a direction vector/matrix $u$, the LMO returns an extreme point of $\gK$ that minimizes the linear function $\langle u,x \rangle$ over $\gK$. Mathematically, this can be expressed as:
% % $\mathrm{LMO}(u) = \argmin_{x\in\gK} \langle u,x \rangle$.
% \begin{align*}
%     \mathrm{LMO}(u) = \argmin_{x\in\gK} \langle u,x \rangle.
% \end{align*}

\textbf{Linear Minimization Oracle (LMO).} The LMO is a fundamental concept in convex optimization \citep{frank1956algorithm}, particularly in the context of algorithms like the Frank-Wolfe algorithm (also known as the conditional gradient method \citep{jaggi2013revisiting}). Given a convex feasible set $\gK$ and a direction vector/matrix $u$, the LMO returns an extreme point of $\gK$ that minimizes the linear function $\langle u,x \rangle$ over $\gK$. Mathematically, this can be expressed as: $\mathrm{LMO}(u) = \argmin_{x\in\gK} \langle u,x \rangle$.

Throughout this paper, we focus on the special case where $\gK:=\{x\mid\|x\|\leq 1\}$ for some chosen (not necessarily Euclidean) norm $\|\cdot\|$~\citep{pethick2025training}, unless specified otherwise.

\textbf{Operator Norm and RMS Norm.}
% We introduce the definitions of the (matrix) operator norm and the (vector) RMS norm. 
Given a matrix $A\in\R^{m\times n}$ and two normed vector spaces $(\R^{n}, \|\cdot\|_{a})$ and $(\R^{m}, \|\cdot\|_{b})$, the ``$a$ to $b$" induced operator norm is defined as
%\begin{align*}
    $\|A\|_{a\to b} := \max_{x\in\R^{n}, x\neq0}\frac{\|Ax\|_{b}}{\|x\|_{a}}
    = \sup_{\|x\|_{a}=1}\|Ax\|_{b}$.
%\end{align*}
Given a vector $x\in\R^{d}$, the RMS norm is defined as $\|x\|_{\text{RMS}} := \frac{1}{\sqrt{d}}\|x\|_{2}$.
% \begin{align*}
%     \|x\|_{\text{RMS}} := \frac{1}{\sqrt{d}}\|x\|_{2}.
% \end{align*}

% \vspace*{-0.05in}

\section{Our Method}
% \vspace*{-0.05in}
\begin{algorithm}[t]
    \caption{LANTON: LAyer-wise Noise-adaptive raTe scaling with Operator Norms} \label{alg:muon}
    \begin{algorithmic}[1]
        \STATE \textbf{Input:} $X_1, \alpha, \beta_1, \beta_2, \gamma, \eta, B_0 = \nabla F(X_1; \xi_1), H_0^{\ell}=0$ 
        \FOR{$t=1$ to $T$}
        % \FOR{$t=0,\dots,T$}
            % \FOR{each layer group $i$ in $\mathcal{I}$}
        \FOR{each layer $\ell$}  
            % \hfill \COMMENT{bar}
            \STATE $G_t^{\ell} = \nabla_{\ell} F(X_t;\xi_t)$, $\tilde{G}_t^{\ell} = \nabla_{\ell} F(X_t;\tilde{\xi}_t)$ 
            \hfill ($\tilde{G}_t^{\ell}$ is used only in Option II)
            \STATE $B_t^{\ell} = \beta_1 B_{t-1}^{\ell} + (1-\beta_1) G_t^{\ell}$
            \STATE $O_t^{\ell} = \mathrm{LMO}(B_t^{\ell})$
            \hfill (choose norm based on $\ell$'s group $\gG_{\ell}$, \cref{tbl:lmo} line 5)
            % \STATE $H_t^{\ell} = \beta_2 H_{t-1}^{\ell} + (1-\beta_2)\|G_t^{\ell}-G_{t-1}^{\ell}\|_{*}^2$ (Option I, works in practice)
            % \STATE $H_t^{\ell} = \beta_2 H_{t-1}^{\ell} + (1-\beta_2)\|G_t^{\ell}-\tilde{G}_t^{\ell}\|_{*}^2$ (Option II, works in theory)
            \STATE $H_t^{\ell} = \beta_2 H_{t-1}^{\ell} + (1-\beta_2)\cdot
            \begin{cases}
                \|G_t^{\ell}-G_{t-1}^{\ell}\|_{*}^2 & \text{Option I (practical)} \\
                \|G_t^{\ell}-\tilde{G}_t^{\ell}\|_{*}^2 & \text{Option II (theoretical)}
            \end{cases}$ \hfill (\cref{tbl:lmo} line 4)\footnotemark
            % \STATE $\alpha_t^{\ell} = \alpha/\sqrt{\alpha^2+H_t^{\ell}}$, $\alpha_t^{m} = \max_{\ell\in\gG_{\ell}}\alpha_t^{\ell}$
            % \textcolor{red}{add a footnote and say section 6.4 is a reasonable approximation, cite scion}
            \STATE $\alpha_t^{\ell} = \alpha/\sqrt{\alpha^2+H_t^{\ell}}$, $\alpha_t^{m} = \max_{j\in\gG_{\ell}}\alpha_t^{j}$
            \hfill ($\max$ is over $\ell$'s group $\gG_{\ell}$, \cref{tbl:lmo} line 1)
            \STATE $\eta_t^{\ell} = \eta_t\sqrt{\alpha_t^{\ell}/\alpha_t^{m}}$ 
            \hfill ($\eta_t\in[\eta_{\min}, \eta_{\max}]$ follows a cosine decay schedule)
            \STATE $X_{t+1}^{\ell} = X_t^{\ell} + \eta_t^{\ell}O_t^{\ell}$
            % \STATE $X_{t+1}^{\ell} = (1-\eta_t\gamma)X_t^{\ell} - \eta_t^{\ell}O_t^{\ell}$
            % \hfill (weight decay)
            % \STATE $X_{t+1}^{\ell} = (1-\eta_t\gamma)X_t^{\ell} - \eta_t^{\ell}(0.2\cdot O_t^{\ell}\cdot \sqrt{\max(d_{\text{in}}^{\ell}, d_{\text{out}}^{\ell})})$
            % \hfill (weight decay)
        \ENDFOR
        \ENDFOR
        % \ENDWHILE
    \end{algorithmic}
\end{algorithm}
\footnotetext{We use randomized SVD to efficiently approximate the calculation of nuclear norm in practice. See \cref{sec:running_time} for details.}

%%%%%%%%%%%%%%%%%%%%%%%%%%%%%%%%%%%%%%%%%%%%%%%%%
\begin{table*}[t]
% \vspace{-0.15in}
\centering
\caption{The choice of LMO can be different between layers. Denote $G=U\Sigma V^\top\in \R^{d_\mathrm{out} \times d_\mathrm{in}}$ as a matrix and $g\in \mathbb{R}^d$ as a vector. 
% We use $W=U\Sigma V^\top\in \R^{d_\mathrm{out} \times d_\mathrm{in}}$ to denote a matrix and $w\in \mathbb{R}^d$ to denote a vector. 
% Write the SVD as $W = U\Sigma V^\top$.
% For simplicity we overload notation and write the SVD as $W = U\Sigma V^\top$.
}
% \vspace{0.02in}
\label{tbl:lmo}
% \bgroup
\def\arraystretch{1.0}
\resizebox{\textwidth}{!}{
\begin{tabular}{|c|c|c|c|c|c|c|}
\hline
\multicolumn{1}{|c|}{\textbf{Parameter Group}} & \multicolumn{1}{c|}{Hidden layers (query, key, value, output, mlp)} & \multicolumn{1}{c|}{Embedding, LM head layers}  & \multicolumn{1}{c|}{RMS norm} \\
\hline
\textbf{Size} & $\text{Matrix}\in \R^{d_\mathrm{out} \times d_\mathrm{in}}$ & $\text{Matrix}\in \R^{d_\mathrm{out} \times d_\mathrm{in}}$ & $\text{Vector}\in \R^{d}$ \\
\hline
\textbf{Norm $\|\cdot\|$} & $ \text{RMS} \rightarrow \text{RMS}$ & $1 \rightarrow \infty$  & $\text{RMS}$ \\
\hline
\textbf{Dual Norm $\|\cdot\|_{*}$} & $ \sqrt{d_\mathrm{out}/ d_\mathrm{in}}\|\cdot\|_{\text{nuc}} $ & $\|\cdot\|_{1\rightarrow1}$  & $\sqrt{d}\|\cdot\|_2$ \\
\hline
\textbf{LMO} 
  & $-\sqrt{d_\mathrm{out}/ d_\mathrm{in}} UV^\top$
  & $-\frac{1}{d_\mathrm{in}} \sign(G)$
  & $-\sqrt{d}\frac{g}{\|g\|_{2}}$ \\
\hline
\textbf{LMO Implementation} 
  & Newton-Schulz
  & Signum
  & RMS Normalization
\\
\hline
\end{tabular}
}
% \vspace{-0.15in}
% \egroup
\end{table*}

%%%%%%%%%%%%%%%%%%%%%%%%%%%%%%%%%%%%%%%%%%%%%%%%%

\textbf{Algorithmic Framework.}
Our proposed algorithmic framework (\cref{alg:muon}) consists of three main stages at each iteration. First (lines 4-6), we compute the stochastic gradient $G_t^{\ell}$ for each layer, accumulate its momentum $B_t^{\ell}$, and then obtain the direction $O_t^{\ell} = \text{LMO}(B_t^{\ell})$ by invoking a LMO, where the choice of norm depends on the structural group of layer $\ell$ (embedding/LM head layers, hidden layers, or non-matrix layers; see \cref{tbl:lmo}). Note that line 4-6 is the same as the work of Scion \citep{pethick2025training} and Gluon \citep{riabinin2025gluon}. Second (lines 7-9), the key novelty of our framework is to incorporate noise-adaptive layer-wise learning rate scaling. We maintain a momentum buffer $H_t^{\ell}$ to track the moving average of the estimated noise level for each layer. This buffer can be updated in two ways: a practical option (using $G_t^{\ell}$ and $G_{t-1}^{\ell}$ and avoiding extra computation)
% and keeping the batch size unchanged) 
and a theoretical option (using two independent stochastic gradients $G_t^{\ell}$ and $\tilde{G}_t^{\ell}$ at each step). Based on $H_t^{\ell}$, the layer-wise scaling $\alpha_t^{\ell}$ is computed, and the effective learning rate is adjusted proportionally through the ratio $\sqrt{\alpha_t^{\ell}/\alpha_t^{m}}$, ensuring that layers with larger noise magnitudes employ smaller learning rates. Finally (lines 10-11), we update the model parameters with the scaled stepsize and the direction given by LMO. 

%similar to Muon \citep{jordan2024muon}, Scion \citep{pethick2025training}, D-Muon \citep{liu2025muon}, and Gluon \citep{riabinin2025gluon}.
% \textcolor{red}{ML: group is not specified in the algorithm, the definition of $\alpha_t^m$ is ambiguous without defining the group.}

\textbf{Choice of Norm Constraint and LMO Implementation.}
To determine appropriate norm constraints for different types of parameters in deep neural networks, we adopt the operator norm perspective recently advanced in \citep{large2024scalable, bernstein2024modular, pethick2025training}. As summarized in \cref{tbl:lmo}, parameters naturally fall into three groups: (i) hidden layers (e.g., query, key, value, output, and MLP weights), which are represented as matrices and we use the RMS $\to$ RMS operator norm with dual nuclear norm (scaled by $\sqrt{d_{\mathrm{out}}/d_{\mathrm{in}}}$); (ii) weight-sharing layers such as embedding and LM head matrices, where the $\ell_{1}\to \ell_{\infty}$ operator norm is used with dual $\ell_{1}\to \ell_{1}$ norm; and (iii) non-matrix parameters like RMS normalization vectors, where the RMS norm with dual $\ell_2$ norm (scaled by $\sqrt{d_{\text{model}}}$) is adopted. These dual norms are critical in line 7 of \cref{alg:muon} for estimating the layer-wise gradient noise magnitude. Based on the chosen norms, the corresponding LMOs in line 6 of \cref{alg:muon} also differ across parameter types: for hidden layers, the LMO corresponds to a scaled $UV^{\top}$ computed efficiently via Newton-Schulz iterations; for embedding and LM head layers, the LMO reduces to a scaled element-wise sign operator; and for RMS normalization vectors, the LMO is implemented by RMS normalization. This unified design of norm constraints, dual norms, and LMOs with their implementations ensures both theoretical consistency with our algorithmic framework and practical efficiency in large-scale deep learning.

\textbf{Noise-Adaptive Layer-wise Learning Rates.}
To capture the heterogeneous noise levels across different layers, we introduce noise-adaptive layer-wise learning rates, which dynamically scale the stepsize of each layer according to its estimated stochastic gradient variance. Specifically, we maintain a variance tracker $H_{t}^{\ell} = \beta_2 H_{t-1}^{\ell} + (1-\beta_2) \|G_{t}^{\ell}-\tilde{G}_t^{\ell}\|_{*}^2$ (line 7), where $\beta_2\in(0,1)$ serves as a momentum-like parameter that smooths the estimate, akin to second-moment accumulation in adaptive optimizers. The resulting adaptive scaling factor $\alpha_t^{\ell} = \alpha/\sqrt{\alpha^2 + H_{t}^{\ell}}$ (line 8) ensures that layers subject to higher noise levels (large $H_t^\ell$) receive proportionally smaller effective learning rates, consistent with classical stochastic optimization theory. We implement this by reweighting the base learning rate with the ratio $\alpha_t^{\ell}/\alpha_t^{m}$ (where $\alpha_t^{m}=\max_{\ell\in\gG_{\ell}}\alpha_t^{\ell}$), thereby aligning the updates across layers under a unified theoretical principle. While our theoretical framework (see \cref{sec:analysis}) assumes two independent gradient estimates $G_{t}^{\ell}$ and $\tilde{G}_{t}^{\ell}$, in practice we approximate $\tilde{G}_t^\ell$ by the previous step gradient $G_{t-1}^{\ell}$. This avoids doubling the batch size and keeps the total number of sampled data consistent with standard baselines, thus ensuring fair comparisons in empirical evaluation.

\textbf{Comparison with Other Optimizers.} Compared to Muon \citep{jordan2024muon}, Scion \citep{pethick2025training}, Gloun \citep{riabinin2025gluon}, and D-Muon \citep{liu2025muon}, our method introduces noise-adaptive layer-wise learning rates by estimating gradient variance in the dual norm induced by the chosen LMO. Unlike Muon and D-Muon, which use AdamW for embedding and LM head layers, we adopt a geometry-aware framework (similar to Scion) and update these weight-sharing layers with Signum (see \cref{tbl:lmo}).
The main distinction between our work and \cite{riabinin2025gluon} is that our paper studies noise-adaptive layerwise learning rates motivated by \cref{fig:noise_heterogeneity}, whereas \cite{riabinin2025gluon} considers layerwise learning rates arising from varying smoothness parameters. This conceptual difference leads to quite different proof techniques (see \cref{sec:proof_outline}).
% Our main technical contributions, Lemma 5.4 and Lemma 5.5, explicitly characterize stochastic gradient noise magnitudes and are essential for establishing Theorem 5.3. These results do not appear in \citep{riabinin2025gluon}.

Optimizers such as LARS \citep{you2017scaling} and LAMB \citep{you2019large} also use layer-wise rescaling to stabilize large-batch training. However, these methods treat all layers uniformly. In contrast, our algorithm is geometry-aware, selecting norms tailored to hidden, embedding, and normalization layers, and updating them through LMOs with noise-adaptive scaling.

Finally, although \cref{alg:muon} resembles \cite{gong2025adaptive} in estimating noise magnitude, there are key differences. Our method is LMO-based and works under arbitrary norms, while \cite{gong2025adaptive} is restricted to the Euclidean space. Our noise adaptivity refers to per-layer scaling based on estimated variance, whereas theirs targets convergence without prior noise knowledge. Moreover, our moving-average variance estimator $H_t^{\ell}$ remains $O(1)$ with high probability, in contrast to their cumulative estimator $\sum_{k=1}^{t}\|G_k-\tilde{G}_k\|^2$ which grows as $O(t)$.

% Nevertheless, state-of-the-art geometry-aware optimizers such as D-Muon~\citep{liu2025muon} and Scion~\citep{pethick2025training} use the same fixed learning rate for matrices of the same shape, ignoring the fact that gradient noise on layers with the same shape can vary significantly over iterations as shown in Figure~\ref{fig:noise_heterogeneity}. This mismatch suggests that treating such layers uniformly may lead to inefficient training, motivating the need for novel layerwise learning rate schemes.

% \vspace*{-0.05in}
\section{Analysis}
\label{sec:analysis}
% \vspace*{-0.05in}

In this section, we provide theoretical convergence guarantees for \cref{alg:muon}.
% Let $\|\cdot\|_{(\ell)}$ denote the chosen norm of layer $\ell$ and $\|\cdot\|_{(\ell)*}$ its associated dual norm, and $p$ be the number of layers. 
Let $\|\cdot\|_{(\ell)}$ denote the chosen norm of layer $\ell$ with dual norm $\|\cdot\|_{(\ell)*}$, and let $p$ be the number of layers.
We begin by presenting the assumption of layer-wise $L$-smoothness. Importantly, we do not assume that either the primal norm $\|\cdot\|_{(\ell)}$ or the dual norm $\|\cdot\|_{(\ell)*}$ is Euclidean. 
% This ensures that our results apply to the layer-wise geometries relevant to neural network training. 
A similar layer-wise smoothness assumption is also imposed in \cite{riabinin2025gluon} to capture the geometry of neural networks.

\begin{assumption} \label{ass:objective}
The objective $f$ is layer-wise $L$-smooth with constants $L:= (L_1,\dots,L_{p})\in \R_{+}^{p}$, i.e., for all $\ell=1,\dots,p$, $X=[X_1,\dots,X_{p}]$, and $Y=[Y_1,\dots,Y_p]$, $\|\nabla_{\ell} f(X)-\nabla_{\ell} f(Y)\|_{(\ell)*}\leq L_{\ell}\|X_{\ell}-Y_{\ell}\|_{(\ell)}$.
\end{assumption}

Our second assumption states that the stochastic gradient oracle is unbiased and the layer-wise gradient noise is almost surely bounded both above and below in the dual space.

\begin{assumption} \label{ass:noise}
(i) The stochastic gradient oracle is unbiased, i.e., $\E[\nabla F(X,\xi) \mid X] = \nabla f(X)$. (ii) It holds with probability one for all $\ell$ that $\ubar{\sigma}_{\ell} \leq \|\nabla_{\ell} F(X,\xi)-\nabla_{\ell} f(X)\|_{(\ell)*}\leq \bar{\sigma}_{\ell}$ with $\ubar{\sigma}_{\ell}\geq 0$.
\end{assumption}

Compared to the standard bounded variance assumption (used for expectation-based analysis) or the almost surely bounded-noise assumption (used for high-probability analysis) in stochastic optimization, \cref{ass:noise} additionally requires that the stochastic gradient noise is almost surely lower bounded. A similar assumption is also made in \citep{gong2025adaptive}. Specifically, the empirical noise lower bound is $\ubar{\sigma}_{\ell}=0.01$, as shown in \cref{fig:noise_heterogeneity}.
In the noisy setting, we assume $0<\ubar{\sigma}_{\ell}\leq \bar{\sigma}_{\ell}$, while in the noiseless setting we have $\bar{\sigma}_{\ell} = \ubar{\sigma}_{\ell} = 0$. Note that in practice, we are always in the noisy setting where $0<\ubar{\sigma}_{\ell}\leq \bar{\sigma}_{\ell}$, as illustrated in Figure~\ref{fig:noise_heterogeneity}.
% In the noiseless setting, $\bar{\sigma}_{\ell} = \ubar{\sigma}_{\ell} = 0$. 
From a technical perspective, this assumption is crucial for establishing a tight lower bound on $\alpha_t^{\ell}/\alpha_t^{m}$. 
% we would like to clarify that in the noisy setting we assume $0<\ubar{\sigma}_{\ell}\leq \bar{\sigma}_{\ell}$, whereas in the noiseless setting we set $\ubar{\sigma}_{\ell}= \bar{\sigma}_{\ell}= 0$.
% See \cref{lem:lr_range} for further proof details.
% , ensuring that $\alpha_t^{\ell}/\alpha_t^{m} = O(1)$. 
For further proof details, see \cref{lem:lr_range}. 
% why O(1)? does it depend on variance?

% Compared to standard bounded variance (for expectation analysis) or almost surely bounded gradient noise (for high probability analysis) assumptions in stochastic optimization, \cref{ass:noise} additionally requires that the stochastic gradient noise is almost surely lower bounded. This assumption has been empirically verified in the training process of deep AUC maximization on a 2-layer Transformer XX. In the noiseless case, we have $\bar{\sigma}_x=\ubar{\sigma}_x=0$. Technically, it is crucial for us to establish tight lower bound for $\alpha_t^{\ell}/\alpha_t^{m}$ such that $\alpha_t^{\ell}/\alpha_t^{m}=O(1)$. Please refer to XX for more details.

% \subsection{Main Result}

We now present our main result. Here $C_1, C_2$ (with $C_2 \geq 1$) are the universal constants defined in \cref{lem:norm}, which may depend on the dimension of the model parameters. Depending on the choice of norm constraint, one may select different $C_1, C_2$ to obtain tighter dimension-dependent bounds, rather than applying a uniform choice. A detailed discussion is provided in \cref{rem:C}.

\begin{restatable}{theorem}{thm} \label{thm:muon}
Suppose \cref{ass:objective,ass:noise} hold. Let $\Delta_1 = f(X_1)-f^*$. Set $\beta_1=1-\min\left(\frac{\sqrt{\Delta_1\sum_{\ell}L_{\ell}}}{\sum_{\ell}\bar{\sigma}_{\ell}\sqrt{T}}, 1\right)$, $1 - \min_{\ell}\frac{\ubar{\sigma}_{\ell}^4}{32(2C_2\bar{\sigma}_{\ell}^2-\ubar{\sigma}_{\ell}^2)^2\log(4T/\delta)}\leq \beta_2< 1$, $\eta_{\max}=\sqrt{\frac{\Delta_1\alpha}{\sum_{\ell}L_{\ell}T}}$, and $\eta_{\min}=\eta_{\max}/\kappa_{\eta}$ with $1\leq\kappa_{\eta}\leq O(1)$. With probability at least $1-\delta$, we have
\begin{small}
\begin{align*}
    &\frac{1}{T}\sum_{t=1}^{T}\sum_{\ell=1}^{p}\|\nabla_{\ell} f(X_t)\|_{(\ell)*}
    \lesssim \frac{\sqrt{C_2}(\sum_{\ell}\bar{\sigma}_{\ell})^2}{\sqrt{\Delta_1\sum_{\ell}L_{\ell}T}} \\
    &+ \frac{C_2^{3/2}}{C_1}\sqrt{\log\frac{T}{\delta}}\left(\frac{\sqrt{\Delta_1\sum_{\ell}L_{\ell}}}{\sqrt{T}} + \frac{\sqrt{\sum_{\ell}\bar{\sigma}_{\ell}}(\Delta_1\sum_{\ell}L_{\ell})^{1/4}}{T^{1/4}}\right).
\end{align*}
\end{small}%
\end{restatable}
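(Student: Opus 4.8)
\textbf{Proof strategy for \cref{thm:muon}.}

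The plan is to follow the standard descent-lemma template for momentum-based geometry-aware methods, but to carefully track the layer-wise learning-rate scaling $\eta_t^\ell = \eta_t\sqrt{\alpha_t^\ell/\alpha_t^m}$ and to exploit the two-sided noise bound in \cref{ass:noise}(ii) to control this ratio. First I would set $M_t^\ell := B_t^\ell - \nabla_\ell f(X_t)$, the momentum error, and use layer-wise $L$-smoothness to write the one-step decrease $f(X_{t+1}) \le f(X_t) - \sum_\ell \eta_t^\ell \langle \nabla_\ell f(X_t), O_t^\ell\rangle + \sum_\ell \tfrac{L_\ell}{2}(\eta_t^\ell)^2\|O_t^\ell\|_{(\ell)}^2$. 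Since $O_t^\ell = \mathrm{LMO}(B_t^\ell)$ has unit primal norm and satisfies $\langle B_t^\ell, O_t^\ell\rangle = -\|B_t^\ell\|_{(\ell)*}$, the cross term becomes $-\eta_t^\ell\|B_t^\ell\|_{(\ell)*} + \eta_t^\ell\langle M_t^\ell, O_t^\ell\rangle \le -\eta_t^\ell\|\nabla_\ell f(X_t)\|_{(\ell)*} + 2\eta_t^\ell\|M_t^\ell\|_{(\ell)*}$ by Hölder and the triangle inequality. Summing over $t$, rearranging, and using $\|O_t^\ell\|_{(\ell)} = 1$ gives a bound of the form $\tfrac{1}{T}\sum_{t,\ell}\eta_t^\ell\|\nabla_\ell f(X_t)\|_{(\ell)*} \lesssim \tfrac{\Delta_1}{T} + \tfrac{1}{T}\sum_{t,\ell}\eta_t^\ell\|M_t^\ell\|_{(\ell)*} + \tfrac{1}{T}\sum_{t,\ell}L_\ell(\eta_t^\ell)^2$.

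The next step is to bound the momentum error $\|M_t^\ell\|_{(\ell)*}$. With $\beta_1 = 1-\alpha$, the recursion $M_t^\ell = (1-\alpha)M_{t-1}^\ell + (1-\alpha)(\nabla_\ell f(X_{t-1}) - \nabla_\ell f(X_t)) + \alpha(\nabla_\ell F(X_t;\xi_t^\ell) - \nabla_\ell f(X_t))$ splits the error into a bias part, controlled by $L_\ell\|X_t^\ell - X_{t-1}^\ell\|_{(\ell)} = L_\ell\eta_{t-1}^\ell$ (since the update has unit primal norm), and a stochastic part. For the stochastic part I would unroll into $\alpha\sum_{s\le t}(1-\alpha)^{t-s}\zeta_s^\ell$ with $\zeta_s^\ell$ the mean-zero noise, which has dual norm bounded a.s. by $\bar\sigma_\ell$ by \cref{ass:noise}(ii); applying a Hoeffding/Azuma-type concentration in the (finite-dimensional) dual norm — invoking \cref{lem:norm} to pass between the dual norm and a reference Euclidean norm at the cost of the constants $C_1, C_2$ — yields $\|\text{stochastic part}\|_{(\ell)*} \lesssim C_2\bar\sigma_\ell\sqrt{\alpha\log(T/\delta)}$ with high probability, uniformly in $t$. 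Combining, $\|M_t^\ell\|_{(\ell)*} \lesssim L_\ell\sum_{s}(1-\alpha)^{t-s}\eta_{s-1}^\ell + C_2\bar\sigma_\ell\sqrt{\alpha\log(T/\delta)}$, and after summing over $t$ the geometric weights contribute a $1/\alpha$ factor that cancels against the $\alpha$ in front.

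The step I expect to be the main obstacle is controlling the learning-rate ratio $\alpha_t^\ell/\alpha_t^m$, which is where the lower bound $\ubar\sigma_\ell$ and the condition on $\beta_2$ enter. I would first show, via the stated concentration of $H_t^\ell$ around its conditional mean (which sits in $[\ubar\sigma_\ell^2, 2C_2\bar\sigma_\ell^2]$ after the norm-equivalence conversion) together with the specific lower bound on $\beta_2$, that $H_t^\ell \in [\tfrac12\ubar\sigma_\ell^2, \, 3C_2\bar\sigma_\ell^2]$ for all $t,\ell$ with probability $\ge 1-\delta$ — this is exactly the content referenced as \cref{lem:lr_range}. It then follows that $\alpha_t^\ell = \alpha/\sqrt{\alpha^2+H_t^\ell} \ge \alpha/\sqrt{\alpha^2 + 3C_2\bar\sigma_\ell^2}$ and $\alpha_t^m \le 1$, so the ratio $\alpha_t^\ell/\alpha_t^m$ is bounded below by a positive constant depending on $C_1, C_2$ and the noise bounds; hence $\eta_t^\ell \gtrsim \eta_t\sqrt{\alpha_t^\ell}$ while also $\eta_t^\ell \le \eta_t$. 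This two-sided control lets me lower-bound the left-hand side by $c\,\eta_{\min}\cdot\tfrac1T\sum_{t,\ell}\|\nabla_\ell f(X_t)\|_{(\ell)*}$ and upper-bound the error terms by their worst case over $t$. Finally I would substitute the prescribed $\alpha = \min(\sqrt{\Delta_1\sum_\ell L_\ell}/(\sum_\ell\bar\sigma_\ell\sqrt T),\,1)$, $\eta_{\max} = \sqrt{\Delta_1\alpha/(\sum_\ell L_\ell T)}$, and $\eta_{\min} = \eta_{\max}/\kappa_\eta$ with $\kappa_\eta = O(1)$; dividing through by $\eta_{\min}$, the $\Delta_1/(\eta_{\min}T)$ term produces $\sqrt{\sum_\ell L_\ell}/(\sqrt{\alpha}\sqrt{T})$, which upon inserting the two branches of $\alpha$ gives precisely the $\sqrt{\Delta_1\sum_\ell L_\ell}/\sqrt T$ and $\sqrt{\sum_\ell\bar\sigma_\ell}(\Delta_1\sum_\ell L_\ell)^{1/4}/T^{1/4}$ terms, while the momentum-noise term $\sqrt{\alpha\log(T/\delta)}\sum_\ell\bar\sigma_\ell$ divided by $\eta_{\min}$ contributes the leading $(\sum_\ell\bar\sigma_\ell)^2/\sqrt{\Delta_1\sum_\ell L_\ell T}$ term; bookkeeping the $C_1, C_2$ dependence through each inequality yields the stated prefactors $\sqrt{C_2}$ and $C_2^{3/2}/C_1$.
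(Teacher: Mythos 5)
Your overall route is the same as the paper's: the descent inequality via the LMO/H\"older argument, the unrolled momentum-error recursion split into a bias part (controlled by $L_\ell\eta_{t-1}^\ell$) and a stochastic part handled by a martingale concentration bound in a reference Euclidean/Frobenius norm plus norm equivalence, two-sided high-probability control of $H_t^\ell$ via Azuma and the choice of $\beta_2$, and finally substitution of $\alpha$, $\eta_{\max}$, $\eta_{\min}$. All of those steps match \cref{lem:descent}, \cref{lem:MDS}, \cref{lem:noise}, and the final computation in \cref{app:thm_proof}.

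There is, however, one step that fails as written: your lower bound on the ratio $\alpha_t^{\ell}/\alpha_t^{m}$. You bound $\alpha_t^{\ell}\ge \alpha/\sqrt{\alpha^2+3C_2\bar{\sigma}_{\ell}^2}$ and $\alpha_t^{m}\le 1$ and call the resulting quotient ``a positive constant,'' but it is $\Theta\bigl(\alpha/(\sqrt{C_2}\,\bar{\sigma}_{\ell})\bigr)$, and with the prescribed $\alpha=\min\bigl(\sqrt{\Delta_1\sum_{\ell}L_{\ell}}/(\sum_{\ell}\bar{\sigma}_{\ell}\sqrt{T}),1\bigr)$ this is $\Theta(T^{-1/2})$ whenever the noise is nonzero. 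Dividing the descent inequality by $\alpha_r\eta_{\min}$ with $\alpha_r=\Theta(T^{-1/2})$ multiplies every term of the final bound by $\sqrt{T}$ and makes the rate vacuous (the $T^{-1/4}$ term becomes $T^{1/4}$). The fix — and the actual content of \cref{lem:lr_range} — is to compare $\alpha_t^{\ell}$ with $\alpha_t^{m}$ \emph{directly} through the two-sided bounds on $H$ within the group: writing $\alpha_t^{\ell}/\alpha_t^{m}=\sqrt{(\alpha^2+H_t^{m'})/(\alpha^2+H_t^{\ell})}$ for the maximizing layer $m'$, and using $H_t^{m'}\ge \ubar{\sigma}^2(1-\beta_2^t)/C_2$ against $H_t^{\ell}\le 4\bar{\sigma}^2(1-\beta_2^t)$ for $t\ge t_0$, one gets the $\alpha$-free (hence $T$-free) constant $\ubar{\sigma}/(2\sqrt{C_2}\bar{\sigma})$. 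This is precisely where the lower-bound half of \cref{ass:noise} is indispensable; your sketch names the right assumption but does not actually use it in the inequality chain, and the chain you do write cannot deliver the stated rate.
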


% \vspace{-0.05in}

\cref{thm:muon} shows that \cref{alg:muon} achieves a convergence rate of $\tilde{O}(1/\sqrt{T} + \sqrt{\sum_{\ell}\bar{\sigma}_{\ell}}/T^{1/4})$. 
Our bound highlights the advantage of adopting a layer-wise noise assumption. It achieves improved noise dependence compared to the $O(1/T^{3/4}+\sum_{\ell}\bar{\sigma}_{\max}/T^{1/4})$\footnote{This rate is obtained by replacing the global variance in \citep{pethick2025training} with the layer-wise variance.} bound established in \citep[Theorem 5.7]{pethick2025training}, where $\bar{\sigma}_{\max}$ is the uniform noise bound assumed in prior work \citep{pethick2025training}. This improvement arises from recognizing that different layers exhibit distinct noise levels during training, and thus should not be treated uniformly. Empirically, we observe noise heterogeneity across layer groups (see \cref{fig:noise_heterogeneity,tbl:noise_stat}). Moreover, we compute that $\sqrt{\sum_{\ell}\bar{\sigma}_{\ell}} = 3.654$, which is significantly smaller than $\sum_{\ell}\bar{\sigma}_{\max} = 18.018$ in the LLaMA-1.1B pretraining on C4 dataset \citep{dodge2021documenting}, thereby validating our theoretical gain in both analysis and experiments.

% such as QK, VO, and MLP
% \citep{pethick2025training}: $O(1/T^{3/4}+\sum_{\ell}\bar{\sigma}_{\max}/T^{1/4})$;
% \citep{gong2025adaptive}: $\tilde{O}(1/\sqrt{T} + \sqrt{\sum_{\ell}\bar{\sigma}_{\max}}/T^{1/4})$;
% $\sum_{\ell}\bar{\sigma}_{\ell} = 13.354$

% significantly

% For $p>1$, our bound exhibits improved noise dependence since XXX. This theoretical gain comes from accounting for layer-wise noise during training: since different layers exhibit distinct noise magnitudes (see \cref{fig:noise_heterogeneity}), they should be treated individually rather than uniformly, as prescribed in \cref{ass:noise}.

% For $p=1$, our rate recovers the complexity for finding a stationary point of $L$-smooth functions established by \citep{gong2025adaptive} for (adaptive) normalized SGD with momentum. For $p>1$, our bound exhibits improved noise dependence since XXX. This theoretical gain comes from accounting for layer-wise noise during training: since different layers exhibit distinct noise magnitudes (see \cref{fig:noise_heterogeneity}), they should be treated individually rather than uniformly, as prescribed in \cref{ass:noise}.

\subsection{Proof Outline} \label{sec:proof_outline}
Here we give an outline of the proof of \cref{thm:muon}, containing the main components of our analysis; see \cref{app:adaptive_muon,app:thm_proof} for full details. The proof sketch below is based on the setting of \cref{thm:muon}. To start, we introduce a few key definitions (with the convention $0/0\coloneqq1$):
% We first introduce the definition of $\kappa_{\sigma}$ and $t_0$, which will be frequently used throughout the analysis. Specifically, we define (with the convention $0/0\coloneqq1$)
\begin{equation} \label{eq:t0_main}
\begin{aligned}
    &\kappa_{\sigma}^{\ell} = 
    \begin{cases}
        \bar{\sigma}_{\ell} / \ubar{\sigma}_{\ell} & \ubar{\sigma}_{\ell}>0 \\
        1 & \bar{\sigma}_{\ell}=0
    \end{cases},
    \quad
    \kappa_{\sigma} = \max_{\ell}\kappa_{\sigma}^{\ell}, \\
    % \quad
    &\bar{\sigma}_{\max} = \max_{\ell}\bar{\sigma}_{\ell}, 
    \quad\text{and}\quad
    t_0 = \frac{\log 2}{\log(1/\beta_2)}.
\end{aligned}
\end{equation}

% $
% \kappa_{\sigma}^{\ell} = 
%     \begin{cases}
%         \bar{\sigma}_{\ell} / \ubar{\sigma}_{\ell} & \ubar{\sigma}_{\ell}>0 \\
%         1 & \bar{\sigma}_{\ell}=0
%     \end{cases},
%     \quad
%     \kappa_{\sigma} = \max_{\ell}\kappa_{\sigma}^{\ell},
%     \quad
%     \bar{\sigma}_{\max} = \max_{\ell}\bar{\sigma}_{\ell},$
%     and
% $
%     t_0 = \frac{\log 2}{\log(1/\beta_2)}.
% $

% The proof sketch below is based on the setting of \cref{thm:muon}, unless stated otherwise.
% \begin{lemma}[Equivalence of norms] \label{lem:norm_main}
% For any two matrix norms $\|\cdot\|_{a}$ and $\|\cdot\|_{b}$, there exists $0< C_1\leq C_2$ such that $C_1\|A\|_{a}\leq \|A\|_{b}\leq C_2\|A\|_{a}$ for all matrices $A\in\R^{m\times n}$.
% \end{lemma}
The following lemma provides high-probability two-sided bounds for the variance tracker $H_t^{\ell}$,
% $\sum_{k=1}^{t}\beta_2^{t-k}(1-\beta_2)\|G_k^{\ell}-\tilde{G}_k^{\ell}\|_{(\ell)*}^2$, 
which in turn allow us to derive tight upper and lower bounds for $\alpha_t^{\ell}$ (numerator of the noise ratio term). 
% The analysis uses Azuma-Hoeffding inequality (see \cref{lem:azuma}). 
The key to the analysis is an application of the Azuma-Hoeffding inequality (see \cref{lem:azuma}).

\begin{restatable}{lemma}{noise} \label{lem:noise}
With probability at least $1-\delta$, for all $\ell$ and $t_0\leq t\leq T$, 
$ 
\frac{\ubar{\sigma}_{\ell}^2(1-\beta_2^t)}{C_2}
    % \leq \sum_{k=1}^{t}\beta_2^{t-k}(1-\beta_2)\|G_k^{\ell}-\tilde{G}_k^{\ell}\|_{(\ell)*}^2
    \leq H_t^{\ell}
    \leq 4\bar{\sigma}_{\ell}^2(1-\beta_2^t).
$
% $\frac{\ubar{\sigma}_{\ell}^2(1-\beta_2^t)}{C_2}\leq H_t^{\ell}\leq 4\bar{\sigma}_{\ell}^2(1-\beta_2^t)$.
% \begin{align*}
%     \frac{\ubar{\sigma}_{\ell}^2(1-\beta_2^t)}{C_2}
%     % \leq \sum_{k=1}^{t}\beta_2^{t-k}(1-\beta_2)\|G_k^{\ell}-\tilde{G}_k^{\ell}\|_{(\ell)*}^2
%     \leq H_t^{\ell}
%     \leq 4\bar{\sigma}_{\ell}^2(1-\beta_2^t).
% \end{align*}
\end{restatable}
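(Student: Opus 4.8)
\textbf{Proof plan for \cref{lem:noise}.}
The plan is to analyze the recursion $H_t^{\ell} = \beta_2 H_{t-1}^{\ell} + (1-\beta_2) Z_t^{\ell}$, where $Z_t^{\ell} \coloneqq \|G_t^{\ell} - \tilde G_t^{\ell}\|_{(\ell)*}^2$ (Option II), as an exponential moving average. Unrolling the recursion gives $H_t^{\ell} = (1-\beta_2)\sum_{k=1}^{t}\beta_2^{t-k} Z_k^{\ell}$ (using $H_0^{\ell}=0$, or absorbing the initial term into the logarithmic burn-in $t_0$). The first step is to control the summand $Z_k^{\ell}$ in expectation and almost surely. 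Under \cref{ass:noise}, each noise term $\nabla_{\ell} F(X_k,\xi_k^\ell) - \nabla_{\ell} f(X_k)$ and its independent copy $\nabla_{\ell} F(X_k,\tilde\xi_k^\ell) - \nabla_{\ell} f(X_k)$ has dual norm in $[\ubar\sigma_\ell, \bar\sigma_\ell]$ almost surely; by the triangle inequality $Z_k^{\ell} \le (2\bar\sigma_\ell)^2 = 4\bar\sigma_\ell^2$ almost surely. For the lower bound I will need \cref{lem:norm} (the constants $C_1, C_2$): since the dual norm is only equivalent to, not equal to, the Euclidean norm, I expect \cref{lem:norm} provides $\E[Z_k^{\ell}\mid \gF_{k-1}] \ge \ubar\sigma_\ell^2 / C_2$ (or an analogous two-sided equivalence), obtained by expanding the expected squared dual norm of a difference of independent zero-mean vectors and comparing with the Euclidean case where it would be the sum of the two second moments.

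Given these per-step bounds, the second step is the concentration argument. Define the martingale difference sequence $D_k^{\ell} \coloneqq (1-\beta_2)\beta_2^{t-k}\bigl(Z_k^{\ell} - \E[Z_k^{\ell}\mid\gF_{k-1}]\bigr)$ for $k=1,\dots,t$, so that $H_t^{\ell} - (1-\beta_2)\sum_{k}\beta_2^{t-k}\E[Z_k^{\ell}\mid\gF_{k-1}] = \sum_k D_k^{\ell}$. Each $|D_k^{\ell}| \le (1-\beta_2)\beta_2^{t-k}\cdot 4\bar\sigma_\ell^2$ almost surely, so the sum of squared ranges is $\sum_k c_k^2 \le 16\bar\sigma_\ell^4 (1-\beta_2)^2 \sum_{k=1}^{t}\beta_2^{2(t-k)} \le 16\bar\sigma_\ell^4 (1-\beta_2)^2 / (1-\beta_2^2) \le 16\bar\sigma_\ell^4(1-\beta_2)$. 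Applying Azuma-Hoeffding (\cref{lem:azuma}) and a union bound over the $p$ layers and $T$ time steps, with probability at least $1-\delta$ the deviation is at most $O\bigl(\bar\sigma_\ell^2\sqrt{(1-\beta_2)\log(pT/\delta)}\bigr)$ for all $\ell$ and all $t_0\le t\le T$ simultaneously. The precise choice $1-\beta_2 \le \min_\ell \ubar\sigma_\ell^4 / \bigl(32(2C_2\bar\sigma_\ell^2 - \ubar\sigma_\ell^2)^2\log(4T/\delta)\bigr)$ from the theorem statement is exactly what makes this deviation smaller than the gap between the target bounds and the ``mean'' term, i.e., small enough that it can be absorbed.

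The third step is to combine the in-expectation bounds with the concentration bound. The ``mean'' term $(1-\beta_2)\sum_{k=1}^{t}\beta_2^{t-k}\E[Z_k^{\ell}\mid\gF_{k-1}]$ lies in $[\,\ubar\sigma_\ell^2(1-\beta_2^t)/C_2,\ \bar\sigma_\ell^2(1-\beta_2^t)\,]$ (note $\bar\sigma_\ell^2$ rather than $4\bar\sigma_\ell^2$ here, since the second moment of a difference of two independent copies each bounded by $\bar\sigma_\ell$ in a non-Euclidean norm is still at most $(2\bar\sigma_\ell)^2$ pointwise, but I will check whether the claimed upper constant $4$ comes from the pointwise bound $Z_k^\ell\le 4\bar\sigma_\ell^2$ being used directly rather than through the expectation — most cleanly, $H_t^\ell \le 4\bar\sigma_\ell^2(1-\beta_2^t)$ follows immediately by convexity/monotonicity from the almost-sure bound $Z_k^\ell \le 4\bar\sigma_\ell^2$, requiring no concentration at all). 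So the upper bound is the easy direction. For the lower bound, I write $H_t^{\ell} \ge \ubar\sigma_\ell^2(1-\beta_2^t)/C_2 - O\bigl(\bar\sigma_\ell^2\sqrt{(1-\beta_2)\log(4T/\delta)}\bigr)$ and verify that the chosen $\beta_2$ forces the deviation term to be at most, say, half of $\ubar\sigma_\ell^2(1-\beta_2^t)/C_2$ for $t \ge t_0$ — here $t_0 = \log 2/\log(1/\beta_2)$ guarantees $1-\beta_2^t \ge 1/2$, so the leading term does not degenerate. The algebra tying the stated threshold on $1-\beta_2$ to the factor $(2C_2\bar\sigma_\ell^2 - \ubar\sigma_\ell^2)^2$ is the one routine-but-delicate computation.

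The main obstacle I anticipate is not any single step but getting the constants to line up: specifically, ensuring that the Azuma deviation, the burn-in threshold $t_0$, and the prescribed upper bound on $1-\beta_2$ interlock so that the lower bound $\ubar\sigma_\ell^2(1-\beta_2^t)/C_2$ survives after subtracting the fluctuation. This requires being careful that \cref{lem:norm} is invoked with the right orientation (lower-bounding $\E[Z_k^\ell\mid\gF_{k-1}]$ by $\ubar\sigma_\ell^2/C_2$, not $\bar\sigma_\ell^2/C_2$), and that the sum-of-squared-increments bound for Azuma uses the geometric-series factor $(1-\beta_2)$ rather than $(1-\beta_2)^2$, since the latter would give a bound too weak to match the stated $1-\beta_2$ threshold. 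I would also double-check whether the initialization $H_0^\ell$ (the algorithm sets $B_0 = G_0$ but does not explicitly initialize $H_0$) contributes a $\beta_2^t H_0^\ell$ term that must be shown negligible for $t\ge t_0$ — if $H_0^\ell$ is itself of order $\bar\sigma_\ell^2$ this is immediate, otherwise one absorbs it into the burn-in.
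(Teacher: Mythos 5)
Your plan matches the paper's proof in all essentials: the upper bound follows from the almost-sure bound $\|G_k-\tilde G_k\|_*^2\le 4\bar\sigma_\ell^2$ with no concentration, and the lower bound comes from centering the exponentially weighted sum, applying Azuma--Hoeffding with the geometric sum of squared increments of order $1-\beta_2$, using norm equivalence to pick up the $1/C_2$ factor, and letting $t_0$ and the prescribed threshold on $1-\beta_2$ absorb the fluctuation into half the mean term. The only (cosmetic) difference is that the paper forms the martingale in the Frobenius norm, centering by $2\sigma_k^2=2\,\E_{k-1}\|G_k-\nabla f(X_k)\|_F^2$ so that the cross term vanishes exactly, and converts to the dual norm only at the end, whereas you center the dual-norm quantity by its conditional expectation directly; both are valid.
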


With \cref{lem:noise}, we can effectively lower bound the noise ratio term $\alpha_t^{\ell}/\alpha_t^{m}$, which is used to assign layerwise learning rates in line 9 of \cref{alg:muon}, with high probability. Our next lemma shows that $\alpha_t^{\ell}/\alpha_t^{m}$ is both upper and lower bounded throughout training under our assumptions. Consequently, the learning rate $\eta_t^{\ell}$ is bounded on both sides with high probability.

\begin{restatable}{lemma}{lrrange} \label{lem:lr_range}
With probability at least $1-\delta$, for all $\ell$ and $t\leq T$,
% \begin{align}
%     &\frac{\alpha}{\sqrt{\alpha^2+4\bar{\sigma}^2(1-\beta_2^t)}} 
%     \leq \alpha_t^{\ell} 
%     \leq \mathbb{I}(t<t_0) + \frac{\alpha}{\sqrt{\alpha^2+\ubar{\sigma}^2(1-\beta_2^t)/C_2}}\mathbb{I}(t\geq t_0), \label{eq:alpha_tl} \\
%     &\min\left\{\frac{\alpha}{\sqrt{\alpha^2+4\bar{\sigma}^2}}, \frac{\ubar{\sigma}}{2\sqrt{C_2}\bar{\sigma}}\right\}
%     \eqqcolon \alpha_{r}
%     \leq \frac{\alpha_t^{\ell}}{\alpha_t^{m}}
%     \leq 1, \label{eq:alpha_ratio}
% \end{align}
\begin{align}
    % &\frac{\alpha}{\sqrt{\alpha^2+4\bar{\sigma}_{\ell}^2(1-\beta_2^t)}} 
    % \leq \alpha_t^{\ell} 
    % \leq \mathbb{I}(t<t_0) + \frac{\alpha}{\sqrt{\alpha^2+\ubar{\sigma}_{\ell}^2(1-\beta_2^t)/C_2}}\mathbb{I}(t\geq t_0), \label{eq:alpha_tl} \\
    % \textstyle
    \min\left\{\frac{\alpha}{\sqrt{\alpha^2+4\bar{\sigma}_{\max}^2}}, \frac{1}{2\sqrt{C_2}\kappa_{\sigma}}\right\}
    \eqqcolon \alpha_{r}
    \leq \frac{\alpha_t^{\ell}}{\alpha_t^{m}}
    \leq 1, \label{eq:alpha_ratio}
\end{align}
and therefore, with probability at least $1-\delta$, we have $\sqrt{\alpha_{r}}\eta_{\min}\leq \eta_t^{\ell}\leq \eta_{\max}$ for all $\ell$ and $t\leq T$.
\end{restatable}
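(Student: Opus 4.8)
\textbf{Proof plan for \cref{lem:lr_range}.}

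The plan is to work on the high-probability event of \cref{lem:noise} (which holds with probability at least $1-\delta$); everything below is conditioned on that event. The upper bound $\alpha_t^{\ell}/\alpha_t^{m}\leq 1$ is immediate from the definition $\alpha_t^{m}=\max_{\ell'\in\gG_{\ell}}\alpha_t^{\ell'}$, since $\alpha_t^{\ell}$ is one of the terms in that maximum. The work is in the lower bound. Since $\alpha_t^{\ell}=\alpha/\sqrt{\alpha^2+H_t^{\ell}}$ is decreasing in $H_t^{\ell}$, and $\alpha_t^{m}$ corresponds to the smallest $H_t^{\ell'}$ in the group, I will first establish a pointwise lower bound $\alpha_t^{\ell}\geq \alpha/\sqrt{\alpha^2+4\bar\sigma_\ell^2}$ using the upper bound on $H_t^{\ell}$ from \cref{lem:noise}, and an upper bound $\alpha_t^{m}\leq \alpha/\sqrt{\alpha^2+\ubar\sigma_{\ell'}^2(1-\beta_2^t)/C_2}$ for the maximizing layer $\ell'$ using the lower bound on $H_t^{\ell'}$. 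Taking the ratio gives
\begin{align*}
    \frac{\alpha_t^{\ell}}{\alpha_t^{m}}\geq \sqrt{\frac{\alpha^2+\ubar\sigma_{\ell'}^2(1-\beta_2^t)/C_2}{\alpha^2+4\bar\sigma_\ell^2}}.
\end{align*}

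Next I will split into two regimes depending on the size of $\alpha$ relative to the noise. If $\alpha$ is the dominant term (roughly $\alpha\gtrsim \bar\sigma_{\max}$), the numerator and denominator are both $\Theta(\alpha^2)$ up to the factor $4\bar\sigma_{\max}^2$, so the ratio is bounded below by $\alpha/\sqrt{\alpha^2+4\bar\sigma_{\max}^2}$, which is the first term in the $\min$ defining $\alpha_r$. If instead the noise dominates, I will use $t\geq t_0=\log 2/\log(1/\beta_2)$, which by the definition of $t_0$ guarantees $\beta_2^{t}\leq \beta_2^{t_0}=1/2$, hence $1-\beta_2^t\geq 1/2$; combined with $4\bar\sigma_\ell^2 \leq 4\kappa_\sigma^2\ubar\sigma_\ell^2$ (and handling the $\ubar\sigma_\ell=0$ case separately, where $\bar\sigma_\ell=0$ and the ratio is trivially $1$), the ratio is at least $\sqrt{\ubar\sigma_{\ell'}^2/(2C_2)}/\sqrt{4\kappa_\sigma^2\ubar\sigma_{\ell'}^2}\gtrsim 1/(2\sqrt{C_2}\kappa_\sigma)$, the second term in the $\min$. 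A minor wrinkle is that \cref{lem:noise} only controls $H_t^{\ell}$ for $t\geq t_0$, so for $t<t_0$ I need a separate (and easier) argument: in that regime $H_t^{\ell}$ is still a convex combination of the initialization and a bounded number of noise terms each at most $4\bar\sigma_\ell^2$ (by \cref{ass:noise}(ii) and the triangle inequality), so the same upper bound $H_t^{\ell}\leq 4\bar\sigma_\ell^2$ holds deterministically, and a crude lower bound suffices to close the ratio; alternatively one absorbs the finitely many early steps into constants.

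Finally, the learning-rate bound follows by plugging \eqref{eq:alpha_ratio} into line 9 of \cref{alg:muon}, $\eta_t^{\ell}=\eta_t\sqrt{\alpha_t^{\ell}/\alpha_t^{m}}$: the ratio lies in $[\alpha_r,1]$, so $\eta_t^{\ell}\in[\sqrt{\alpha_r}\,\eta_t,\eta_t]\subseteq[\sqrt{\alpha_r}\,\eta_{\min},\eta_{\max}]$ since $\eta_t\in[\eta_{\min},\eta_{\max}]$; writing $\alpha_r$ for the stated lower bound and noting $\alpha_r\leq 1$ gives $\alpha_r\leq\sqrt{\alpha_r}$, which yields the claimed $\alpha_r\eta_{\min}\leq\eta_t^{\ell}\leq\eta_{\max}$ (with the understanding that the paper's $\alpha_r$ in the learning-rate statement may be the square root of the ratio bound — I would make this consistent in the write-up, but either way the two-sided bound holds). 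The main obstacle is the careful case analysis in the noise-dominated regime: correctly using the definition of $t_0$ to extract $1-\beta_2^t\geq 1/2$, and tracking the constant $C_2$ (which may be dimension-dependent per \cref{rem:C}) and the condition-number-like quantity $\kappa_\sigma$ through the bound without losing tightness. The lower-bound hypothesis $\ubar\sigma_\ell>0$ in \cref{ass:noise}(ii) is exactly what makes $\kappa_\sigma$ finite and the second branch of the $\min$ nonvacuous, so I would flag where it is used.
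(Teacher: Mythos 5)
Your proposal follows the paper's own route: condition on the event of \cref{lem:noise}, convert the two-sided bounds on $H_t^{\ell}$ into a lower bound on $\alpha_t^{\ell}$ (via the upper bound on $H_t^{\ell}$, which holds deterministically for all $t$ by \cref{ass:noise} and Young's inequality) and an upper bound on $\alpha_t^{m}$ (via the high-probability lower bound on $H_t^{\ell}$, available only for $t\geq t_0$), take the ratio, and read off the two branches of the $\min$; your treatment of $t<t_0$ and the passage from \cref{eq:alpha_ratio} to the learning-rate bound (including the observation that $\sqrt{\alpha_r}\geq\alpha_r$ since $\alpha_r\leq1$) match the paper. Two remarks. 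First, your case split on whether $\alpha$ dominates the noise is unnecessary: the single cross-multiplication inequality $\sqrt{(\alpha^2+\ubar{\sigma}^2 s/C_2)/(\alpha^2+4\bar{\sigma}^2 s)}\geq \ubar{\sigma}/(2\sqrt{C_2}\bar{\sigma})$, valid for every $\alpha$ and every $s=1-\beta_2^t\in[0,1]$ because $4C_2\bar{\sigma}^2\geq\ubar{\sigma}^2$, yields the second branch uniformly and avoids the extra $\sqrt{2}$ you lose from invoking $1-\beta_2^t\geq1/2$. Second, in your noise-dominated branch the numerator carries $\ubar{\sigma}_{\ell'}$ of the maximizing layer while the denominator carries $\bar{\sigma}_{\ell}$ of the layer under consideration; bounding $4\bar{\sigma}_{\ell}^2\leq 4\kappa_{\sigma}^2\ubar{\sigma}_{\ell}^2$ and then cancelling against $\ubar{\sigma}_{\ell'}^2$ silently identifies $\ubar{\sigma}_{\ell}$ with $\ubar{\sigma}_{\ell'}$, and the cross-layer ratio $\ubar{\sigma}_{\ell'}/\bar{\sigma}_{\ell}$ is not controlled by $1/\kappa_{\sigma}$. (The paper's proof carries the same imprecision, concealed by dropping layer subscripts.) The stated conclusion survives regardless, because the first branch $\alpha/\sqrt{\alpha^2+4\bar{\sigma}_{\max}^2}$ is already an unconditional lower bound on the ratio (use $\alpha_t^{m}\leq1$ together with the deterministic upper bound on $H_t^{\ell}$), and $\min\{a,b\}\leq a$; I would make that fallback explicit rather than rest the noise-dominated regime on the second-branch derivation.
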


% \begin{lemma} \label{lem:lr_range}
% With probability at least $1-\delta$, 
% $\min\{\frac{\alpha}{\sqrt{\alpha^2+4\bar{\sigma}_{\max}^2}}, \frac{1}{2\sqrt{C_2}\kappa_{\sigma}}\}
%     \eqqcolon \alpha_{r}
%     \leq \alpha_t^{\ell}/\alpha_t^{m}
%     \leq 1$
% for all $\ell$ and $t\leq T$,
% and therefore, we have $\alpha_{r}\eta_{\min}\leq \eta_t^{\ell}\leq \eta_{\max}$.
% \end{lemma}

We now provide a high-level proof sketch of our main result. See \cref{app:thm_proof} for full proof details.

\begin{proof}[\textbf{Proof sketch of \cref{thm:muon}}]
% The proof follows a similar procedure as \citep[Theorem 1]{cutkosky2020momentum}.
The main novelty in the proof is to leverage the magnitude of $H_t^\ell$ (\cref{lem:noise}) as a surrogate for the true stochastic gradient variance, ensuring that the noise-adaptive layerwise learning rate $\alpha_t^\ell$ has roughly the same magnitude as if the stochastic gradient noise were known (\Cref{lem:lr_range}).
The rest of the proof proceeds similarly to that of \citep[Theorem 1]{cutkosky2020momentum} and \citep{li2025note, shen2025convergence, riabinin2025gluon}. Define $\hat{\epsilon}_t^{\ell} = B_t^{\ell}-\nabla_{\ell} f(X_t)$ and $\epsilon_t^{\ell} = G_t^{\ell}-\nabla_{\ell} f(X_t)$. We begin by applying \cref{lem:lr_range} to the descent lemma (see \cref{lem:descent}), rearranging to obtain:

% \begin{small}
\begin{align*}
    &\sum_{t=1}^{T}\sum_{\ell=1}^{p}\eta_t^{\ell}\|\nabla_{\ell} f(X_t)\|_{(\ell)*}
    \leq \frac{\Delta_1}{\sqrt{\alpha_r}\eta_{\min}} \\
    &\quad+ \sum_{\ell=1}^{p}\left(\frac{2\eta_{\max}}{\sqrt{\alpha_r}\eta_{\min}}\sum_{t=1}^{T}\|\hat{\epsilon}_t^{\ell}\| + \frac{\eta_{\max}^2}{2\sqrt{\alpha_r}\eta_{\min}}L_{\ell}T\right).
    % \Delta_1 + \sum_{t=1}^{T}\sum_{\ell=1}^{p}\left(2\eta_t^{\ell}\|\hat{\epsilon}_t^{\ell}\|_{(\ell)*} + \frac{L_{\ell}}{2}(\eta_t^{\ell})^2\right).
\end{align*}
% \end{small}%
Using $L$-smoothness (\cref{ass:objective}) and standard calculations, we have
% \begin{small}
\begin{align} \notag
    \|\hat{\epsilon}_{t+1}^{\ell}\|_{(\ell)*}
    &\leq \beta_1^{t}\|\hat{\epsilon}_1^{\ell}\|_{(\ell)*} + (1-\beta_1)\left\|\sum_{\tau=0}^{t-1}\beta_1^{\tau}\epsilon_{t-\tau}
    ^{\ell}\right\|_{(\ell)*} \\ \label{eq:hat_eps}
    &\quad+ \eta_{\max}L_{\ell}\sum_{\tau=0}^{t-1}\beta_1^{\tau}.
\end{align}
% \end{small}%
% The remaining challenge is to give high probability bound for $\|\sum_{\tau=0}^{t-1}\beta_1^{\tau}\epsilon_{t-\tau}^{\ell}\|_{(\ell)*}$. 
% equivalence of norms
Next, we apply the concentration inequality introduced in \citep[Lemma 2.4]{liu2023near} to bound $\|\sum_{\tau=0}^{t-1}\beta_1^{\tau}\epsilon_{t-\tau}^{\ell}\|_{F}$, and then use the equivalence of norms (see \cref{lem:norm}) to derive that, with probability at least $1-\delta$,
\begin{align} \notag
    \left\|\sum_{\tau=0}^{t-1}\beta_1^{\tau}\epsilon_{t-\tau}
    ^{\ell}\right\|_{(\ell)*}
    &\leq \frac{1}{C_1}\left\|\sum_{\tau=0}^{t-1}\beta_1^{\tau}\epsilon_{t-\tau}
    ^{\ell}\right\|_{F} \\ \label{eq:sum_eps}
    % \leq \frac{4}{C_1}\sqrt{\log\frac{2T}{\delta}\sum_{\tau=0}^{t-1}(C_2\beta_1^{\tau}\bar{\sigma})^2}
    &\leq \frac{4C_2\bar{\sigma}}{C_1}\sqrt{\frac{\log(2T/\delta)}{1-\beta_1}}.
\end{align}
Substituting \cref{eq:sum_eps} back into \cref{eq:hat_eps} gives the bound for $\|\hat{\epsilon}_{t}^{\ell}\|_{(\ell)*}$. With suitable parameter choices as specified in \cref{thm:muon}, this concludes the proof.
\end{proof}

\section{Experiments}
% \vspace*{-0.05in}

In this section, we present the empirical results in comparison with the state-of-the-art optimizers by pretraining two mainstream transformer architectures GPT \citep{radford2019language} and LLaMA \citep{touvron2023llama} series. The experiment of image classification is deferred to \cref{app:image_class}. We include the ablation studies about learning rate choice and batch size in \cref{app:robustness}, the estimation method of gradient noise in \cref{app:ablation_option}. All experiments were run on $4\times$ NVIDIA H200 graphic cards. 

\subsection{Experimental Settings}
% \textcolor{red}{mention Newton-Schulz iteration somewhere, since the alg only contains LMO}
\paragraph{Baselines} We compare our LANTON with AdamW \citep{loshchilov2017decoupled}, Muon \citep{jordan2024muon}, MARS (short for MARS-AdamW) \citep{yuan2024mars}, SCION \citep{pethick2025training}, D-Muon \citep{liu2025muon}, the layer-wise learning rate algorithm LAMB \citep{you2019large}, and block-wise learning rate algorithm BW-AdamW \citep{wang2025sharpness}. SCION and D-Muon apply the Muon optimizer to matrix parameters in hidden layers (e.g., query, key, value, mlp), and all algorithms use Newton-Schulz iteration \citep{bernstein2024old} to approximately orthogonalize the update matrix, i.e., $UV^{\top}$ in Table \ref{tbl:lmo}.
% Some LMO-based algorithms (including SCION, and D-Muon) apply the Muon optimizer to matrix parameters in hidden layers (e.g., query, key, value, mlp), and all these algorithms use Newton-Schulz iteration \citep{bernstein2024old} to approximately orthogonalize the update matrix, i.e., $UV^{\top}$ in Table \ref{tbl:lmo}.

% \textcolor{red}{ML: MUON is not LMO based algorithm? how can you claim MARS-AdamW is the strongest variant?}

% \vspace*{-0.10in}

\paragraph{Models}
We evaluate on both GPT and LLaMA-style decoders. For GPT we use the HuggingFace GPT2 family: GPT2-small (124M parameters) and GPT2-medium (355M parameters). For LLaMA we configure two sizes: LLaMA-0.5B, LLaMA-1.1B and LLaMA-2B. Unless noted, all models are decoder-only with rotary positional embeddings and RMSNorm/LayerNorm per architecture defaults. Refer to Table \ref{tbl:model_config} for detailed model configuration.

% \vspace*{-0.10in}

% \paragraph{Datasets} We pretrain GPT2 and LLaMA models on three datasets. OpenWebText-100k is used for GPT-small/medium models, and it is a subset of Openwebtext dataset \citep{Gokaslan2019OpenWeb}. As there is no validation set in OpenWebText-100k,  we split $90\%/10\%$ into training/validation set and train models with teacher forcing. MiniPile \citep{kaddour2023minipile} is used for LLaMA-0.5B, where minipile is a subset of the deduplicated Pile corpus \citep{gao2020pile}. C4 (Colossal Clean Crawled Corpus) \citep{dodge2021documenting} is a large-scale English text corpus constructed by aggressively cleaning Common Crawl webpages, and we use it to pretrain LLaMA-1.1B following the standard text-to-token pipeline. All corpora are tokenized with the model’s native tokenizer. 

\paragraph{Datasets} We pretrain GPT-2 and LLaMA models on three datasets. For GPT-small and GPT-medium, we use OpenWebText-100k, a subset of the OpenWebText corpus \citep{Gokaslan2019OpenWeb}. Since OpenWebText-100k does not provide a validation split, we partition the data into $90\%/10\%$ training and validation sets and train the models using teacher forcing. For LLaMA-0.5B, we adopt MiniPile \citep{kaddour2023minipile}, a curated subset of the deduplicated Pile corpus \citep{gao2020pile}. We pretrain LLaMA-1.1B on the C4 (Colossal Clean Crawled Corpus) dataset \citep{dodge2021documenting}, following the standard text-to-token preprocessing pipeline. All datasets are tokenized using the native tokenizer of each model.

% \begin{table}[t]\label{tbl:model_config}
% \centering
% \caption{Model Configurations ($d_{\text{model}}$ denotes the hidden dimension, $d_{\text{FF}}$ denotes the feed-forward dimension, and $n_{\text{head}}$ denotes the number of attention head in transformer).}
% \label{tab:model-sizes}
% \scalebox{0.8}{
% \setlength{\tabcolsep}{15pt}
% \renewcommand{\arraystretch}{1.15}
% \begin{tabular}{l|cccccc}
% \toprule
% \textbf{Model} & \textbf{Size} & $d_{\text{model}}$ & $d_{\text{FF}}$ & $n_{\text{head}}$ & \textbf{depth} \\
% \hline
% GPT-2 (small)     & 124M  & 768  & 3072 & 12 & 12 \\
% GPT-2 (medium)     & 355M  & 1024  & 4096 & 16 & 24 \\
% LLaMA (0.5B)      & 522M  & 1280 & 5120 & 20 & 15 \\
% LLaMA (1.1B)      & 1175M & 2048 & 5632 & 32 & 22 \\
% \bottomrule
% \end{tabular}
% }% end scalebox
% \end{table}

\begin{figure}[t]
  \centering

  % ---------- (a) left group: two plots ----------
  \begin{subfigure}{0.48\linewidth}
    \centering
    \includegraphics[width=0.48\linewidth]{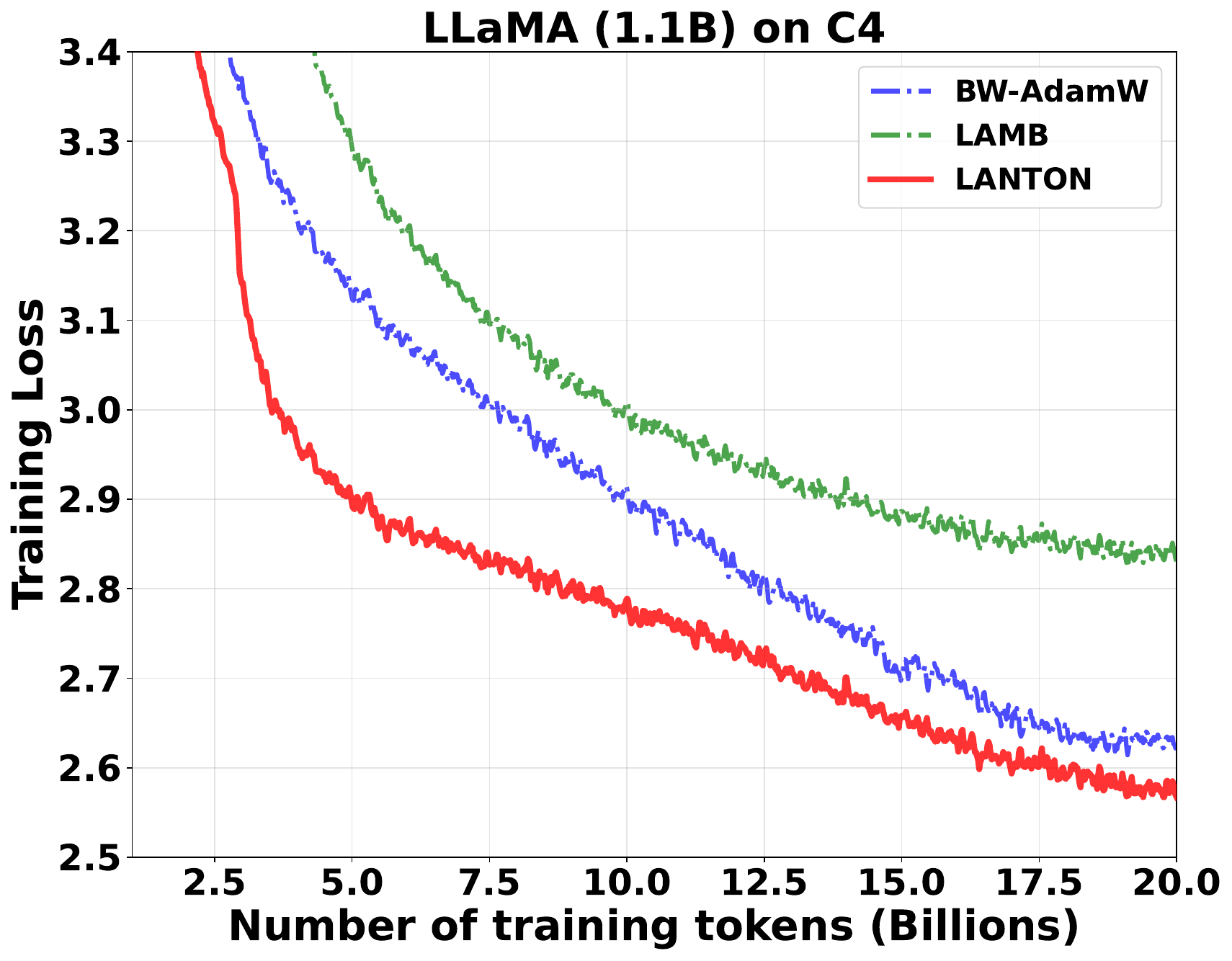}\hfill
    \includegraphics[width=0.48\linewidth]{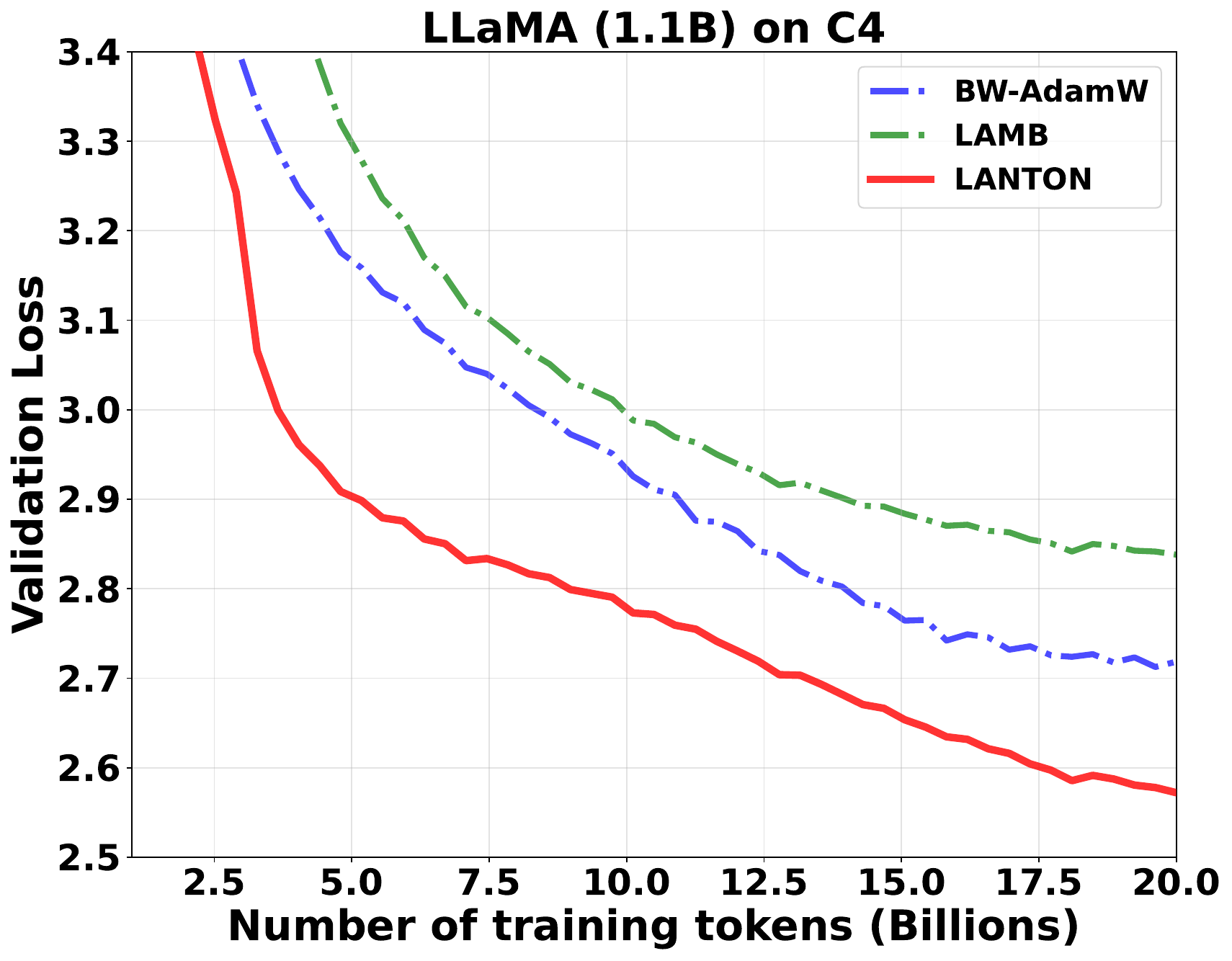}
    \caption{Comparison with layer-/block-wise methods.}
  \end{subfigure}
  \hfill
  % ---------- (b) right group: two plots ----------
  \begin{subfigure}{0.48\linewidth}
    \centering
    \includegraphics[width=0.48\linewidth]{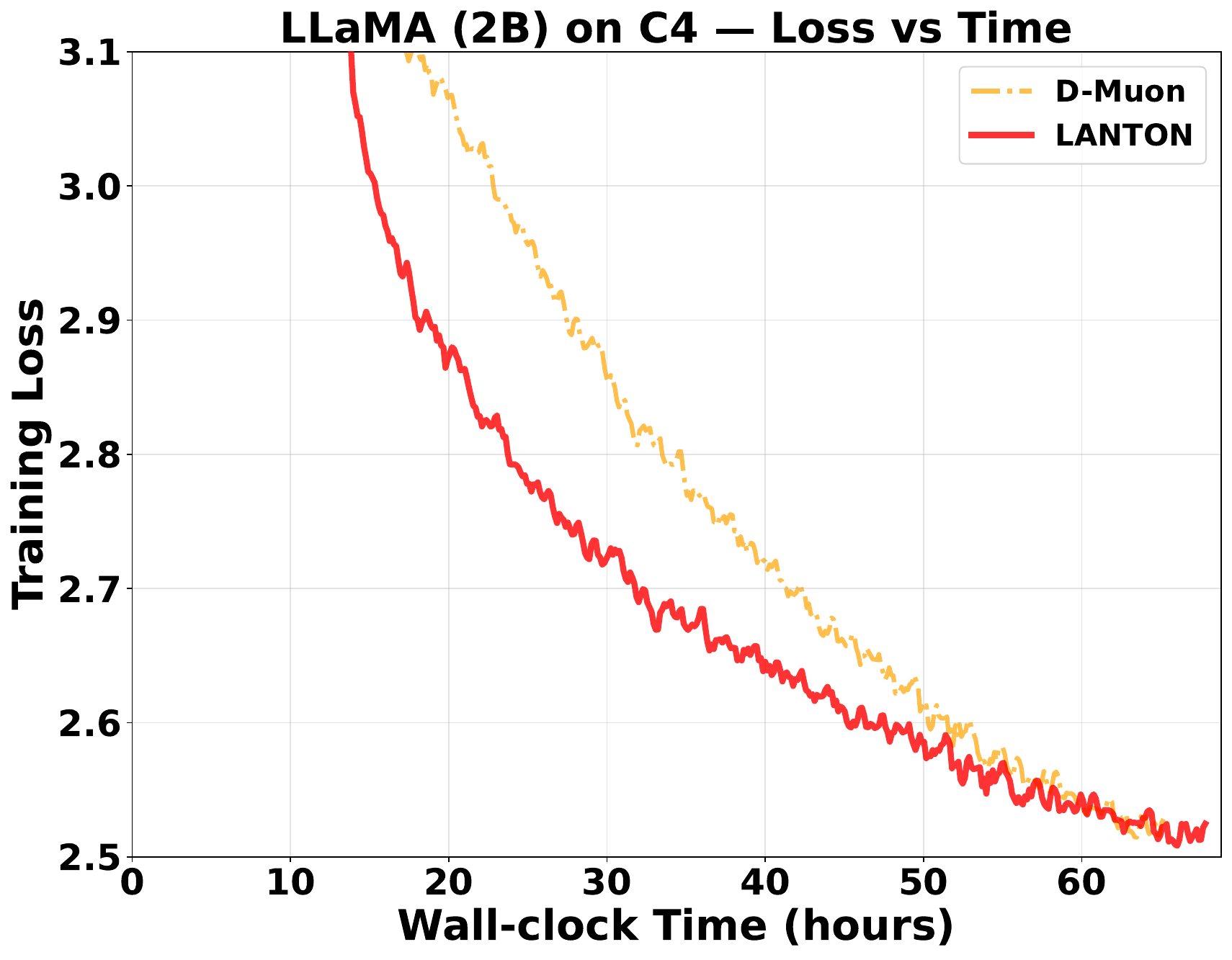}\hfill
    \includegraphics[width=0.48\linewidth]{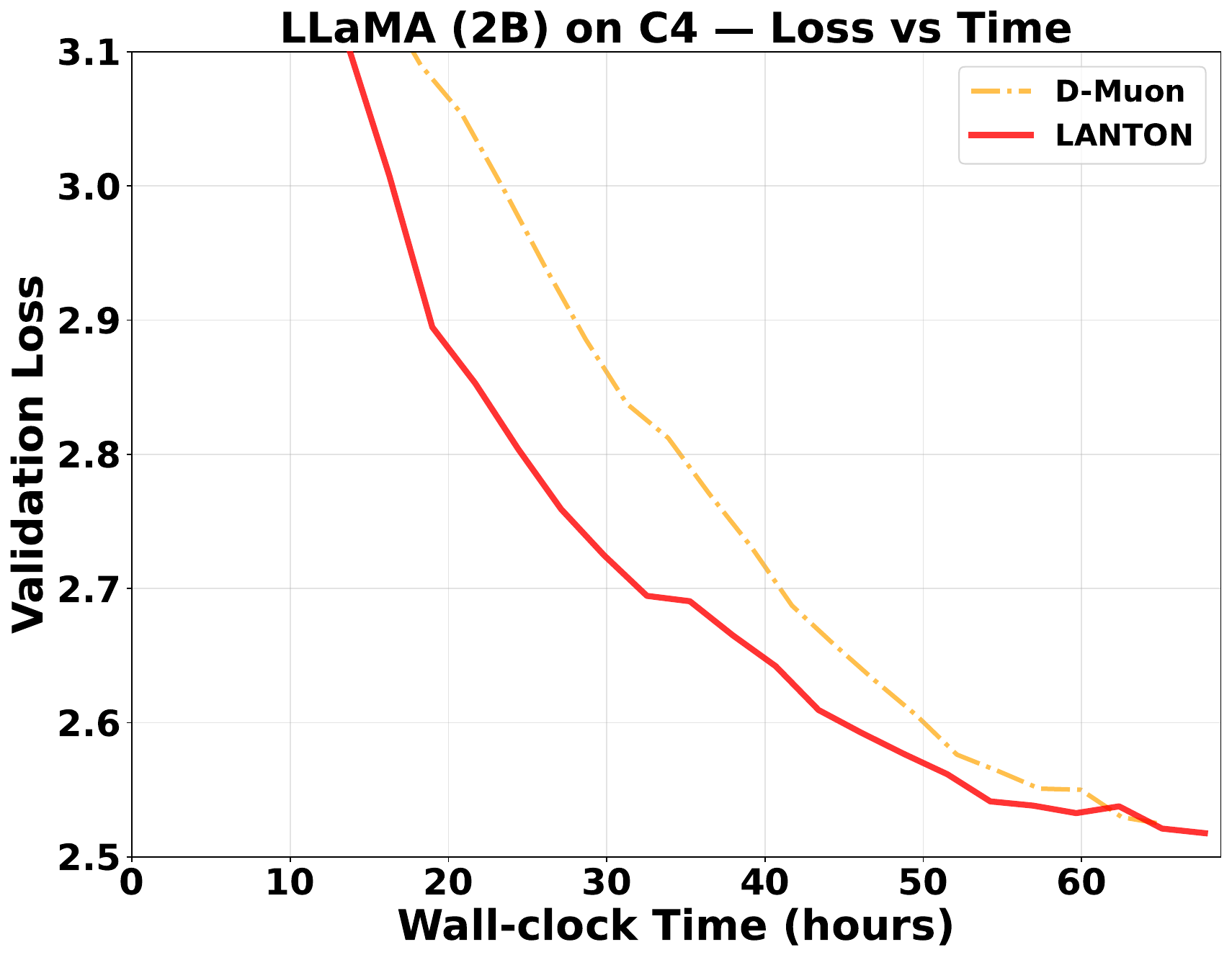}
    \caption{Comparison of running time.}
    \label{fig:running_time_2b}
  \end{subfigure}

  \caption{Training/validation loss on C4 datasets.
  (a) Comparison with algorithms using layer-wise/block-wise learning rates.
  (b) \textsc{LANTON} shows superior runtime performance compared to D-Muon.}
  \label{fig:results_speedup}
      % \vspace{-0.1in}
\end{figure}

\begin{figure}
    \centering
    \includegraphics[width=0.24\linewidth]{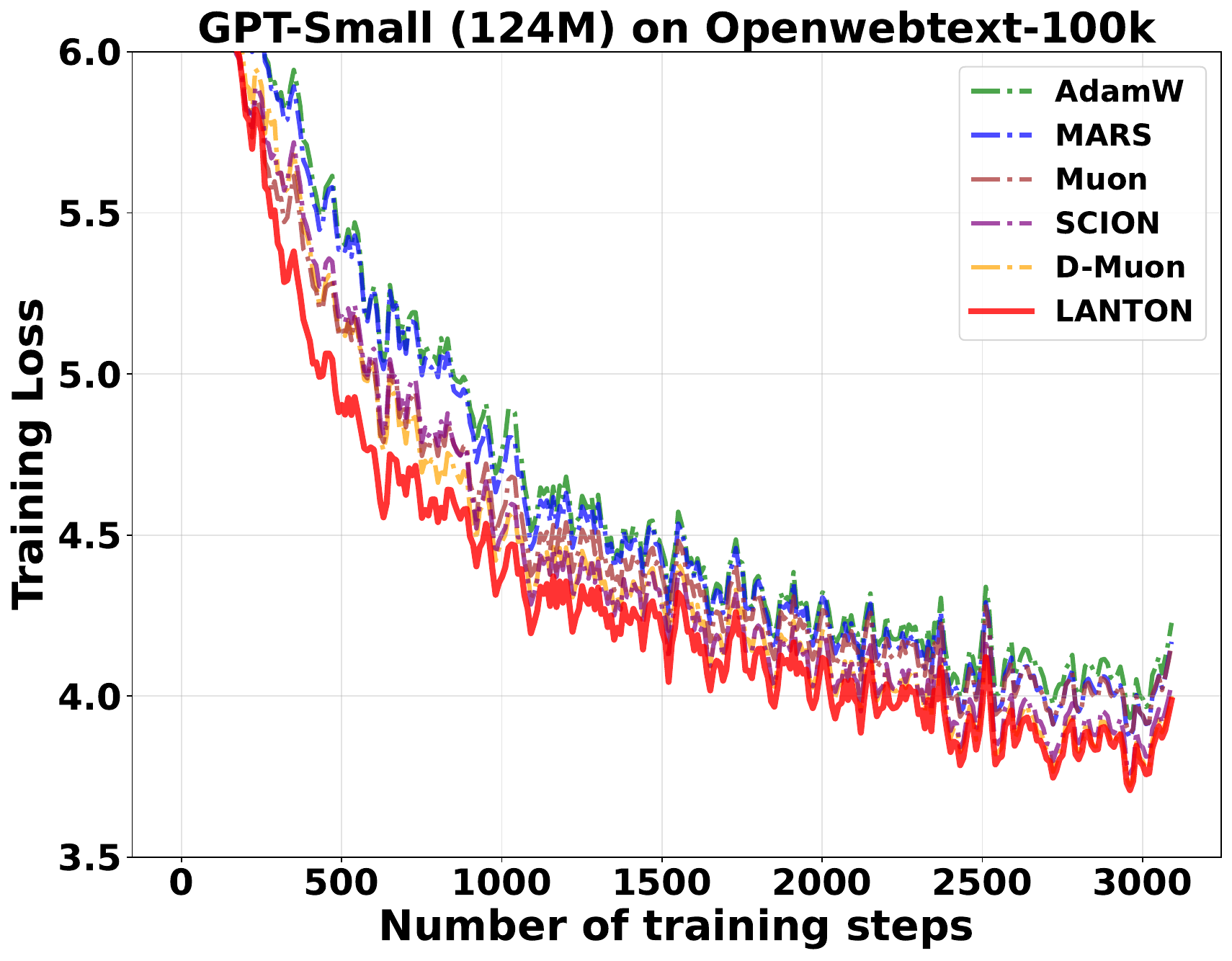} \
    \includegraphics[width=0.24\linewidth]{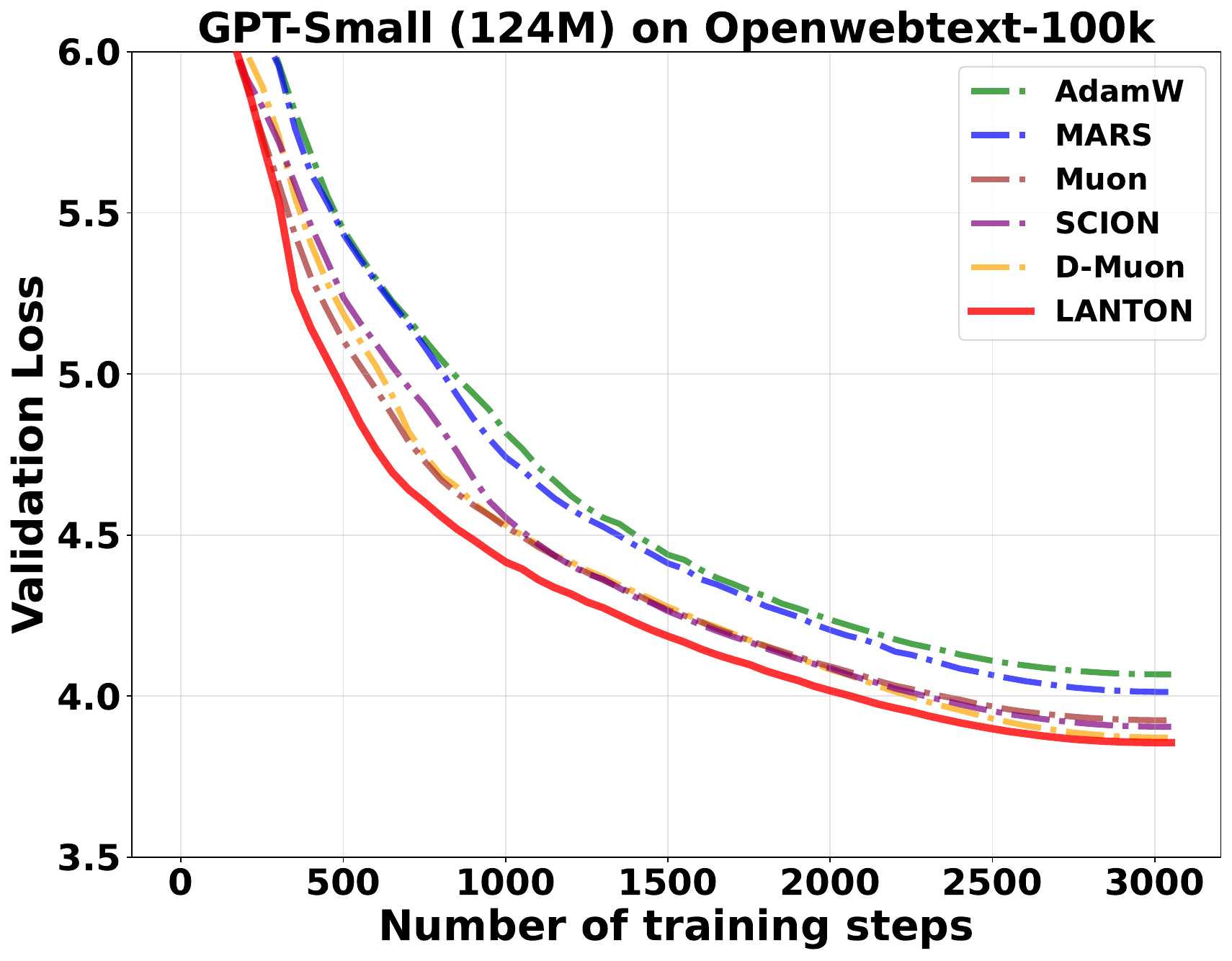} 
    \includegraphics[width=0.24\linewidth]{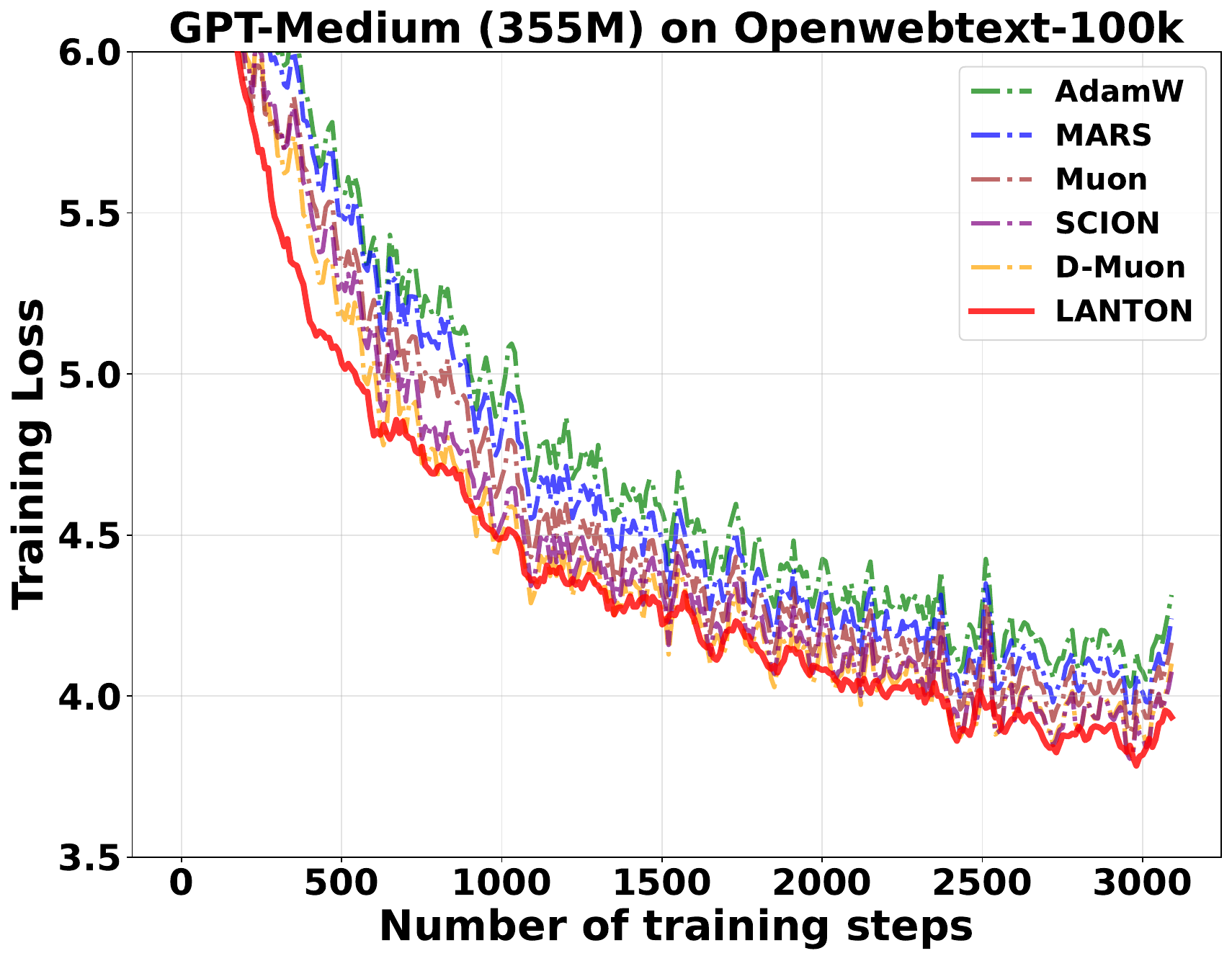} \
    \includegraphics[width=0.24\linewidth]{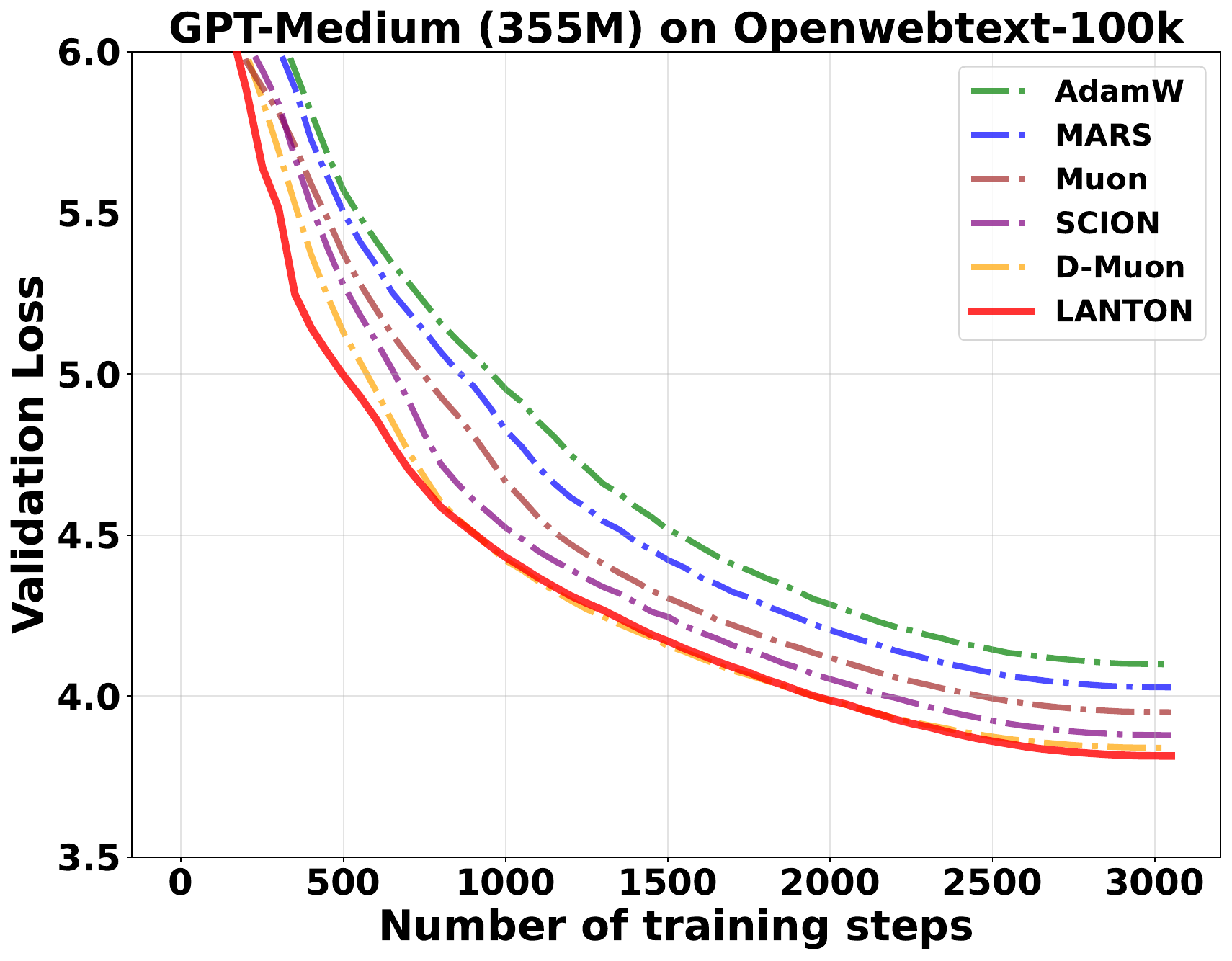}
    % \caption{Training/validation loss on Openwebtext-100k datasets. LANTON consistently shows faster training and convergence.}
    % \vspace{-0.1in}
    \caption{Training/validation loss on Openwebtext-100k datasets.}
    \label{fig:results_gpt}
    % \vspace{-0.1in}
\end{figure}

\begin{figure}
    \centering
    \includegraphics[width=0.245\linewidth]{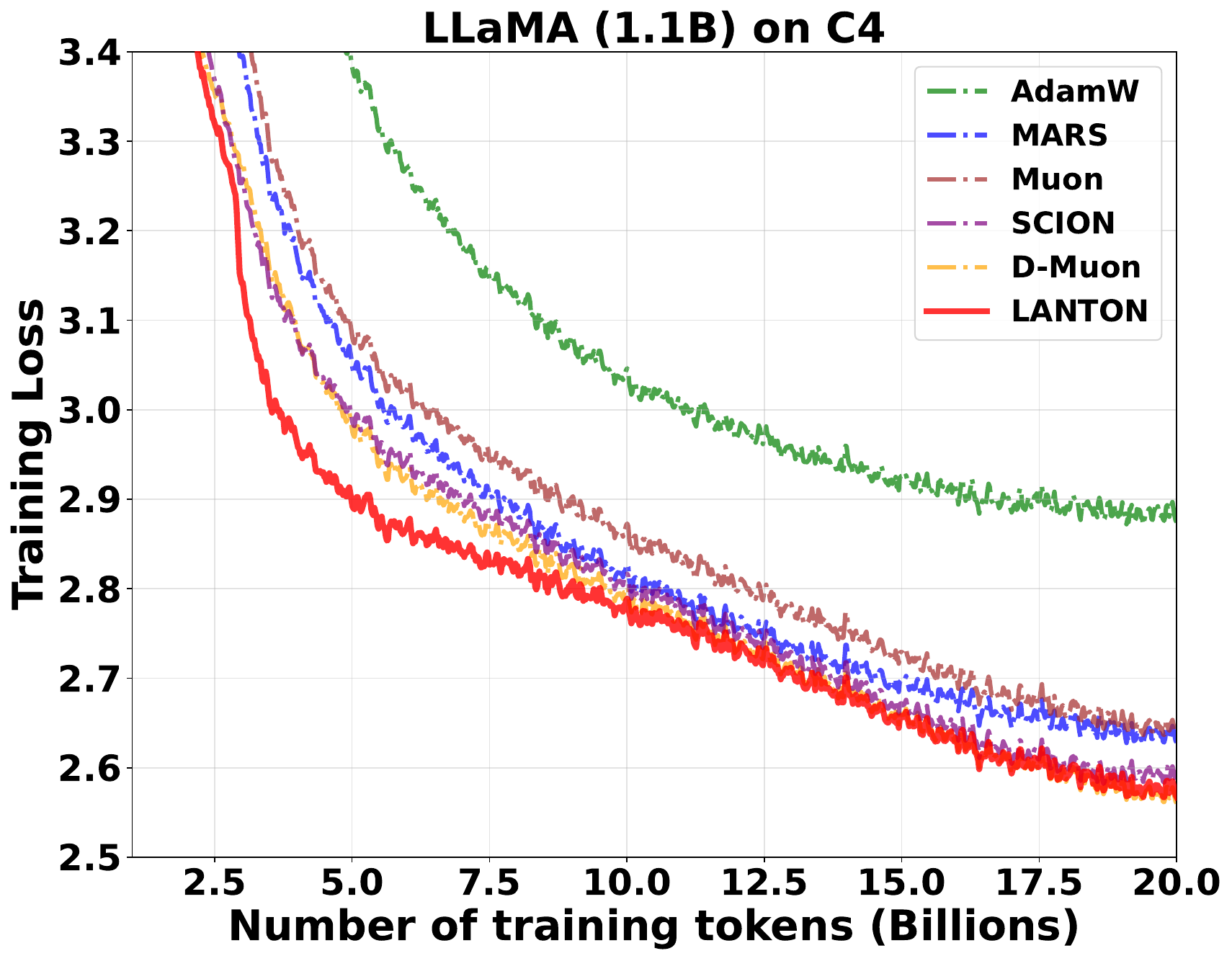}
    \includegraphics[width=0.245\linewidth]{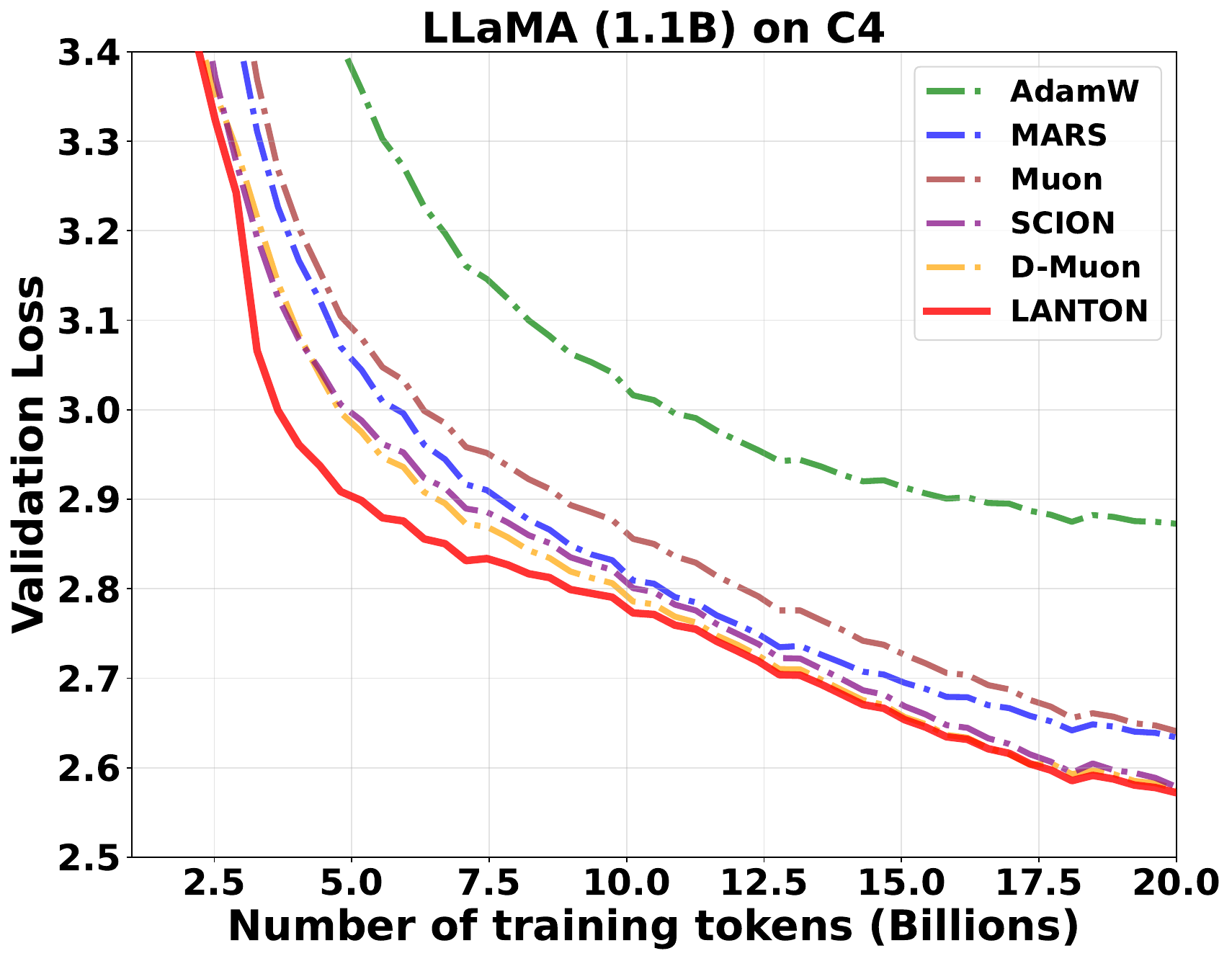}
    \includegraphics[width=0.245\linewidth]{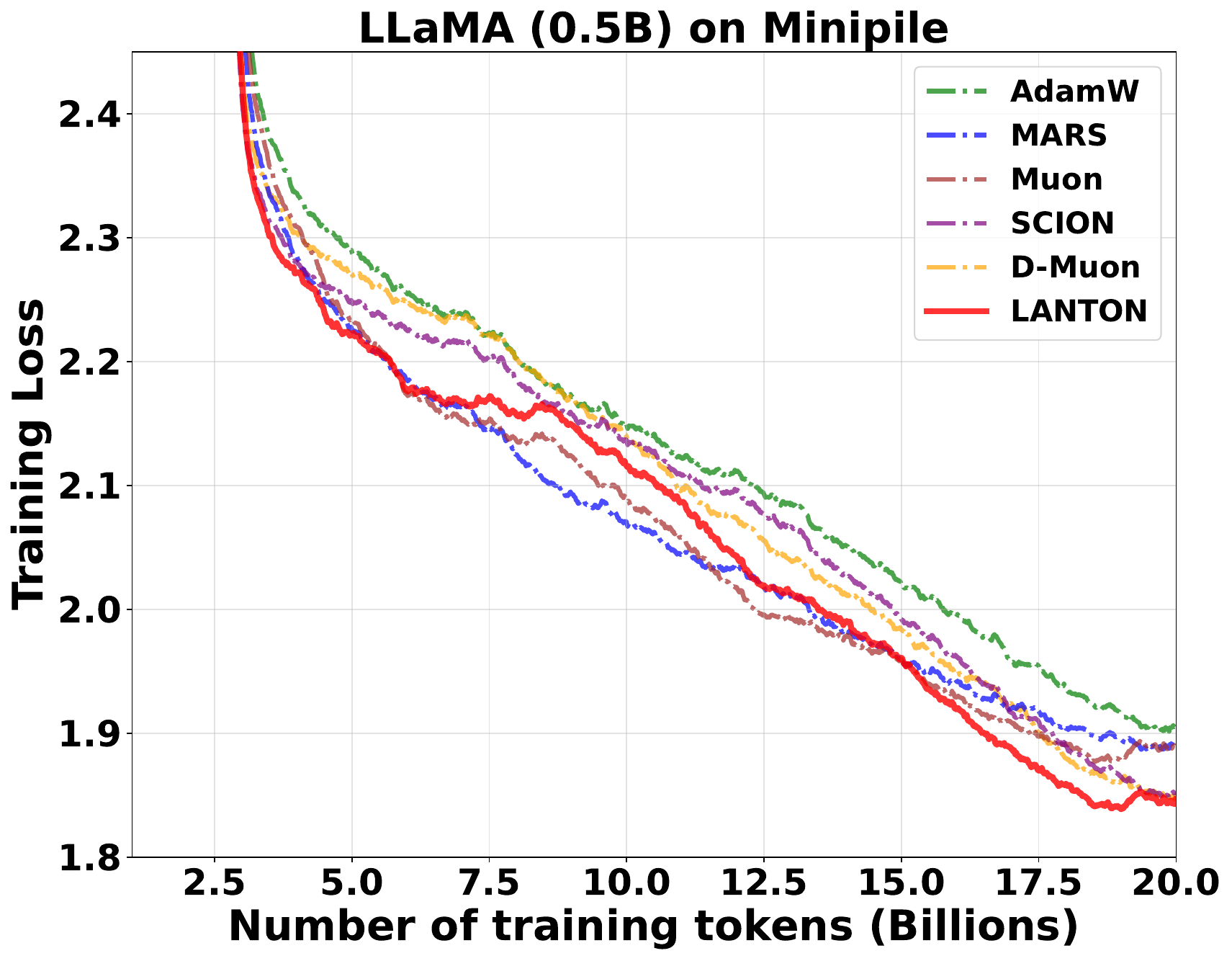}
    \includegraphics[width=0.245\linewidth]{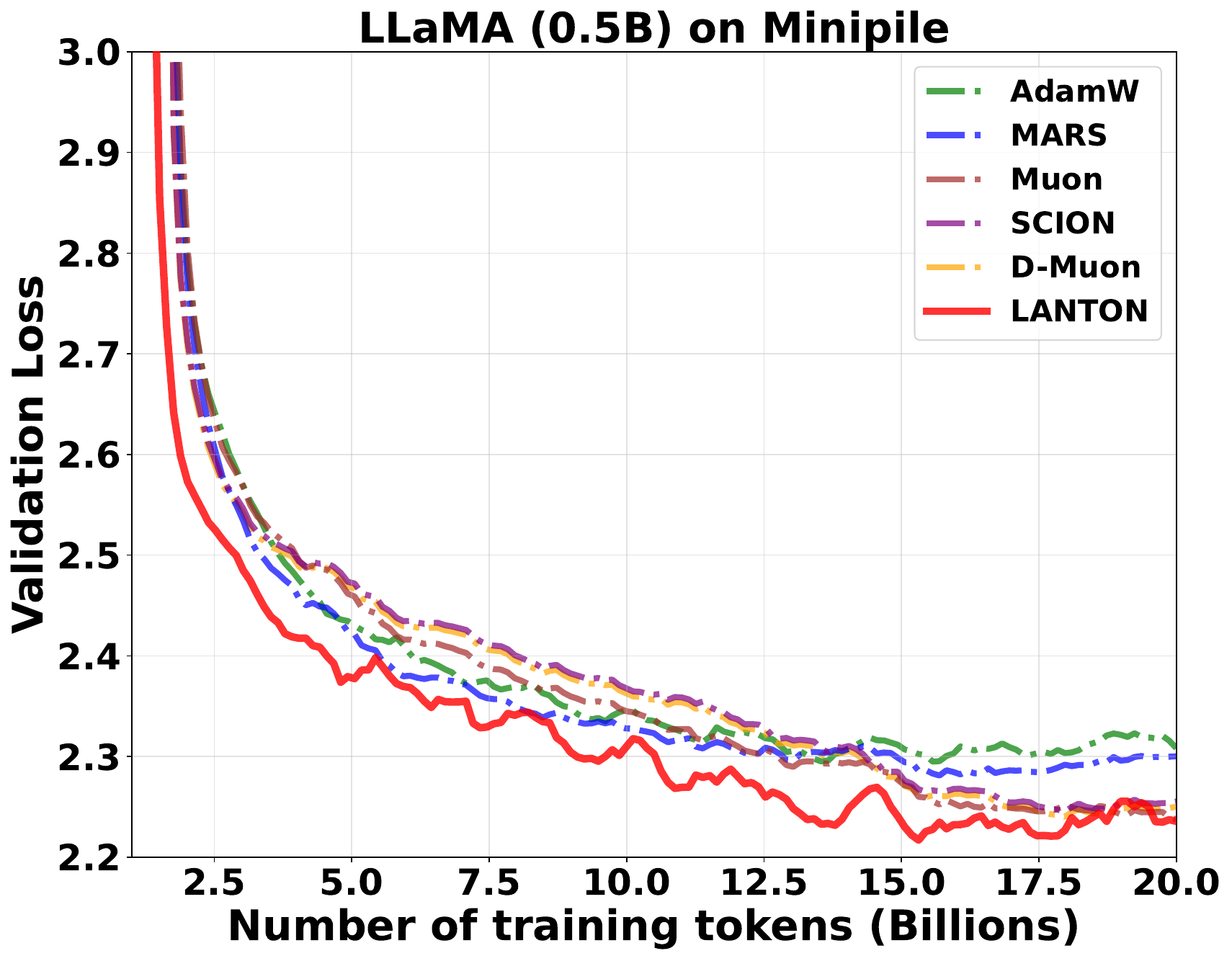}
    % \caption{Training/validation loss on C4 datasets and Minipile. LANTON consistently shows faster training and convergence.}
    \caption{Training/validation loss on C4 and Minipile datasets.}
    \label{fig:results_llama}
    % \vspace{-0.15in}
\end{figure}

\subsection{Training Setup and Results}
\subsubsection{Implementation of LANTON}

We implement LANTON on top of the D-Muon~\citep{liu2025muon}, which carefully adjusts the update magnitudes between hidden layers and non-hidden layers (embedding and LM head layers). Let $\eta_t$ denote the base learning rate at iteration $t$, which is compatible with annealing techniques (e.g., cosine decay).
For layer $\ell$, D-Muon updates the non-hidden layers using AdamW with learning rate $\eta_t$, and the hidden layers parameters $W_\ell\in\mathbb{R}^{d_{\text{out}}^{\ell}\times d_{\text{in}}^{\ell}}$ (i.e., QK, VO, MLP) with a rescaled learning rate $0.2\eta_t \sqrt{\max(d_{\text{in}}^{\ell}, d_{\text{out}}^{\ell})}$. LANTON further rescales the hidden-layer learning rate to $0.2\eta_t \sqrt{\max(d_{\text{in}}^{\ell}, d_{\text{out}}^{\ell})\,\alpha_t^{\ell}/\alpha_t^{m}}$, where $\alpha_t^{m} = \max_{\ell \in \gG_{\ell}} \alpha_t^{\ell}$ and $\gG_{\ell}$ denotes the group of layer $\ell$.
This is the practical instantiation of line~9 in Algorithm~\ref{alg:muon}. In our implementation, there are three layer groups, i.e., \{QK, VO, MLP\}, \{Embedding, LM-Head\}, \{LayerNorm\}, so there are three noise factors $\alpha_t^m$ accordingly. For the first layer group (hidden layers), LANTON applies Newton-Schultz iterations with 5 steps \citep{jordan2024muon} to approximate the LMO update for matrix layers. For embedding and LM head layers, LANTON uses Signum (signed momentum) with a scaled base learning rate $r_1\,\eta_t$. For LayerNorm (vector) parameters, LANTON applies RMS-normalized updates with a scaled base learning rate $r_2\,\eta_t$. Similar to SCION, which requires two distinct update scales for layer groups, LANTON also specifies two update scales $r_1$ and $r_2$, with a base learning rate $\eta_t$.

% \textcolor{red}{define din, dout, define which layer is updated by this, newton-schultz, etc. More details are needed.}

% We implement our algorithm LANTON on the top of a LMO-based algorithm, D-Muon \citep{liu2025muon}, which carefully adjusts the update scale between the AdamW and Muon optimizer. D-Muon assigns a base learning rate $\eta_t$ (With a slight abuse of notation, we omit the layer index), and scales the base learning rate for AdamW update by $1.0$ and Muon updates by $0.2\eta_t\sqrt{\max(d_{\text{in}}, d_{\text{out}})}$. LANTON rescales the Muon update by the factor $0.2\eta_t\sqrt{\max(d_{\text{in}}, d_{\text{out}})\alpha_t/\alpha_t^m}$, which is the practical implementation for line 9 in Algorithm \ref{alg:muon}. For embedding or lm-head layer, LANTON applies sign momentum gradient descent (Signum) with a scaled base learning rate $r_1 \eta_t$. For layernorm (vector parameters), LANTON applies RMS normalization to the parameter update with a scaled base learning rate $r_2 \eta_t$. Similar to SCION optimizer that specifies different update scales to layer groups, LANTON requires to specify three hyperparameters, i.e., the base learning rate $\eta_t$, two scale parameter $r_1$ and $r_2$. 

\subsubsection{GPT2 on Openwebtext} 
We begin with small-scale experiments by pretraining GPT2 from scratch on OpenWebText-100k. All baselines (AdamW, MARS, Muon, SCION, D-Muon), and our method LANTON are trained for a single epoch with context length $512$ and batch size $16$. Unless otherwise specified, for all methods, we fix the random seed to $42$ and weight decay parameter $\gamma=0.1$. We apply a cosine learning-rate schedule to the base step size $\eta_{\max}$ with a linear warmup of 300 steps. After warmup, the per-step learning rate is  $\eta_t = \eta_{\text{min}} + 1/2(\eta_{\text{max}} - \eta_{\text{min}})(1+\cos(\frac{t \pi}{T}))$, where $t$ is the step index, $T$ is the number of training steps, and by default $\eta_{\min}=0$. The detailed hyperparameter settings for every algorithm are summarized in \ref{tbl:hyperparameter_gpt_s} and \cref{tbl:hyperparameter_gpt_m} in Appendix \ref{app:hyperparameter}.

% \textcolor{red}{Describe the results for GPT2 experiments.}
As shown in Figure \ref{fig:results_gpt}, LANTON consistently dominates all baselines (AdamW, MARS, Muon, SCION, D-Muon). Its training loss drops fastest from the earliest iterations and stays below competing methods across the entire training, indicating superior convergence speed. LANTON also achieves the lowest validation loss, exhibit superior performance.

% We conduct small-scale experiments by pretraining GPT-2 from scratch on OpenWebText-100k.
% All baselines (AdamW, MARS, Muon, SCION, and D-Muon) as well as our method LANTON are trained for a single epoch with context length 512 and batch size 16.
% Unless otherwise specified, we fix the random seed to 42 and set the weight decay parameter to $\gamma = 0.1$ for all methods.
% We employ a cosine learning-rate schedule with a linear warmup of 300 steps applied to the base learning rate $\eta_{\max}$.
% After warmup, the learning rate at step $t$ is given by $\eta_t = \eta_{\min} + \tfrac{1}{2}(\eta_{\max} - \eta_{\min})\bigl(1 + \cos(\tfrac{\pi t}{T})\bigr),$
% where $T$ denotes the total number of training steps and $\eta_{\min}=0$ by default.
% Detailed hyperparameter settings for all methods are reported in \cref{tbl:hyperparameter_gpt_s,tbl:hyperparameter_gpt_m} in Appendix~\ref{app:hyperparameter}.

% As shown in \cref{fig:results_gpt}, LANTON consistently outperforms all baselines across training.
% Its training loss decreases most rapidly from the earliest iterations and remains lower than competing methods throughout training, indicating superior convergence speed.
% Moreover, LANTON achieves the lowest validation loss, demonstrating overall superior performance.

% \vspace*{-0.05in}

\subsubsection{LLaMA on C4 and MiniPile} \label{sec:llama_exp}
% We assess large-scale training by pretraining a LLaMA-1.1B model on C4 and a LLaMA-0.5B model on {MiniPile} with a total budget of {20B} training tokens. We use the pretrained LLaMA tokenizer and set the sequence length to {256} on C4 and {512} on MiniPile. The batch size is {1024} for C4 and {300} for MiniPile. We employ a cosine learning rate schedule with a uniform warmup of 1,000 steps for all methods. Full hyperparameter settings for every baseline are reported in \cref{tbl:hyperparameter_c4,tbl:hyperparameter_pile} in Appendix \ref{app:hyperparameter}.

% On C4, LANTON exhibits a significantly steeper loss descent in the early phase and maintains a consistent lead throughout training, while ultimately reaching validation losses comparable to other baselines (see Figure \ref{fig:results_llama}). We track the averaged effective learning rates within each layer group and provide the explanations for training acceleration of LANTON in \cref{app:lr_stats}. 
% On Minipile, although LANTON does not exhibit the lowest loss in the middle of training, it achieves the best final training loss and maintains consistently strong validation performance. 

We evaluate large-scale training by pretraining a LLaMA-1.1B model on C4 and a LLaMA-0.5B model on MiniPile, using a total training budget of 20B tokens.
We adopt the pretrained LLaMA tokenizer, with sequence lengths set to 256 for C4 and 512 for MiniPile, and batch sizes of 1024 and 300, respectively.
All methods use a cosine learning rate schedule with a uniform warmup of 1{,}000 steps.
Complete hyperparameter configurations for all baselines are provided in \cref{tbl:hyperparameter_c4,tbl:hyperparameter_pile} in Appendix~\ref{app:hyperparameter}.

On C4, LANTON demonstrates a substantially faster loss reduction in the early training phase and maintains a consistent advantage throughout training, while converging to validation losses comparable to other baselines (see \cref{fig:results_llama}).
To better understand this acceleration, we analyze the averaged effective learning rates across layer groups in \cref{app:lr_stats}.
On MiniPile, although LANTON does not achieve the lowest loss during mid-training, it attains the best final training loss and consistently strong validation performance.

% \subsection{Evaluation Results}
% Furthermore, We assess our optimizer in both zero-shot and five-shot settings on standard benchmarks-ARC \citep{yadav2019quick}, BoolQ \citep{clark2019boolq}, HellaSwag \citep{zellers2019hellaswag}, OBQA \citep{mihaylov2018can}, PIQA \citep{bisk2020piqa}, WinoGrande \citep{sakaguchi2021winogrande}, and MMLU \citep{hendrycks2020measuring}, using the lm-evaluation-harness framework\citep{gao2021framework}. Table \ref{tbl:eval_results} reports 5-shot results for the pretrained LLaMA-1.1B. Under a fixed training token budget, the model trained with LANTON optimizer achieve stronger downstream performance than those trained with AdamW or Muon, confirming the gains delivered by our approach.

% \begin{table}[t]
% \centering
% \small
% \scalebox{0.85}{
% % \begin{tabular}{lrrrrrrrrrr}
% \begin{tabular}{lcccccccccc}
% \toprule
% Method       & ARC-C & BoolQ & HellaSwag & OBQA  & PIQA & WinoGrande & MMLU & SciQ & Average \\
% \midrule
% AdamW        &      &       &       &   &   &   &   &   & \\
% Muon         &&&&&&&&& \\
% \midrule
% LANTON       & 23.72 & 56.36 & 26.33     & 30.80 & 56.20  & 47.75 & 23.44 & 48.10 & 34.74 \\
% \bottomrule
% \end{tabular}}
% \caption{The 5-shot evaluation results of LLaMA 1.1B with lm-evaluation-harness. The higher score indicates better performance.}
% \label{tbl:eval_results}
% \end{table}

\subsection{Comparison with Algorithms Using Layer-wise/Block-wise Learning Rates}

% To highlight the benefit of our noise-adaptive layer-wise learning rate schedule, we compare with LAMB \citep{you2019large} and the recent block-wise scheme BW-AdamW \citep{wang2025sharpness}. 
% LAMB modifies Adam by applying a per-layer ``trust ratio" to rescale the base learning rate in each layer.
% % $r_\ell=\|W_\ell\|_{F}/\|m_\ell\|_{F}$ to rescale the base learning rate in each layer, where $m_{\ell}$ is the Adam's update. 
% BW-AdamW manually tunes the best block-specific update ratio for each parameter block. Following the original best tuned ratio, we use $r(\text{Emb})=10, r(\text{QK})=8, r(\text{VO})=4, r(\text{MLP/LM-Head})=6, r(\text{Layer norm})=1$ in the experiment. The compared training and validation curves are presented in Figure \ref{fig:results_speedup}(a). LANTON achieves much faster training speed with the same budget of training tokens, and exhibits 0.1 lower validation loss than BW-AdamW. LANTON adapts the noise-adaptive layer-wise learning rate on the fly by monitoring gradient noise, whereas BW-AdamW uses fixed step sizes per parameter group. Moreover, neither baseline explicitly considers the parameter geometry properties.

To highlight the benefit of our noise-adaptive layer-wise learning rate schedule, we compare LANTON with LAMB \citep{you2019large} and the recent block-wise optimizer BW-AdamW \citep{wang2025sharpness}.
LAMB extends Adam by rescaling the base learning rate in each layer using a layer-wise trust ratio, while BW-AdamW relies on manually tuned, fixed update ratios for different parameter blocks.
Following the best-tuned configuration reported in the original work, we use
$r(\text{Emb})=10$, $r(\text{QK})=8$, $r(\text{VO})=4$, $r(\text{MLP/LM-Head})=6$, and $r(\text{LayerNorm})=1$.
The training and validation curves are shown in \cref{fig:results_speedup}(a).
Under the same token budget, LANTON achieves substantially faster training speed and attains a validation loss that is 0.1 lower than BW-AdamW.
Unlike BW-AdamW, which employs fixed step sizes per parameter group, LANTON adaptively adjusts layer-wise learning rates on the fly by monitoring gradient noise.
Moreover, neither baseline explicitly accounts for parameter geometry.

\subsection{Running Time}
\label{sec:running_time}
To efficiently approximate the nuclear-norm term $\|G_{t}^{\ell}-\tilde{G}_{t}^{\ell}\|_{*}^{2}$ for hidden-layer gradients (QK, VO, and MLP layers), we employ randomized SVD (R-SVD) \citep{halko2011finding, oh2015fast}. Rather than computing a full SVD, we project $A=G_t^{\ell}-\tilde{G}_t^{\ell}$ onto a low-dimensional random subspace and estimate its leading singular values, which yields an accurate and efficient approximation of the nuclear norm.
This approximation strategy is also used in SCION \cite{pethick2025training} in their implementation \href{https://github.com/LIONS-EPFL/scion/blob/main/examples/airbench/airbench_muon.py#L163}{link}.

To reduce overhead, gradient-noise estimation is performed once every 10 iterations. As shown in \cref{tbl:running_time} in Appendix, this design introduces only a small computational cost: compared with D-Muon, LANTON adds approximately 3 seconds per 10 steps, corresponding to about 0.84 additional training hours ($\sim4\%$ overhead). Moreover, \cref{fig:running_time_2b} shows that LANTON achieves faster early loss reduction on LLaMA-2B pretraining while maintaining a runtime comparable to D-Muon thereafter. Overall, LANTON incurs negligible overhead while matching the runtime efficiency of the state-of-the-art baseline.

\subsection{Robustness to Base Learning Rate Choice}
% \vspace*{-0.05in}
To evaluate sensitivity to the base learning rate, we keep the model (LLaMA-1.1B), dataset (C4), batch size (1024), optimizer settings, and cosine schedule fixed, then train LANTON with various base learning rates $\eta_{\max} \in \{0.001, 0.003, 0.005\}$. We compare against the best tuned D-MUON under the same setup. As shown in \cref{fig:robustness} in Appendix \ref{app:robustness}, we find that for all learning rates except for $\eta_{\max}=0.001$, LANTON consistently achieves equal or lower loss with fewer training tokens, i.e., converges faster. With $\eta_{\max} =0.001$, LANTON's loss still decreases faster for most ($70\%$) of the training trajectory, with the two methods becoming close only toward the end. Overall, LANTON demonstrates robust performance across base learning rates and superior convergence speed in most hyperparameter settings.

\section{Conclusion}

We propose LANTON, a geometry-aware optimizer that incorporates noise-adaptive layer-wise learning-rate scaling on the top of LMO-based updates. By estimating gradient variance in the dual norm space and rescaling learning rate across layers, LANTON accelerates the transformer training hindered by heterogeneous and evolving noise. Theoretically, we obtain a sharp convergence rate of $\tilde{O}(1/\sqrt{T} + \sqrt{\sum_{\ell}\bar{\sigma}_{\ell}}/T^{1/4})$ with improved noise dependence across layers. Empirically, LANTON accelerates pretraining and improves validation metrics on GPT2 and LLaMA under a fixed token budget. One limitation of our work is that the theoretical results may depend on the parameter dimension. Another limitation is that our experiments are conducted on moderately sized models; extending and validating the approach at larger scales is an important direction for future work.
% One limitation of our work is that the experiments are conducted on relatively small-scale models; extending and validating the approach on larger models is left for future work. 
% We plan to further investigate noise-adaptive, layer-wise learning-rate scaling to accelerate the pretraining of larger LLMs.
% This work motivates further exploration of noise-adaptive, layer-wise learning-rate schedules for accelerating pretraining of LLMs.
% \textcolor{red}{Limitation of this work?}

% \section*{Reproducibility Statement}
% We state the formal assumptions and results in the main text (\cref{ass:objective,ass:noise,thm:muon}) and provide complete proofs of \cref{thm:muon} in \cref{app:adaptive_muon,app:thm_proof}. We will release the code with training/evaluation scripts, configurations, seeds, and environment files is included soon. All base models are publicly available: LLaMA  and GPT2-small/medium (used under their official research/community license; license text cited in Appendix \ref{app:license}). Datasets C4, MiniPile, and OpenWebText are accessible on HuggingFace under the licenses stated on their corresponding Hugging Face dataset cards. We include download scripts, preprocessing/splits, and references to their dataset cards and licences (cited in Appendix \ref{app:license}). These materials sufficiently support the reproduction of our results.

\section*{Acknowledgments}
We thank \textbf{Corvex AI Cloud} for providing access to NVIDIA H200 compute resources that enabled the experiments in this work. We are also grateful to Jeff Gahan and Cornell Howard for their generous technical support.

\bibliography{arxiv}
\bibliographystyle{arxiv}

\appendix

\newpage

% \tableofcontents
% \newpage

% \section{Martingale Concentration Bounds and Basic Inequalities}
\section{Technical Lemmas}
\label{app:martingale}

In this section, we state several standard probabilistic and norm-equivalence lemmas without proof.
% In this section, we state several well-known probabilistic and norm-equivalence lemmas without proof.

\begin{lemma}[Azuma-Hoeffding inequality] \label{lem:azuma}
Let $\{Z_t\}_{t\geq 0}$ be a martingale with respect to filtration $\{\gF_t\}_{t\geq0}$. Assume that $|Z_t-Z_{t-1}|\leq c_t$ almost surely for all $t\geq 0$. Then for any fixed $T$, with probability at least $1-\delta$,
\begin{align*}
    |Z_T - Z_0| \leq \sqrt{2\sum_{t=1}^{T}c_t^2\log(2/\delta)}.
\end{align*}
\end{lemma}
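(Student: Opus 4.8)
The plan is to prove this standard martingale concentration bound by the exponential-moment (Chernoff) method: derive a one-sided subgaussian tail and then symmetrize. First I would pass to the martingale differences $D_t \coloneqq Z_t - Z_{t-1}$, so that $Z_T - Z_0 = \sum_{t=1}^{T} D_t$; the martingale property gives $\E[D_t \mid \gF_{t-1}] = 0$ and the hypothesis gives $|D_t| \le c_t$ almost surely. For any $\lambda > 0$, Markov's inequality applied to $e^{\lambda(Z_T - Z_0)}$ gives $\Pr(Z_T - Z_0 \ge s) \le e^{-\lambda s}\,\E[e^{\lambda(Z_T-Z_0)}]$, so the task reduces to controlling the moment generating function of $\sum_t D_t$.

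The key ingredient is a conditional version of Hoeffding's lemma: if $X$ satisfies $\E[X \mid \gF] = 0$ and $|X| \le c$ almost surely, then $\E[e^{\lambda X}\mid \gF] \le e^{\lambda^2 c^2/2}$. I would establish this by the usual convexity argument (bound $e^{\lambda x}$ on $[-c,c]$ by the chord through the endpoints, take the conditional expectation so the linear term vanishes, and then show the resulting function of $\lambda$ has logarithm at most $\lambda^2 c^2/2$ via a second-order Taylor estimate). Applying this with $X = D_t$, $c = c_t$, $\gF = \gF_{t-1}$ yields $\E[e^{\lambda D_t}\mid \gF_{t-1}] \le e^{\lambda^2 c_t^2/2}$. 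Then I would peel off the conditional expectations one at a time using the tower property: $\E[e^{\lambda \sum_{t=1}^{T} D_t}] = \E[e^{\lambda \sum_{t=1}^{T-1} D_t}\,\E[e^{\lambda D_T}\mid \gF_{T-1}]] \le e^{\lambda^2 c_T^2/2}\,\E[e^{\lambda \sum_{t=1}^{T-1} D_t}]$, and iterating down to $t = 1$ gives $\E[e^{\lambda(Z_T - Z_0)}] \le \exp\!\big(\tfrac{\lambda^2}{2}\sum_{t=1}^{T} c_t^2\big)$.

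Substituting back, $\Pr(Z_T - Z_0 \ge s) \le \exp\!\big(-\lambda s + \tfrac{\lambda^2}{2}\sum_t c_t^2\big)$, and optimizing over $\lambda$ (choosing $\lambda = s/\sum_t c_t^2$) yields $\Pr(Z_T - Z_0 \ge s) \le \exp\!\big(-s^2/(2\sum_t c_t^2)\big)$. Since $\{-Z_t\}_{t\ge 0}$ is also a martingale with the same increment bounds, the identical argument gives $\Pr(Z_0 - Z_T \ge s) \le \exp\!\big(-s^2/(2\sum_t c_t^2)\big)$; a union bound over the two tails gives $\Pr(|Z_T - Z_0| \ge s) \le 2\exp\!\big(-s^2/(2\sum_t c_t^2)\big)$. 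Setting the right-hand side equal to $\delta$ and solving for $s$ gives $s = \sqrt{2\sum_{t=1}^{T} c_t^2 \log(2/\delta)}$, which is exactly the claimed bound. The only step requiring any care is the conditional Hoeffding lemma; everything else is a mechanical Chernoff-and-tower computation, so since the statement is entirely standard I would keep the write-up short (or simply cite it, as the paper does).
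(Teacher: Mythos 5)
Your proof is correct and complete: the conditional Hoeffding lemma combined with the tower property and a Chernoff bound, symmetrized over the two tails, yields exactly the stated high-probability bound with the constant $\sqrt{2\sum_t c_t^2\log(2/\delta)}$. The paper itself states this lemma without proof as a standard result, and your argument is precisely the standard derivation, so there is nothing to reconcile.
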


\begin{lemma}[{\citep[Lemma 2.4]{liu2023near}}] \label{lem:MDS}
Suppose $X_1, \dots , X_T$ is a martingale difference sequence adapted to a filtration $\gF_1, \dots, \gF_T$ in a Hilbert space such that $\|X_t\|_{F} \leq R_t$ almost surely for some $R_t\geq 0$. Then for any $\delta\in(0,1)$, with probability at least $1-\delta$, for any fixed $t$ we have
\begin{align*}
    \left\|\sum_{s=1}^{t}X_s\right\|_{F} \leq 4\sqrt{\log\frac{2}{\delta}\sum_{s=1}^{T}R_s^2}.
\end{align*}
\end{lemma}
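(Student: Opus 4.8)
}
The plan is to establish \cref{lem:MDS} as a dimension-free, Pinelis-type concentration bound for Hilbert-space-valued martingales, using an exponential supermartingale built from $\cosh$ of the norm. (A naive route — bound each scalar projection $\langle S_t, u\rangle$ by Azuma and union-bound over a net of the unit sphere — fails, since the net has size exponential in the dimension and injects a spurious $\sqrt{\dim}$ factor; working with $\cosh$ of the full norm is what keeps the estimate dimension-free.) Fix the time $t$ in the statement, write $S_n := \sum_{s=1}^n X_s$ with $S_0 = 0$, and let $\lambda > 0$ be free. The goal is the tail bound $\mathbb{P}(\|S_t\|_F \ge r) \le 2\exp\!\big(-r^2/(8\sum_{s=1}^t R_s^2)\big)$; setting the right-hand side to $\delta$ gives $r = \sqrt{8\log(2/\delta)\sum_{s=1}^t R_s^2} = 2\sqrt{2}\,\sqrt{\log(2/\delta)\sum_{s=1}^t R_s^2}$, and since $2\sqrt{2} \le 4$ and $\sum_{s=1}^t R_s^2 \le \sum_{s=1}^T R_s^2$, this is exactly the claimed inequality.

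The heart of the argument is the one-step estimate $\mathbb{E}[\cosh(\lambda\|S_n\|_F)\mid\mathcal{F}_{n-1}] \le \cosh(2\lambda R_n)\,\cosh(\lambda\|S_{n-1}\|_F)$, which I would prove by symmetrization on top of the classical Pinelis inequality: for all $x,y$ in a Hilbert space, $\tfrac12\big(\cosh\|x+y\| + \cosh\|x-y\|\big) \le \cosh\|x\|\,\cosh\|y\|$. Given $\mathcal{F}_{n-1}$, introduce a conditionally independent copy $X_n'$ of $X_n$; since $v \mapsto \cosh(\lambda\|v\|)$ is convex and $\mathbb{E}[X_n'\mid\mathcal{F}_{n-1}] = 0$, Jensen's inequality gives $\mathbb{E}[\cosh(\lambda\|S_{n-1}+X_n\|)\mid\mathcal{F}_{n-1}] \le \mathbb{E}[\cosh(\lambda\|S_{n-1}+(X_n-X_n')\|)\mid\mathcal{F}_{n-1}]$. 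Conditionally on $\mathcal{F}_{n-1}$ the vector $X_n - X_n'$ is symmetric, so it has the same distribution as $\varepsilon(X_n-X_n')$ for an independent Rademacher $\varepsilon$; averaging over $\varepsilon$ first and applying the Pinelis inequality with $x = \lambda S_{n-1}$ and $y = \lambda(X_n-X_n')$ bounds the above by $\cosh(\lambda\|S_{n-1}\|)\,\mathbb{E}[\cosh(\lambda\|X_n-X_n'\|)\mid\mathcal{F}_{n-1}] \le \cosh(\lambda\|S_{n-1}\|)\cosh(2\lambda R_n)$, using $\|X_n-X_n'\| \le 2R_n$ a.s. Iterating down to $S_0=0$ and invoking $\cosh u \le e^{u^2/2}$ yields $\mathbb{E}[\cosh(\lambda\|S_t\|_F)] \le \prod_{s=1}^t \cosh(2\lambda R_s) \le \exp\!\big(2\lambda^2\sum_{s=1}^t R_s^2\big)$. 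Finally, $\|S_t\|_F \ge r$ forces $\cosh(\lambda\|S_t\|_F) \ge \cosh(\lambda r) \ge \tfrac12 e^{\lambda r}$, so Markov's inequality gives $\mathbb{P}(\|S_t\|_F \ge r) \le 2\exp\!\big(2\lambda^2\sum_{s=1}^t R_s^2 - \lambda r\big)$, and optimizing over $\lambda$ (namely $\lambda = r/(4\sum_{s=1}^t R_s^2)$) gives the target tail bound. Equivalently, $M_n := \cosh(\lambda\|S_n\|_F)\exp(-2\lambda^2\sum_{s\le n}R_s^2)$ is a nonnegative supermartingale, so Ville's inequality would even upgrade the conclusion to hold simultaneously over $n \le t$.

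The step I expect to be the main obstacle is the Pinelis geometric inequality $\tfrac12\big(\cosh\|x+y\| + \cosh\|x-y\|\big) \le \cosh\|x\|\,\cosh\|y\|$ in a general Hilbert space. I would reduce it to a one-dimensional convexity fact. All three norms appearing depend only on $a := \|x\|$, $b := \|y\|$, and $\langle x,y\rangle$, and the parallelogram law gives $\|x+y\|^2 + \|x-y\|^2 = 2(a^2+b^2)$ with each of $\|x\pm y\|^2$ lying in $[(a-b)^2,(a+b)^2]$. Write $g(s) := \cosh(\sqrt{s}) = \sum_{k\ge 0} s^k/(2k)!$, which is increasing and convex on $[0,\infty)$; then the left-hand side equals $\tfrac12\big(g(u) + g(2(a^2+b^2)-u)\big)$ for $u = \|x+y\|^2 \in [(a-b)^2,(a+b)^2]$. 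The map $u \mapsto g(u) + g(2(a^2+b^2)-u)$ is convex, so its maximum over that interval is attained at an endpoint, where the pair $\{u, 2(a^2+b^2)-u\}$ equals $\{(a-b)^2,(a+b)^2\}$; hence the left-hand side is at most $\tfrac12\big(g((a-b)^2) + g((a+b)^2)\big) = \tfrac12\big(\cosh(a-b) + \cosh(a+b)\big) = \cosh a\,\cosh b$ by the product-to-sum identity. The remaining ingredients — $\cosh u \le e^{u^2/2}$ and $\cosh u \ge \tfrac12 e^u$ — are immediate from the power series. (A sharper one-sided form of Pinelis's lemma, $\mathbb{E}[\cosh(\lambda\|u+X\|)] \le \cosh(\lambda\|u\|)\cosh(\lambda b)$ whenever $\mathbb{E}X=0$ and $\|X\|\le b$, would bypass the symmetrization and improve the constant from $2\sqrt{2}$ to $\sqrt{2}$, but this is unnecessary since the statement only asks for the constant $4$.)
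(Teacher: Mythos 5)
Your proposal is correct, and it proves strictly more than the lemma asks (your tail bound gives the constant $2\sqrt{2}$ in place of $4$). However, it takes a different route from the paper: the paper does not reprove the concentration inequality at all, but simply verifies that the squared Frobenius norm satisfies the $2$-smoothness condition $\|X+Y\|_{F}^2\leq \|X\|_{F}^2+\langle \nabla\|X\|_{F}^2,Y\rangle+\|Y\|_{F}^2$, which is the hypothesis needed to invoke \citep[Lemma 10]{cutkosky2021high}, and then imports the statement as \citep[Lemma 2.4]{liu2023nearlyoptimal}. You instead give a self-contained Pinelis-type argument: symmetrization with a conditionally independent copy, the geometric inequality $\tfrac12(\cosh\|x+y\|+\cosh\|x-y\|)\leq\cosh\|x\|\cosh\|y\|$ (which you correctly reduce, via the parallelogram law and convexity of $s\mapsto\cosh\sqrt{s}$, to the endpoint case and the product-to-sum identity), the resulting $\cosh$ supermartingale, and a Chernoff--Markov optimization in $\lambda$. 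Each step checks out: the Jensen/symmetrization step, the bound $\|X_n-X_n'\|\leq 2R_n$, the product bound $\prod_s\cosh(2\lambda R_s)\leq\exp(2\lambda^2\sum_s R_s^2)$, and the final choice $\lambda=r/(4\sum_s R_s^2)$ giving $\mathbb{P}(\|S_t\|_F\geq r)\leq 2e^{-r^2/(8\sum_{s\leq t}R_s^2)}$, from which the stated bound follows since $2\sqrt{2}\leq 4$ and $\sum_{s\leq t}R_s^2\leq\sum_{s\leq T}R_s^2$. The only (minor) technical point worth acknowledging is that constructing the conditionally independent copy $X_n'$ requires a regular conditional distribution, which is automatic in the separable (here finite-dimensional) Hilbert-space setting of the paper. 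What the paper's route buys is brevity and reuse of existing results; what yours buys is a fully self-contained, dimension-free proof that also yields, via Ville's inequality, a uniform-in-time version and a slightly better constant.
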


\begin{proof}[Proof of \cref{lem:MDS}]
Since $\|\cdot\|_{F}$ satisfies $\|X+Y\|_{F}^2\leq \|X\|_{F}^2 + \langle \nabla\|X\|_{F}^2, Y \rangle + \|Y\|_{F}^2$ for all $X,Y$, the condition for applying \citep[Lemma 10]{cutkosky2021high} is satisfied, and therefore \citep[Lemma 2.4]{liu2023near} holds.
\end{proof}

\begin{lemma}[Equivalence of norms] \label{lem:norm}
For any two matrix norms $\|\cdot\|_{a}$ and $\|\cdot\|_{b}$, there exists $0< C_1\leq C_2$ (with $C_2\geq 1$) such that $C_1\|A\|_{a}\leq \|A\|_{b}\leq C_2\|A\|_{a}$ for all matrices $A\in\R^{m\times n}$.
\end{lemma}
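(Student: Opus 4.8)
\textbf{Proof proposal for Lemma~\ref{lem:norm} (equivalence of norms).}

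The plan is to invoke the standard fact that all norms on a finite-dimensional real vector space are equivalent, applied to the space $\R^{m\times n}$ viewed as a $(mn)$-dimensional vector space. First I would fix the Euclidean (Frobenius) norm $\|\cdot\|_F$ as a reference, and show that any matrix norm $\|\cdot\|$ satisfies $c\|A\|_F \le \|A\| \le C\|A\|_F$ for positive constants $c,C$; the lemma then follows by composing the two-sided bounds for $\|\cdot\|_a$ versus $\|\cdot\|_F$ and $\|\cdot\|_b$ versus $\|\cdot\|_F$, and relabeling constants so that $C_1 \le C_2$ and (by rescaling, or by noting the inequality is homogeneous) $C_2 \ge 1$ can be arranged.

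The key steps, in order, are: (1) the upper bound $\|A\| \le C\|A\|_F$ follows from writing $A = \sum_{i,j} A_{ij} E_{ij}$ in the standard basis, applying the triangle inequality and absolute homogeneity of the norm, and using Cauchy--Schwarz, giving $C = \big(\sum_{i,j}\|E_{ij}\|^2\big)^{1/2}$; (2) this shows $\|\cdot\|$ is continuous with respect to $\|\cdot\|_F$ (indeed Lipschitz, via the reverse triangle inequality $|\,\|A\|-\|B\|\,| \le \|A-B\| \le C\|A-B\|_F$); (3) the Frobenius unit sphere $S = \{A : \|A\|_F = 1\}$ is compact, so $\|\cdot\|$ attains a minimum $c>0$ on $S$ (strictly positive because $\|\cdot\|$ is a genuine norm, hence positive-definite, and $0 \notin S$); (4) homogeneity extends $c\|A\|_F \le \|A\|$ from $S$ to all of $\R^{m\times n}$. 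Applying this to both $\|\cdot\|_a$ and $\|\cdot\|_b$ and chaining the inequalities yields constants $0 < C_1 \le C_2$ with $C_1\|A\|_a \le \|A\|_b \le C_2\|A\|_a$; finally, if the resulting $C_2 < 1$ one may simply enlarge it to $1$ (which only weakens the already-valid upper bound), so the normalization $C_2 \ge 1$ is free.

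The only genuine content here is the compactness argument in step (3) — that a continuous, strictly positive function on a compact set is bounded away from zero — together with the observation that finite-dimensionality makes the unit sphere compact; everything else is routine bookkeeping with homogeneity and the triangle inequality. I do not anticipate any real obstacle: this is a textbook result and the proof is included only for completeness, which is presumably why the excerpt states it ``without proof.'' If a self-contained argument is desired, the compactness step is the one to spell out carefully; otherwise one may simply cite any standard functional analysis or matrix analysis reference.
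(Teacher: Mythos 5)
Your proof is correct: it is the standard finite-dimensional norm-equivalence argument (upper bound via the standard basis and Cauchy--Schwarz, continuity, compactness of the Frobenius unit sphere, homogeneity), and the normalization $C_2\ge 1$ is indeed free since enlarging $C_2$ only weakens a valid upper bound. The paper explicitly states this lemma without proof as a textbook fact, so there is nothing to compare against beyond noting that your argument is the canonical one and fills the omission completely.
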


\begin{remark} \label{rem:C}
In the subsequent analysis, we will use the relationship among Frobenius norm $\|\cdot\|_{F}$, spectral norm $\|\cdot\|_{2}$, and nuclear norm $\|\cdot\|_{\mathrm{nuc}}$. Specifically, for $A\in\R^{m\times n}$ we have
% \begin{itemize}
%     \item $\|A\|_{2}\leq \|A\|_{F}\leq \sqrt{\mathrm{rank}(A)}\|A\|_{2} \implies C_1\leq1, C_2\geq\sqrt{\max\{m,n\}}$.
%     \item $\|A\|_{\mathrm{nuc}}/\sqrt{\mathrm{rank}(A)}\leq \|A\|_{F}\leq \|A\|_{\mathrm{nuc}} \implies C_1\leq1/\sqrt{\max\{m,n\}}, C_2\geq1$.
% \end{itemize}
\begin{itemize}
    \item $\|A\|_{2}\leq \|A\|_{F}\leq \sqrt{\mathrm{rank}(A)}\|A\|_{2} \implies C_1=1, C_2=\sqrt{\max\{m,n\}}$.
    \item $\|A\|_{\mathrm{nuc}}/\sqrt{\mathrm{rank}(A)}\leq \|A\|_{F}\leq \|A\|_{\mathrm{nuc}} \implies C_1=1/\sqrt{\max\{m,n\}}, C_2=1$.
\end{itemize}
\end{remark}

\section{Proofs of \cref{sec:proof_outline}}
\label{app:adaptive_muon}

% Before proceeding, we introduce the definition of $\kappa_{\sigma}$ and $t_0$, which will be frequently used throughout the subsequent analysis. Specifically, we define (with the convention $0/0\coloneqq1$)
We first recall a few key definitions from \cref{eq:t0_main} in \cref{sec:proof_outline} (with the convention $0/0\coloneqq1$):
% \begin{equation} \label{eq:t0}
%     \kappa_{\sigma} = 
%     \begin{cases}
%         \bar{\sigma} / \ubar{\sigma} & \ubar{\sigma}>0 \\
%         1 & \bar{\sigma}=0
%     \end{cases},
%     \quad\text{and}\quad
%     t_0 = \frac{\log 2}{\log(1/\beta_2)}.
% \end{equation}
\begin{equation} \label{eq:t0}
    \kappa_{\sigma}^{\ell} = 
    \begin{cases}
        \bar{\sigma}_{\ell} / \ubar{\sigma}_{\ell} & \ubar{\sigma}_{\ell}>0 \\
        1 & \bar{\sigma}_{\ell}=0
    \end{cases},
    \quad
    \kappa_{\sigma} = \max_{\ell}\kappa_{\sigma}^{\ell},
    \quad
    \bar{\sigma}_{\max} = \max_{\ell}\bar{\sigma}_{\ell},
    \quad\text{and}\quad
    t_0 = \frac{\log 2}{\log(1/\beta_2)}.
\end{equation}

The following proofs are based on \cref{ass:objective,ass:noise} and the setting of \cref{thm:muon}.
For simplicity, we omit the $\ell$ superscript/subscript whenever the context is clear.

% \begin{lemma} \label{lem:noise}
% Let $t_0= -\log2/\log\beta_2$ and $\beta_2$ satisfy
% % \begin{align} \label{eq:beta2}
% %     1 - \frac{\ubar{\sigma}^4}{32(2C_2\bar{\sigma}^2-\ubar{\sigma}^2)^2\log(2T/\delta)}
% %     \leq \beta_2 
% %     <1.
% % \end{align}
% \begin{align} \label{eq:beta2}
%     1 - \min_{\ell}\frac{\ubar{\sigma}_{\ell}^4}{32(2C_2\bar{\sigma}_{\ell}^2-\ubar{\sigma}_{\ell}^2)^2\log(2T/\delta)}
%     \leq \beta_2 
%     <1.
% \end{align}
% With probability at least $1-\delta$, for all $\ell$ and $t_0\leq t\leq T$,
% % \begin{align*}
% %     \frac{\ubar{\sigma}^2(1-\beta_2^t)}{C_2}
% %     % \leq \sum_{k=1}^{t}c_{t,k}\|G_k-\tilde{G}_k\|_{*}^2
% %     \leq \sum_{k=1}^{t}\beta_2^{t-k}(1-\beta_2)\|G_k^{\ell}-\tilde{G}_k^{\ell}\|_{(\ell)*}^2
% %     \leq 4\bar{\sigma}^2(1-\beta_2^t).
% % \end{align*}
% \begin{align*}
%     \frac{\ubar{\sigma}_{\ell}^2(1-\beta_2^t)}{C_2}
%     % \leq \sum_{k=1}^{t}c_{t,k}\|G_k-\tilde{G}_k\|_{*}^2
%     \leq \sum_{k=1}^{t}\beta_2^{t-k}(1-\beta_2)\|G_k^{\ell}-\tilde{G}_k^{\ell}\|_{(\ell)*}^2
%     \leq 4\bar{\sigma}_{\ell}^2(1-\beta_2^t).
% \end{align*}
% \end{lemma}

\noise*

\begin{proof}[Proof of \cref{lem:noise}]
Consider the case where $0<\ubar{\sigma}\leq \bar{\sigma}$. Denote $c_{t,k}=\beta_2^{t-k}(1-\beta_2)$. By \cref{ass:noise} and Young's inequality,
\begin{align} 
    \notag
    H_t = \sum_{k=1}^{t}c_{t,k}\|G_k-\tilde{G}_k\|_{*}^2
    &\leq 2\sum_{k=1}^{t}c_{t,k}\left(\|G_k-\nabla f(X_k)\|_{*}^2 + \|\tilde{G}_k-\nabla f(X_k)\|_{*}^2\right) \\ \label{eq:noise_upper}
    &\leq 4\bar{\sigma}^2\sum_{k=1}^{t}c_{t,k}
    = 4\bar{\sigma}^2\sum_{k=1}^{t}\beta_2^{t-k}(1-\beta_2)
    = 4\bar{\sigma}^2(1-\beta_2^t).
\end{align}
We proceed to derive high probability lower bound for $\sum_{k=1}^{t}c_{t,k}\|G_k-\tilde{G}_k\|_{F}^2$. 
Denote $\sigma_k^2=\E_{k-1}[\|G_k-\nabla f(X_k)\|_{F}^2]$. Let $Z_k = c_{t,k}(\|G_k-\tilde{G}_k\|_{F}^2-2\sigma_k^2)$, then $\{Z_k\}_{k\geq 1}$ is a martingale difference sequence since
\begin{align*}
    \E_{k-1}[Z_k] 
    &= \E_{k-1}[\|G_k-\tilde{G}_k\|_{F}^2-2\sigma_k^2] \\
    &= \E_{k-1}[\|G_k-\nabla f(X_k)\|_{F}^2 + \|\tilde{G}_k-\nabla f(X_k)\|_{F}^2 - 2\langle G_k-\nabla f(X_k), \tilde{G}_k-\nabla f(X_k) \rangle] - 2\sigma_k^2 \\
    &= 0.
\end{align*}
Using \cref{ass:noise,lem:norm} and Young's inequality, we have $Z_k\geq -2c_{t,k}\sigma_k^2$ and 
\begin{align*}
    % Z_k\geq -2c_{t,k}\sigma_k^2
    % \quad\text{and}\quad
    Z_k\leq c_{t,k}\left(2C_2^2\left(\|G_k-\nabla f(X_k)\|_{*}^2 + \|\tilde{G}_k-\nabla f(X_k)\|_{*}^2\right) - 2\sigma_k^2\right) 
    \leq c_{t,k}(4C_2^2\bar{\sigma}^2-2\sigma_k^2).
\end{align*}
This implies that
\begin{align*}
    |Z_k|
    \leq c_{t,k}\cdot\max\left\{2\sigma_k^2, 4C_2^2\bar{\sigma}^2-2\sigma_k^2\right\}
    = c_{t,k}(4C_2^2\bar{\sigma}^2-2\sigma_k^2),
\end{align*}
where the last equality is due to $C_2\geq 1$ and $\sigma_k\leq \bar{\sigma}$ almost surely.
Then by the Azuma-Hoeffding inequality (\cref{lem:azuma}) and a union bound over $t$, for any $\delta\in(0,1)$, with probability at least $1-\delta$, for all $t\leq T$,
\begin{align} \label{eq:Xs-bound}
    \left|\sum_{k=1}^{t}Z_k\right| 
    \leq \sqrt{2\sum_{k=1}^{t}(c_{t,k}(4C_2^2\bar{\sigma}^2-2\sigma_k^2))^2\log\frac{2T}{\delta}}
    \leq (4C_2^2\bar{\sigma}^2-2\ubar{\sigma}^2)\sqrt{\frac{2(1-\beta_2)}{1+\beta_2}\log\frac{2T}{\delta}}.
\end{align}
Rearranging \cref{eq:Xs-bound} yields that, with probability at least $1-\delta$, for all $t\leq T$,
\begin{align*}
    \sum_{k=1}^{t}c_{t,k}\|G_k-\tilde{G}_k\|_{F}^2
    &\geq 2\sum_{k=1}^{t}c_{t,k}\sigma_k^2 - (4C_2^2\bar{\sigma}^2-2\ubar{\sigma}^2)\sqrt{\frac{2(1-\beta_2)}{1+\beta_2}\log\frac{2T}{\delta}} \\
    &\geq 2\ubar{\sigma}^2(1-\beta_2^t) - (4C_2^2\bar{\sigma}^2-2\ubar{\sigma}^2)\sqrt{\frac{2(1-\beta_2)}{1+\beta_2}\log\frac{2T}{\delta}}.
\end{align*}
By the choice of $\beta_2$ in \cref{thm:muon} and the definition of $t_0$, for all $t\geq t_0$ we have
\begin{align*}
    \frac{4C_2^2\bar{\sigma}^2-2\ubar{\sigma}^2}{\ubar{\sigma}^2}\sqrt{\frac{2(1-\beta_2)}{1+\beta_2}\log\frac{2T}{\delta}} 
    \leq \frac{1}{2}
    \quad\text{and}\quad
    (4C_2^2\bar{\sigma}^2-2\ubar{\sigma}^2)\sqrt{\frac{2(1-\beta_2)}{1+\beta_2}\log\frac{2T}{\delta}} 
    \leq \ubar{\sigma}^2(1-\beta_2^t).
\end{align*}
Therefore, by \cref{lem:norm}, with probability at least $1-\delta$, for all $t_0\leq t\leq T$,
\begin{align} \label{eq:noise_lower}
    \sum_{k=1}^{t}c_{t,k}\|G_k-\tilde{G}_k\|_{F}^2 \geq \ubar{\sigma}^2(1-\beta_2^t)
    \implies
    \sum_{k=1}^{t}c_{t,k}\|G_k-\tilde{G}_k\|_{*}^2 \geq \frac{\ubar{\sigma}^2(1-\beta_2^t)}{C_2^2}.
\end{align}
We conclude the proof by combining \cref{eq:noise_upper,eq:noise_lower} and noting that the results also hold for the case $\ubar{\sigma}=\bar{\sigma}=0$.
\end{proof}

% \begin{lemma} \label{lem:lr_range}
% With probability at least $1-\delta$, for all $\ell$ and $t\leq T$,
% % \begin{align}
% %     &\frac{\alpha}{\sqrt{\alpha^2+4\bar{\sigma}^2(1-\beta_2^t)}} 
% %     \leq \alpha_t^{\ell} 
% %     \leq \mathbb{I}(t<t_0) + \frac{\alpha}{\sqrt{\alpha^2+\ubar{\sigma}^2(1-\beta_2^t)/C_2}}\mathbb{I}(t\geq t_0), \label{eq:alpha_tl} \\
% %     &\min\left\{\frac{\alpha}{\sqrt{\alpha^2+4\bar{\sigma}^2}}, \frac{\ubar{\sigma}}{2\sqrt{C_2}\bar{\sigma}}\right\}
% %     \eqqcolon \alpha_{r}
% %     \leq \frac{\alpha_t^{\ell}}{\alpha_t^{m}}
% %     \leq 1, \label{eq:alpha_ratio}
% % \end{align}
% \begin{align}
%     % &\frac{\alpha}{\sqrt{\alpha^2+4\bar{\sigma}_{\ell}^2(1-\beta_2^t)}} 
%     % \leq \alpha_t^{\ell} 
%     % \leq \mathbb{I}(t<t_0) + \frac{\alpha}{\sqrt{\alpha^2+\ubar{\sigma}_{\ell}^2(1-\beta_2^t)/C_2}}\mathbb{I}(t\geq t_0), \label{eq:alpha_tl} \\
%     &\min\left\{\frac{\alpha}{\sqrt{\alpha^2+4\bar{\sigma}_{\max}^2}}, \frac{1}{2\sqrt{C_2}\kappa_{\sigma}}\right\}
%     \eqqcolon \alpha_{r}
%     \leq \frac{\alpha_t^{\ell}}{\alpha_t^{m}}
%     \leq 1, \label{eq:alpha_ratio}
% \end{align}
% and therefore, with probability at least $1-\delta$, we have $\alpha_{r}\eta_{\min}\leq \eta_t^{\ell}\leq \eta_{\max}$ for all $\ell$ and $t\leq T$.
% % and therefore, with probability at least $1-\delta$, we have $\alpha_{r}^{\ell}\eta_{\min}\leq \eta_t^{\ell}\leq \eta_{\max}$ for all $\ell$ and $t\leq T$.
% \end{lemma}

\lrrange*

\begin{proof}[Proof of \cref{lem:lr_range}]
By \cref{lem:noise}, for all $t_0\leq t\leq T$, it holds with probability at least $1-\delta$ that
\begin{align*}
    \frac{\ubar{\sigma}^2(1-\beta_2^t)}{C_2^2}
    \leq \sum_{k=1}^{t}\beta_2^{t-k}(1-\beta_2)\|G_k-\tilde{G}_k\|_{*}^2
    \leq 4\bar{\sigma}^2(1-\beta_2^t).
\end{align*}
Therefore, with probability at least $1-\delta$, for all $\ell$ and $t\leq T$,
\begin{align} \label{eq:alpha_tl}
    \frac{\alpha}{\sqrt{\alpha^2+4\bar{\sigma}^2(1-\beta_2^t)}} 
    \leq \alpha_t^{\ell} 
    \leq \mathbb{I}(t<t_0) + \frac{\alpha}{\sqrt{\alpha^2+\ubar{\sigma}^2(1-\beta_2^t)/C_2^2}}\mathbb{I}(t\geq t_0).
\end{align}
Using \cref{eq:alpha_tl}, we have
\begin{align*}
    \frac{\alpha_t^{\ell}}{\alpha_t^{m}}
    &\geq 
    \frac{\alpha}{\sqrt{\alpha^2+4\bar{\sigma}^2(1-\beta_2^t)}} \left(\mathbb{I}(t<t_0) + \frac{\alpha}{\sqrt{\alpha^2+\ubar{\sigma}^2(1-\beta_2^t)/C_2^2}}\mathbb{I}(t\geq t_0)\right)^{-1} \\
    &=
    \frac{\alpha}{\sqrt{\alpha^2+4\bar{\sigma}^2(1-\beta_2^t)}}\mathbb{I}(t<t_0) + \frac{\sqrt{\alpha^2+\ubar{\sigma}^2(1-\beta_2^t)/C_2^2}}{\sqrt{\alpha^2+4\bar{\sigma}^2(1-\beta_2^t)}}\mathbb{I}(t\geq t_0) \\
    &\geq 
    \frac{\alpha}{\sqrt{\alpha^2+4\bar{\sigma}^2(1-\beta_2^t)}}\mathbb{I}(t<t_0) + \frac{\ubar{\sigma}}{2C_2\bar{\sigma}}\mathbb{I}(t\geq t_0) 
    \geq 
    \min\left\{\frac{\alpha}{\sqrt{\alpha^2+4\bar{\sigma}^2}}, \frac{\ubar{\sigma}}{2C_2\bar{\sigma}}\right\},
\end{align*}
that is (we add back the subscript $\ell$ here),
\begin{align*}
    &\min\left\{\frac{\alpha}{\sqrt{\alpha^2+4\bar{\sigma}_{\ell}^2}}, \frac{\ubar{\sigma}_{\ell}}{2C_2\bar{\sigma}_{\ell}}\right\}
    \eqqcolon \alpha_{r}^{\ell}
    \leq \frac{\alpha_t^{\ell}}{\alpha_t^{m}}
    \leq 1.
\end{align*}
Let $\alpha_r=\min_{\ell}\alpha_{r}^{\ell}$, and recall the definitions of $\bar{\sigma}_{\max}$ and $\kappa_{\sigma}$ in \cref{eq:t0}, then for all $\ell$,
\begin{align*}
    &\min\left\{\frac{\alpha}{\sqrt{\alpha^2+4\bar{\sigma}_{\max}^2}}, \frac{1}{2C_2\kappa_{\sigma}}\right\}
    \eqqcolon \alpha_{r}
    \leq \frac{\alpha_t^{\ell}}{\alpha_t^{m}}
    \leq 1,
\end{align*}
which gives \cref{eq:alpha_ratio}. The proof is completed.
\end{proof}

% \begin{lemma} \label{lem:lr_range}
% With probability at least $1-\delta$, we have $\alpha_{r}\eta_{\min}\leq \eta_t^{\ell}\leq \eta_{\max}$ for all $\ell$ and $t\leq T$, where $\alpha_r$ is defined in \cref{eq:alpha_ratio}.
% \end{lemma}

\section{Proof of \cref{thm:muon}} \label{app:thm_proof}
Before proving \cref{thm:muon}, we first provide a descent lemma for \cref{alg:muon}.

\begin{lemma} \label{lem:descent}
For the update in \cref{alg:muon}, we have
\begin{align*}
    f(X_{t+1}) \leq f(X_t) + \sum_{\ell=1}^{p} \left(-\eta_t^{\ell}\|\nabla_{\ell} f(X_t)\|_{(\ell)*} + 2\eta_t^{\ell}\|B_t^{\ell}-\nabla_{\ell} f(X_t)\|_{(\ell)*} + \frac{L_{\ell}}{2}(\eta_t^{\ell})^2\right).
\end{align*}
Moreover, we have
\begin{align*}
    \sum_{t=1}^{T}\sum_{\ell=1}^{p}\eta_t^{\ell}\|\nabla_{\ell} f(X_t)\|_{(\ell)*}
    \leq f(X_1)-f^* + \sum_{t=1}^{T}\sum_{\ell=1}^{p}\left(2\eta_t^{\ell}\|B_t^{\ell}-\nabla_{\ell} f(X_t)\|_{(\ell)*} + \frac{L_{\ell}}{2}(\eta_t^{\ell})^2\right).
\end{align*}
\end{lemma}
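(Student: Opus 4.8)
\textbf{Proof proposal for Lemma~\ref{lem:descent}.}

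The plan is to derive the single-step descent inequality by a layerwise Taylor/smoothness argument and then telescope. First I would fix an iteration $t$ and analyze the change $f(X_{t+1}) - f(X_t)$ one layer at a time. Since the update $X_{t+1}^{\ell} = X_t^{\ell} - \eta_t^{\ell} O_t^{\ell}$ modifies only the $\ell$-th block, I would write $f(X_{t+1}) - f(X_t) = \sum_{\ell=1}^{p} \big(f(\dots,X_{t+1}^{\ell},\dots) - f(\dots,X_t^{\ell},\dots)\big)$ along a sequence of intermediate points that switch one block at a time, and apply \cref{ass:objective} (layerwise $L$-smoothness in the norm $\|\cdot\|_{(\ell)}$) to each term. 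The descent-lemma consequence of smoothness gives, for each $\ell$,
\begin{align*}
    f(\dots, X_t^{\ell} - \eta_t^{\ell} O_t^{\ell}, \dots)
    \leq f(\dots, X_t^{\ell}, \dots) - \eta_t^{\ell}\langle \nabla_{\ell} f(X_t), O_t^{\ell}\rangle + \frac{L_{\ell}}{2}(\eta_t^{\ell})^2 \|O_t^{\ell}\|_{(\ell)}^2,
\end{align*}
using $\|O_t^{\ell}\|_{(\ell)} \leq 1$ since $O_t^{\ell} = \mathrm{LMO}(B_t^{\ell})$ lands on the unit ball $\gK = \{x : \|x\|_{(\ell)} \leq 1\}$; this kills the $\|O_t^{\ell}\|_{(\ell)}^2$ factor and leaves $\frac{L_{\ell}}{2}(\eta_t^{\ell})^2$. (One should be slightly careful that smoothness is applied at the correct base point when blocks are updated sequentially, but since $f$ is layerwise smooth for all configurations of the other blocks, each one-block step is controlled uniformly.)

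The second and key step is to lower-bound the linear term $\langle \nabla_{\ell} f(X_t), O_t^{\ell}\rangle$. By definition of the LMO, $O_t^{\ell}$ minimizes $\langle B_t^{\ell}, \cdot\rangle$ over the unit ball, so $\langle B_t^{\ell}, O_t^{\ell}\rangle = -\|B_t^{\ell}\|_{(\ell)*}$. Writing $\nabla_{\ell} f(X_t) = B_t^{\ell} - (B_t^{\ell} - \nabla_{\ell} f(X_t))$, I get
\begin{align*}
    \langle \nabla_{\ell} f(X_t), O_t^{\ell}\rangle
    = -\|B_t^{\ell}\|_{(\ell)*} - \langle B_t^{\ell} - \nabla_{\ell} f(X_t), O_t^{\ell}\rangle
    \leq -\|B_t^{\ell}\|_{(\ell)*} + \|B_t^{\ell} - \nabla_{\ell} f(X_t)\|_{(\ell)*},
\end{align*}
where the last inequality is Hölder's inequality together with $\|O_t^{\ell}\|_{(\ell)} \leq 1$. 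Then the triangle inequality $\|B_t^{\ell}\|_{(\ell)*} \geq \|\nabla_{\ell} f(X_t)\|_{(\ell)*} - \|B_t^{\ell} - \nabla_{\ell} f(X_t)\|_{(\ell)*}$ converts this into $\langle \nabla_{\ell} f(X_t), O_t^{\ell}\rangle \leq -\|\nabla_{\ell} f(X_t)\|_{(\ell)*} + 2\|B_t^{\ell} - \nabla_{\ell} f(X_t)\|_{(\ell)*}$. Plugging this back and summing over $\ell$ yields the first displayed inequality of the lemma.

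For the second (telescoped) inequality, I would rearrange the single-step bound as $\sum_{\ell}\eta_t^{\ell}\|\nabla_{\ell} f(X_t)\|_{(\ell)*} \leq f(X_t) - f(X_{t+1}) + \sum_{\ell}\big(2\eta_t^{\ell}\|B_t^{\ell} - \nabla_{\ell} f(X_t)\|_{(\ell)*} + \frac{L_{\ell}}{2}(\eta_t^{\ell})^2\big)$, then sum over $t = 1, \dots, T$; the $f(X_t) - f(X_{t+1})$ terms telescope to $f(X_1) - f(X_{T+1}) \leq f(X_1) - f^*$ by the lower-boundedness assumption $f^* = \inf_X f(X) > -\infty$. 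This gives exactly the claimed bound. I do not anticipate a serious obstacle here — the lemma is essentially the standard LMO/Frank-Wolfe-style descent estimate adapted to the layerwise norm setting; the only mild subtlety is bookkeeping the sequential single-block updates so that each use of \cref{ass:objective} is legitimate, and making sure the $\|O_t^{\ell}\|_{(\ell)} \leq 1$ bound (rather than a dual-norm bound) is what cancels the quadratic term.
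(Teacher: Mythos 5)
Your proof is correct and takes essentially the same route as the paper's (the paper invokes a single joint layerwise descent inequality from \citep[Lemma 1]{riabinin2025gluon} rather than your one-block-at-a-time decomposition, but under \cref{ass:objective} as stated both go through): $\|O_t^{\ell}\|_{(\ell)}=1$ cancels the quadratic factor, the split $\nabla_{\ell}f=B_t^{\ell}+(\nabla_{\ell}f-B_t^{\ell})$ together with LMO extremality, H\"older's inequality, and the triangle inequality handle the linear term, and telescoping with $f\geq f^*$ finishes. The only thing to tighten is sign bookkeeping: your Taylor expansion carries $-\eta_t^{\ell}\langle\nabla_{\ell}f(X_t),O_t^{\ell}\rangle$, for which you would need a \emph{lower} bound on the inner product, yet you derive an upper bound; this mirrors the double negative already present between line~10 of \cref{alg:muon} and the signed LMO formulas in \cref{tbl:lmo}, and your key inequality is exactly the paper's bound on $\langle\nabla_{\ell}f(X_t),X_{t+1}^{\ell}-X_t^{\ell}\rangle$, so the argument closes once the sign convention is made consistent.
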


\begin{proof}[Proof of \cref{lem:descent}]
Applying \citep[Lemma 1]{riabinin2025gluon} with $X=X_t$ and $Y=X_{t+1}$,
\begin{align*}
    f(X_{t+1})
    &\leq 
    f(X_t) + \langle \nabla f(X_t), X_{t+1}-X_t \rangle + \sum_{\ell=1}^{p}\frac{L_{\ell}}{2}\|X_{t+1}^{\ell}-X_t^{\ell}\|_{(\ell)}^2 \\
    &= f(X_t) + \sum_{\ell=1}^{p}\left(\langle \nabla_{\ell} f(X_t), X_{t+1}^{\ell}-X_t^{\ell} \rangle + \frac{L_{\ell}}{2}(\eta_t^{\ell})^2\right).
\end{align*}
For the second term, using the update of $X_{t+1}^{\ell}$ and the Cauchy-Schwarz inequality we have
\begin{align*}
    \langle \nabla_{\ell} f(X_t), X_{t+1}^{\ell}-X_t^{\ell} \rangle
    &= 
    \langle B_t^{\ell}, X_{t+1}^{\ell}-X_t^{\ell} \rangle + \langle \nabla_{\ell} f(X_t)-B_t^{\ell}, X_{t+1}^{\ell}-X_t^{\ell} \rangle \\
    &\leq 
    -\eta_t^{\ell}\|B_t^{\ell}\|_{(\ell)*} + \eta_t^{\ell}\|\nabla_{\ell} f(X_t)-B_t^{\ell}\|_{(\ell)*} \\
    &\leq 
    -\eta_t^{\ell}\|\nabla_{\ell} f(X_t)\|_{(\ell)*} + 2\eta_t^{\ell}\|B_t^{\ell}-\nabla_{\ell} f(X_t)\|_{(\ell)*}.
\end{align*}
Therefore, we obtain
\begin{align*}
    f(X_{t+1}) \leq f(X_t) + \sum_{\ell=1}^{p} \left(-\eta_t^{\ell}\|\nabla_{\ell} f(X_t)\|_{(\ell)*} + 2\eta_t^{\ell}\|B_t^{\ell}-\nabla_{\ell} f(X_t)\|_{(\ell)*} + \frac{L_{\ell}}{2}(\eta_t^{\ell})^2\right).
\end{align*}
Rearranging the terms and taking summation over $t$ gives the result.
\end{proof}

% \begin{theorem} \label{thm:muon_app}
% Suppose \cref{ass:objective,ass:noise} hold. Let $\Delta_1 = \max_{\ell}f(X_1^{\ell})-f^*$. Set $\beta_1=1-\alpha$ with $\alpha=\min\left(\frac{\sqrt{\Delta_1\sum_{\ell}L_{\ell}}}{\sum_{\ell}\bar{\sigma}_{\ell}\sqrt{T}}, 1\right)$, $1 - \min_{\ell}\frac{\ubar{\sigma}_{\ell}^4}{32(2C_2\bar{\sigma}_{\ell}^2-\ubar{\sigma}_{\ell}^2)^2\log(4T/\delta)}\leq \beta_2< 1$, $\eta_{\max}=\sqrt{\frac{\Delta_1\alpha}{\sum_{\ell}L_{\ell}T}}$, and $\eta_{\min}=\eta_{\max}/\kappa_{\eta}$ with $1\leq\kappa_{\eta}\leq O(1)$. With probability at least $1-\delta$, we have
% % \begin{align*}
% %     \frac{1}{T}\sum_{t=1}^{T}\sum_{\ell=1}^{p}\|\nabla_{\ell} f(X_t)\|_{(\ell)*}
% %     \lesssim \frac{\sqrt{\Delta_1\sum_{\ell}L_{\ell}}}{\sqrt{T}} + \frac{p\bar{\sigma}^2}{\sqrt{\Delta_1\sum_{\ell}L_{\ell}T}} + \left(1 + \frac{p}{C_1}\sqrt{\log\frac{1}{\delta}}\right)\frac{\sqrt{\bar{\sigma}}(\Delta_1\sum_{\ell}L_{\ell})^{1/4}}{T^{1/4}}.
% % \end{align*}
% % \begin{small}
% \begin{align*}
%     \frac{1}{T}\sum_{t=1}^{T}\sum_{\ell=1}^{p}\|\nabla_{\ell} f(X_t)\|_{(\ell)*}
%     \lesssim \frac{\sqrt{C_2}(\sum_{\ell}\bar{\sigma}_{\ell})^2}{\sqrt{\Delta_1\sum_{\ell}L_{\ell}T}} + \frac{C_2^{3/2}}{C_1}\sqrt{\log\frac{T}{\delta}}\left(\frac{\sqrt{\Delta_1\sum_{\ell}L_{\ell}}}{\sqrt{T}} + \frac{\sqrt{\sum_{\ell}\bar{\sigma}_{\ell}}(\Delta_1\sum_{\ell}L_{\ell})^{1/4}}{T^{1/4}}\right).
% \end{align*}
% % \end{small}%
% % $\eta_{\max}=\sqrt{\frac{\Delta_1}{(\sqrt{n}/\alpha+n)LT}}$
% \end{theorem}

\thm*

\begin{proof}[Proof of \cref{thm:muon}]
Define $\hat{\epsilon}_t^{\ell} = B_t^{\ell}-\nabla_{\ell} f(X_t)$, $\epsilon_t^{\ell} = G_t^{\ell}-\nabla_{\ell} f(X_t)$, and $S(X,Y) = \nabla f(X)-\nabla f(Y)$. 
% Following proofs of \citep[Theorem 1]{cutkosky2020momentum} and \citep[Theorem 2.1]{li2025note}, we have
Check that
\begin{align*}
    \hat{\epsilon}_{t+1}^{\ell}
    &= \beta_1\hat{\epsilon}_t^{\ell} + (1-\beta_1)\epsilon_t^{\ell} + S(X_t^{\ell},X_{t+1}^{\ell}) \\
    &= \beta_1^{t}\hat{\epsilon}_1^{\ell} + (1-\beta_1)\sum_{\tau=0}^{t-1}\beta_1^{\tau}\epsilon_{t-\tau}^{\ell} + \sum_{\tau=0}^{t-1}\beta_1^{\tau}S(X_{t-\tau}^{\ell}, X_{t+1-\tau}^{\ell}).
\end{align*}
Using $L$-smoothness, 
% $\|X_{t+1}^{\ell}-X_t^{\ell}\|_{(\ell)}=\eta_t^{\ell}\|O_t^{\ell}\|_{(\ell)}=\eta_t^{\ell}$, 
$\|S(X_t^{\ell}) - S(X_{t+1}^{\ell})\|_{(\ell)*}\leq L_{\ell}\|X_{t+1}^{\ell}-X_t^{\ell}\|_{(\ell)}=L_{\ell}\eta_t^{\ell}\|O_t^{\ell}\|_{(\ell)}=L_{\ell}\eta_t^{\ell}$, 
and $\eta_t^{\ell}\leq \eta_{\max}$ by \cref{lem:lr_range},
\begin{align*}
    \|\hat{\epsilon}_{t+1}^{\ell}\|_{(\ell)*}
    \leq \beta_1^{t}\|\hat{\epsilon}_1^{\ell}\|_{(\ell)*} + (1-\beta_1)\left\|\sum_{\tau=0}^{t-1}\beta_1^{\tau}\epsilon_{t-\tau}
    ^{\ell}\right\|_{(\ell)*} + \eta_{\max}L_{\ell}\sum_{\tau=0}^{t-1}\beta_1^{\tau}.
\end{align*}
Applying \cref{lem:MDS} with $R_{\tau}=C_2\beta_1^{\tau}\bar{\sigma}_{\ell}$ since $\|\beta_1^{\tau}\epsilon_{t-\tau}^{\ell}\|_{F}\leq C_2\|\beta_1^{\tau}\epsilon_{t-\tau}^{\ell}\|_{(\ell)*}\leq C_2\beta_1^{\tau}\bar{\sigma}_{\ell}$, a union bound over $t$, and \cref{lem:norm}, with probability at least $1-\delta$, for all $t\leq T$,
\begin{align*}
    \left\|\sum_{\tau=0}^{t-1}\beta_1^{\tau}\epsilon_{t-\tau}
    ^{\ell}\right\|_{(\ell)*}
    \leq \frac{1}{C_1}\left\|\sum_{\tau=0}^{t-1}\beta_1^{\tau}\epsilon_{t-\tau}
    ^{\ell}\right\|_{F}
    \leq \frac{4}{C_1}\sqrt{\log\frac{2T}{\delta}\sum_{\tau=0}^{t-1}(C_2\beta_1^{\tau}\bar{\sigma}_{\ell})^2}
    \leq \frac{4C_2\bar{\sigma}_{\ell}}{C_1}\sqrt{\frac{\log(2T/\delta)}{1-\beta_1}}.
\end{align*}
Therefore, observing that $\hat{\epsilon}_1^{\ell} = \epsilon_1^{\ell}$ and plugging in the concentration bound yields
\begin{align*}
    \|\hat{\epsilon}_{t+1}^{\ell}\|_{(\ell)*}
    \leq \beta_1^{t}\bar{\sigma}_{\ell} + \frac{4C_2}{C_1}(1-\beta_1)\bar{\sigma}_{\ell}\sqrt{\frac{\log(2T/\delta)}{1-\beta_1}} + \frac{\eta_{\max}L_{\ell}}{1-\beta_1}.
\end{align*}
Taking summation, with probability at least $1-\delta$ we have
\begin{align} \label{eq:momemtum_bias}
    \sum_{t=1}^{T}\|\hat{\epsilon}_t^{\ell}\|_{(\ell)*}
    \leq \frac{\bar{\sigma}_{\ell}}{1-\beta_1} + \frac{4C_2}{C_1}T\sqrt{1-\beta_1}\bar{\sigma}_{\ell}\sqrt{\log\frac{2T}{\delta}} + \frac{T\eta_{\max}L_{\ell}}{1-\beta_1}.
\end{align}
% Rearranging \cref{lem:descent}, taking summation, and using the definition of $\Delta_1$, 
Recall \cref{lem:descent} and the definitions of $\Delta_1$ and $\hat{\epsilon}_t^{\ell}$, 
% \begin{align*}
%     \sum_{t=1}^{T}\sum_{\ell=1}^{p}\eta_t^{\ell}\|\nabla_{\ell} f(X_t)\|_{(\ell)*}
%     \leq \Delta_1 + \sum_{t=1}^{T}\sum_{\ell=1}^{p}\left(2\eta_t^{\ell}\|B_t^{\ell}-\nabla_{\ell} f(X_t)\|_{(\ell)*} + \frac{L_{\ell}}{2}(\eta_t^{\ell})^2\right).
% \end{align*}
\begin{align*}
    \sum_{t=1}^{T}\sum_{\ell=1}^{p}\eta_t^{\ell}\|\nabla_{\ell} f(X_t)\|_{(\ell)*}
    \leq \Delta_1 + \sum_{t=1}^{T}\sum_{\ell=1}^{p}\left(2\eta_t^{\ell}\|\hat{\epsilon}_t^{\ell}\|_{(\ell)*} + \frac{L_{\ell}}{2}(\eta_t^{\ell})^2\right).
\end{align*}
By \cref{lem:lr_range} and a union bound (with \cref{eq:momemtum_bias}), with probability at least $1-2\delta$,
\begin{align*}
    \sum_{t=1}^{T}&\sum_{\ell=1}^{p}\|\nabla_{\ell} f(X_t)\|_{(\ell)*}
    \leq \frac{\Delta_1}{\sqrt{\alpha_r}\eta_{\min}} + \sum_{\ell=1}^{p}\left(\frac{2\eta_{\max}}{\sqrt{\alpha_r}\eta_{\min}}\sum_{t=1}^{T}\|\nabla_{\ell} f(X_t)-B_t^{\ell}\| + \frac{\eta_{\max}^2}{2\sqrt{\alpha_r}\eta_{\min}}L_{\ell}T\right) \\
    &\leq 
    \frac{\kappa_{\eta}\Delta_1}{\sqrt{\alpha_r}\eta_{\max}} + \sum_{\ell=1}^{p}\left(\frac{2\kappa_{\eta}}{\sqrt{\alpha_r}}\left(\frac{\bar{\sigma}_{\ell}}{1-\beta_1} + \frac{4C_2}{C_1}T\sqrt{1-\beta_1}\bar{\sigma}_{\ell}\sqrt{\log\frac{2T}{\delta}}\right) + \frac{\kappa_{\eta}\eta_{\max}}{\sqrt{\alpha_r}}\left(\frac{2TL_{\ell}}{1-\beta_1}+\frac{L_{\ell}T}{2}\right)\right) \\
    &\leq 
    \frac{\kappa_{\eta}\Delta_1}{\sqrt{\alpha_r}\eta_{\max}} + \frac{2\kappa_{\eta}}{\sqrt{\alpha_r}}\left(\frac{\sum_{\ell}\bar{\sigma}_{\ell}}{1-\beta_1} + \frac{4C_2}{C_1}T\sqrt{1-\beta_1}\sum_{\ell}\bar{\sigma}_{\ell}\sqrt{\log\frac{2T}{\delta}}\right) + \frac{5\kappa_{\eta}\eta_{\max}T\sum_{\ell}L_{\ell}}{\sqrt{\alpha_r}(1-\beta_1)} \\
    &\leq 
    \frac{6\kappa_{\eta}}{\sqrt{\alpha_r}}\sqrt{\frac{\Delta_1\sum_{\ell}L_{\ell}T}{1-\beta_1}} + \frac{2\kappa_{\eta}}{\sqrt{\alpha_r}}\left(\frac{\sum_{\ell}\bar{\sigma}_{\ell}}{1-\beta_1} + \frac{4C_2}{C_1}T\sqrt{1-\beta_1}\sum_{\ell}\bar{\sigma}_{\ell}\sqrt{\log\frac{2T}{\delta}}\right) \\
    &\leq 
    \left(\frac{6\kappa_{\eta}}{\sqrt{\alpha_r}} + \frac{2\kappa_{\eta}}{\sqrt{\alpha_r}}\left(1 + \frac{4C_2}{C_1}\sqrt{\log\frac{2T}{\delta}}\right)\right)\sqrt{\Delta_1\sum_{\ell}L_{\ell}T} + \frac{2\kappa_{\eta}(\sum_{\ell}\bar{\sigma}_{\ell})^2\sqrt{T}}{\sqrt{\alpha_r}\sqrt{\Delta_1\sum_{\ell}L_{\ell}}} \\
    &\quad+ 
    \left(\frac{6\kappa_{\eta}}{\sqrt{\alpha_r}} + \frac{8C_2\kappa_{\eta}}{C_1\sqrt{\alpha_r}}\sqrt{\log\frac{2T}{\delta}}\right)\sqrt{\sum_{\ell}\bar{\sigma}_{\ell}}\left(\Delta_1\sum_{\ell}L_{\ell}\right)^{1/4}T^{3/4},
\end{align*}
% where the fourth inequality uses the choice of $\eta_{\max}$, and the last inequality is due to the choice of $\beta_1$.
where the last two inequalities use the choice of $\eta_{\max}$ and $\beta_1$ as stated in \cref{thm:muon}.
Therefore, we obtain with probability at least $1-2\delta$ that
\begin{align*}
    \frac{1}{T}\sum_{t=1}^{T}\sum_{\ell=1}^{p}\|\nabla_{\ell} f(X_t)\|_{(\ell)*}
    &\leq 
    \left(\frac{6\kappa_{\eta}}{\sqrt{\alpha_r}} + \frac{2\kappa_{\eta}}{\sqrt{\alpha_r}}\left(1 + \frac{4C_2}{C_1}\sqrt{\log\frac{2T}{\delta}}\right)\right)\frac{\sqrt{\Delta_1\sum_{\ell}L_{\ell}}}{\sqrt{T}} + \frac{2\kappa_{\eta}(\sum_{\ell}\bar{\sigma}_{\ell})^2}{\sqrt{\alpha_r}\sqrt{\Delta_1\sum_{\ell}L_{\ell}T}} \\
    &\quad+ 
    \left(\frac{6\kappa_{\eta}}{\sqrt{\alpha_r}} + \frac{8C_2\kappa_{\eta}}{C_1\sqrt{\alpha_r}}\sqrt{\log\frac{2T}{\delta}}\right)\frac{\sqrt{\sum_{\ell}\bar{\sigma}_{\ell}}(\Delta_1\sum_{\ell}L_{\ell})^{1/4}}{T^{1/4}}.
\end{align*}
Recall the definition of $\kappa_{\sigma}$ and $\sqrt{\alpha_r}$ in \cref{eq:alpha_ratio,eq:t0}, with probability at least $1-2\delta$,
\begin{align*}
    \frac{1}{T}\sum_{t=1}^{T}\sum_{\ell=1}^{p}\|\nabla_{\ell} f(X_t)\|_{(\ell)*}
    &\leq 
    \kappa_{\eta}\max\left\{\left(1+\frac{4\bar{\sigma}_{\max}^2}{\alpha^2}\right)^{1/4}, \sqrt{2C_2\kappa_{\sigma}}\right\}
    \left(\left(8 + \frac{8C_2}{C_1}\sqrt{\log\frac{2T}{\delta}}\right)\frac{\sqrt{\Delta_1\sum_{\ell}L_{\ell}}}{\sqrt{T}} \right. \\
    &\quad\left.+ 
    \frac{2(\sum_{\ell}\bar{\sigma}_{\ell})^2}{\sqrt{\Delta_1\sum_{\ell}L_{\ell}T}} + \left(6 + \frac{8C_2}{C_1}\sqrt{\log\frac{2T}{\delta}}\right)\frac{\sqrt{\sum_{\ell}\bar{\sigma}_{\ell}}(\Delta_1\sum_{\ell}L_{\ell})^{1/4}}{T^{1/4}}\right).
\end{align*}
Replacing $\delta$ with $\delta/2$ completes the proof.
\end{proof}

\section{More experiments}
\subsection{Experiment of Image Classification}
\label{app:image_class}

Following airbench setting in \url{https://github.com/KellerJordan/cifar10-airbench} and \url{https://github.com/LIONS-EPFL/scion/tree/main/examples/airbench}, we evaluate LANTON on CIFAR-100 image classification using an 8-layer convolutional neural network (CNN). Since stochastic gradient descent (SGD) generally outperforms AdamW on vision tasks, we follow the prior airbench setup and apply SGD to the norm and bias parameters for both Muon and D-Muon.
LANTON partitions the parameters into two groups: (1) convolutional layers (matrix parameters), and (2) norm-layer and bias parameters. Newton–Schulz iterations are applied to the convolutional layers, while sign momentum is used for the norm and bias parameters. The full hyperparameter configuration is provided in \cref{tbl:hyperparameter_image}. 

As shown in Figure \ref{fig:result_cifar100}, all optimizers eventually reach nearly $100\%$ training accuracy on airbench CIFAR-100. However, LANTON exhibits a significantly faster convergence rate than other baselines: it reaches almost maximal training accuracy by around 70 epochs. More importantly, LANTON consistently achieves the highest validation accuracy, demonstrating that LANTON not only accelerates optimization throughout the training process but also yields superior generalization performance compared to all baselines.

\begin{figure}[!h]
        \centering
        \includegraphics[width=0.3\linewidth]{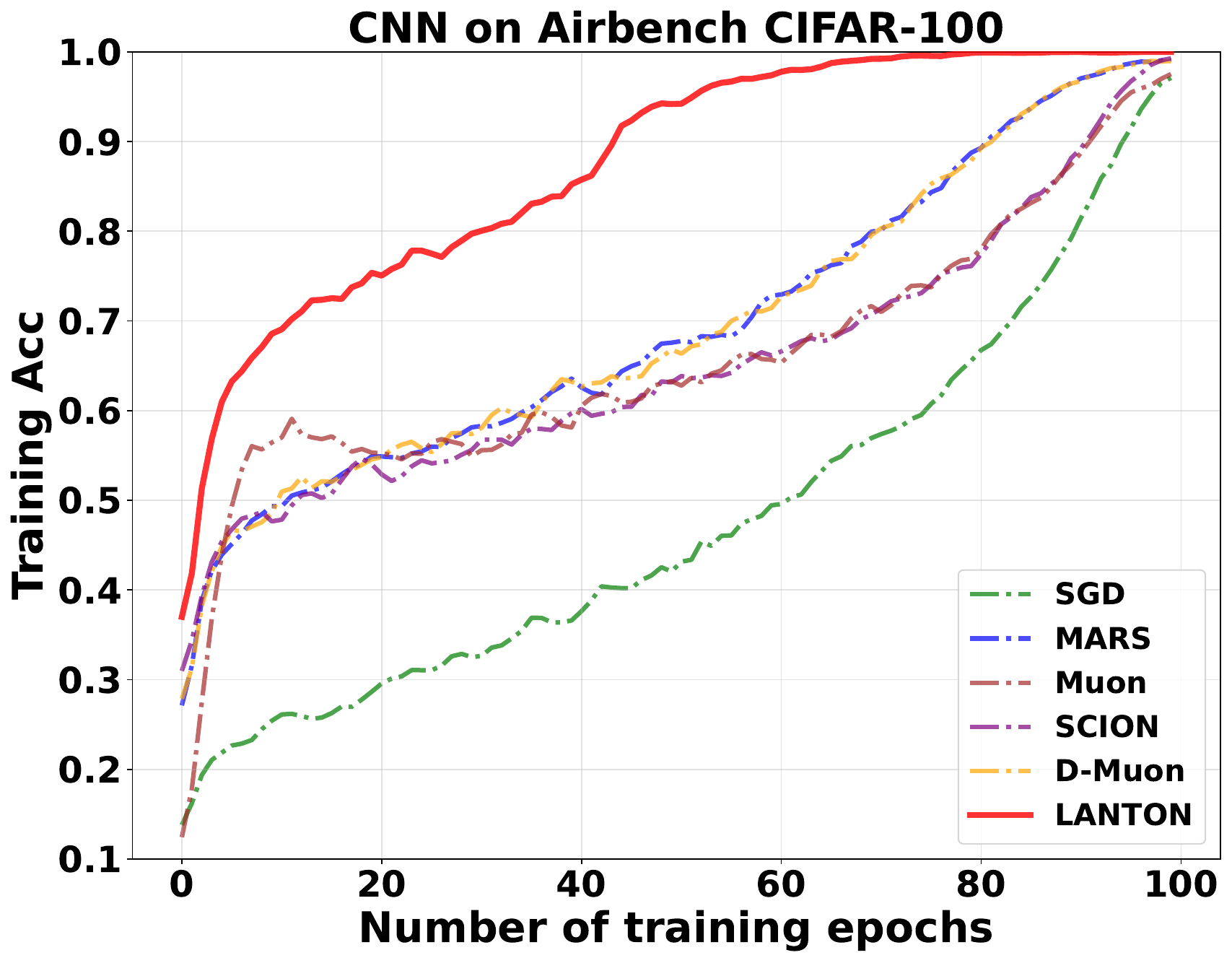}
        \includegraphics[width=0.3\linewidth]{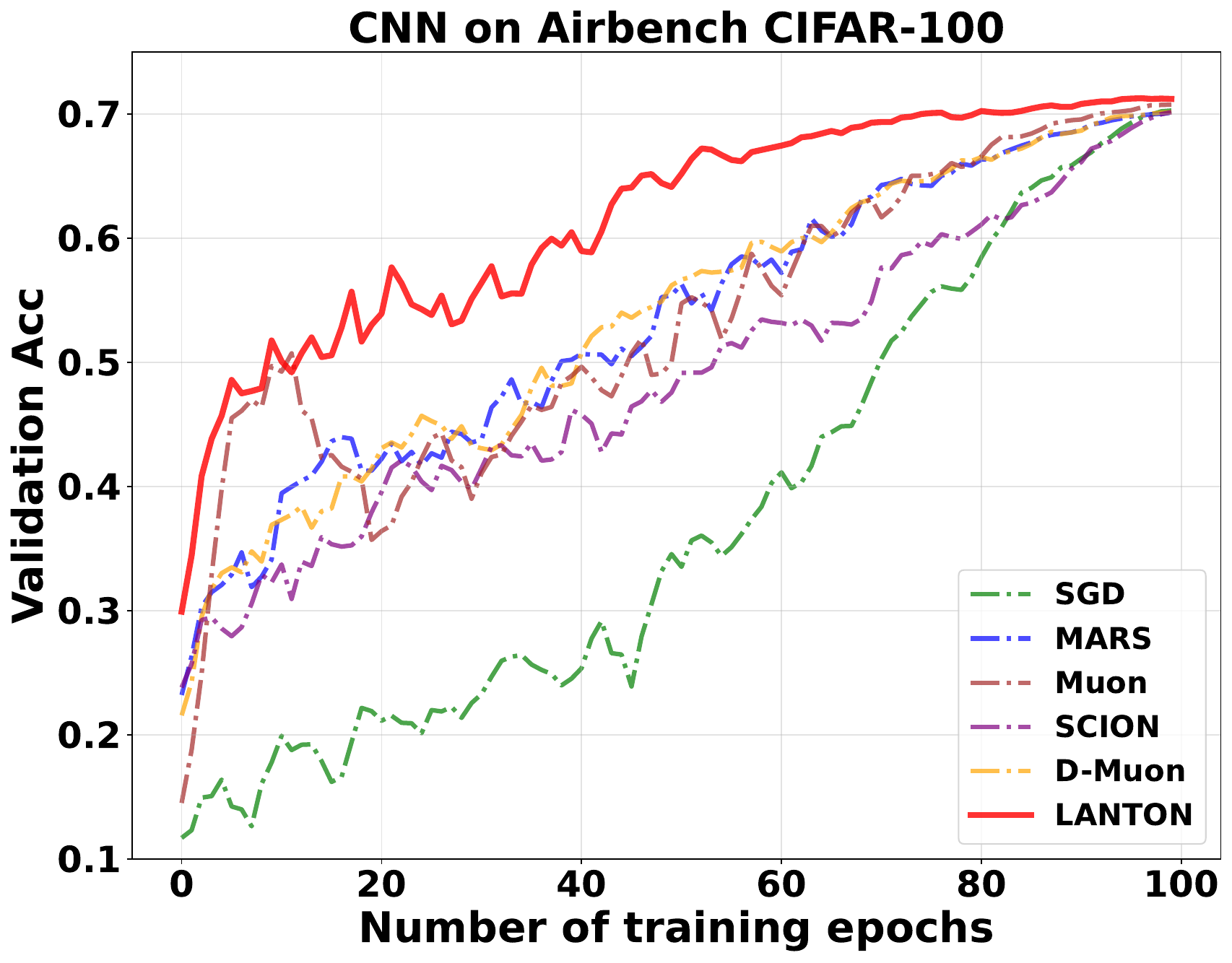}
        \caption{Training/validation accuracy on CIFAR-100.}
        \label{fig:result_cifar100}
\end{figure}

\begin{table}[h]
\centering

\caption{The hyperparameter settings in image classification.}
\label{tbl:hyperparameter_image}
\scalebox{0.8}{
\setlength{\tabcolsep}{6pt}
\renewcommand{\arraystretch}{1.15}
\begin{tabular}{l|ccc}
\toprule
\textbf{Method} & $\eta_{\max}$  & \text{Moment} \\
\hline
\text{SGD}          & $0.1$   & $\beta=0.85$      \\
\text{Muon}           & $0.24$   & $\beta_1=0.6, \beta_2=0.85, \beta_3=0.95$ \\
\text{MARS}           & $0.1$   & $\beta_1=0.9, \beta_2=0.95$       \\
\text{SCION}          & $0.05$   & $\beta=0.5$                     \\
\text{D-Muon}         & $0.1$   & $\beta_1=0.9, \beta_2=0.95, \beta_3=0.95$      \\
\text{LANTON}         & $0.1$   & $\beta_1=0.6, \beta_2=0.85$      \\
\bottomrule
\end{tabular}}
\end{table}

\subsection{Comparison with Adaptive Variant of Muon}
\label{app:adamuon}

We additionally compared our method with the recently proposed adaptive variant AdaMuon \citep{si2025adamuon}. Unlike LANTON, AdaMuon does not perform gradient noise estimation; instead, it introduces a momentum-style adaptive scaling on top of Muon and therefore is not noise-adaptive. 

In our experiments in \cref{fig:comparison_adamuon}, AdaMuon achieves slightly better performance than the original Muon but remains worse than LANTON. This matches our design motivation: LANTON is explicitly gradient noise-adaptive, 
adjusting each layer's learning rate based on its noise level. AdaMuon does not estimate 
noise and only plug a second-momentum term to Muon, providing limited gains.

\begin{figure}[!h]
        \centering
        \includegraphics[width=0.3\linewidth]{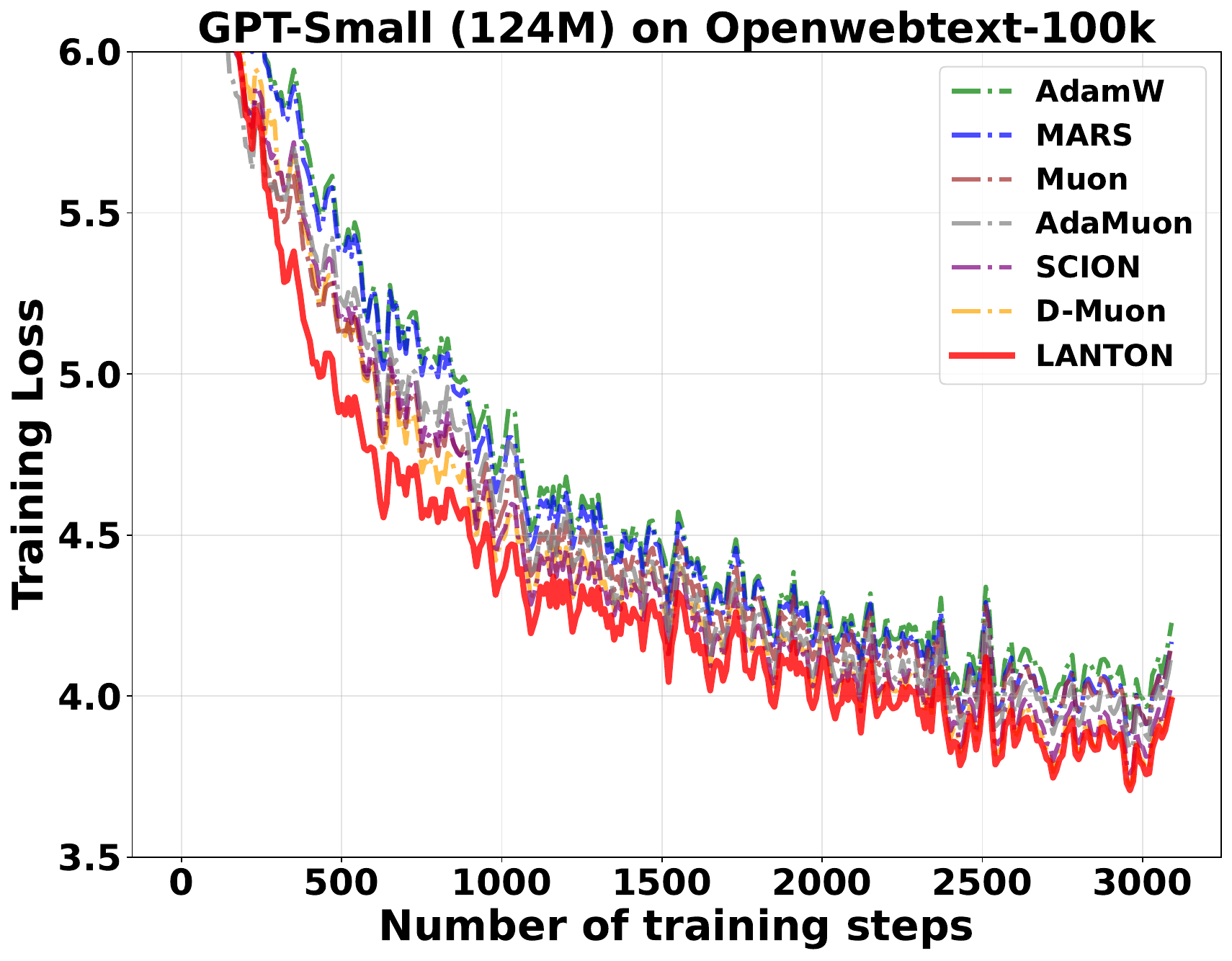}
        \includegraphics[width=0.3\linewidth]{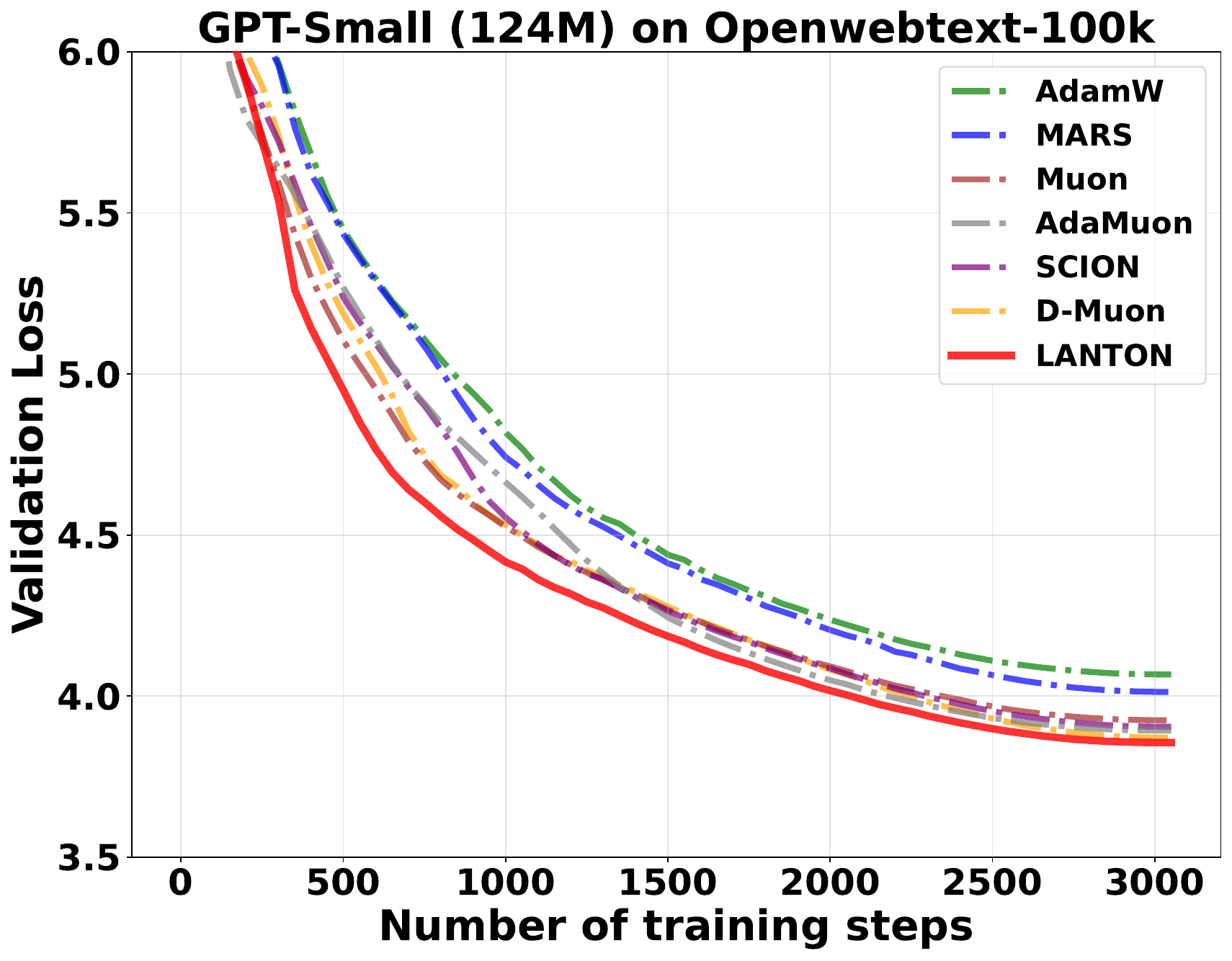}
        \caption{Training and validation loss on Openwebtext-100k.}
        \label{fig:comparison_adamuon}
\end{figure}

\section{Noise Heterogeneity}
\label{app:noise_verify}

\subsection{Implementation Details of \cref{fig:noise_heterogeneity}}
In this section, we provide implementation details of \cref{fig:noise_heterogeneity}. We pretrain LLaMA-1.1B model on C4 dataset for 10k steps, and apply momentum orthogonalized update to the matrix parameters $W_{\ell}\in \mathbb{R}^{d_{\text{out}}\times d_{\text{in}}}$ in the hidden layers (Query, Key, Value, MLP) and AdamW optimizer to the embedding and last layers. We first estimate gradient noise for two parameter groups, formed by matrix shape. For each weight matrix, we compute $\max(d_{\text{out}}, d_{\text{in}})$ and bucket it accordingly. We then aggregate the gradient-noise measure within each bucket over training (e.g., averaging across parameters in the group at each iteration) to obtain group-wise trajectories, which is shown in subfigure \ref{fig:noise_heterogeneity}. Then we measure the layer-wise gradient noise within QK, VO, and MLP layer group in the last three subfigures.   

The stochastic gradient noise is estimated by the nuclear norm (for parameters in Muon optimizer) or $\ell_1\to\ell_1$ operator norm (for parameters in AdamW optimizer) of the difference between the current step's gradient and the previous step's gradient. The implementation follows Option I of line 7 in Algorithm \ref{alg:muon} and line 4 in Table \ref{tbl:lmo}.

\subsection{Noise Magnitude across Different Layer Groups}
We estimate the layer-wise gradient noise within the QK, VO, and MLP layer groups at the midpoint of training (5,000 steps). We find large layer-to-layer disparities within each group, indicating that gradient noise is far from uniform within a group. The statistics is presented in Table \ref{tbl:noise_stat}.
\begin{table}[h]
\centering
\caption{The statistics of stochastic gradient noise in different layer groups of LLaMA.}
\label{tbl:noise_stat}
\scalebox{0.85}{
\setlength{\tabcolsep}{15pt}
\renewcommand{\arraystretch}{1.15}
\begin{tabular}{l|cccc}
\toprule
\textbf{Layer Group} & \textbf{\#Layers} & $\bar{\sigma}$ & $\ubar{\sigma}$ & $\sigma_{\text{mean}}$\\
\hline
\text{QK}            & 44  & 0.026 &0.003 &0.014 \\
\text{VO}            &44 & 0.117 &0.009 & 0.046 \\
\text{MLP}            &66 & 0.107 & 0.018 &0.038 \\
\bottomrule
\end{tabular}
}% end scalebox
\end{table}

\section{Model Configurations}
We pretrain two types of model, GPT2 and LLaMA, the  model configurations are listed in Table \ref{tbl:model_config}.

\begin{table}[h]
\centering
\caption{Model configurations ($d_{\text{model}}$ denotes the hidden dimension, $d_{\text{FF}}$ denotes the feed-forward dimension, and $n_{\text{head}}$ denotes the number of attention head in transformer).}
% \label{tab:model-sizes}
\label{tbl:model_config}
\scalebox{0.85}{
\setlength{\tabcolsep}{15pt}
\renewcommand{\arraystretch}{1.15}
\begin{tabular}{l|cccccc}
\toprule
\textbf{Model} & \textbf{Size} & $d_{\text{model}}$ & $d_{\text{FF}}$ & $n_{\text{head}}$ & \textbf{depth} \\
\hline
GPT-2 (small)     & 124M  & 768  & 3072 & 12 & 12 \\
GPT-2 (medium)     & 355M  & 1024  & 4096 & 16 & 24 \\
LLaMA (0.5B)      & 522M  & 1280 & 5120 & 20 & 15 \\
LLaMA (1.1B)      & 1175M & 2048 & 5632 & 32 & 22 \\
\bottomrule
\end{tabular}
}% end scalebox
\end{table}

\section{Hyperparameter Settings}
\label{app:hyperparameter}
\subsection{Hyperparameter Settings in GPT2 Experiments}

We tune the base learning rate $\eta_{\max}$ for each method via a grid search in the range of $[1\times10^{-4},\,5\times10^{-3}]$. For Muon baseline, we additionally sweep a separate base learning rate for non-hidden (embedding/output) layers. All runs use cosine decay from $\eta_{\max}$ down to $\eta_{\min}=0.0$. Muon and D-Muon use three momentum hyperparameters: $(\beta_1,\beta_2)$ for the AdamW auxiliary optimizer and $\beta_3$ for orthogonalized momentum updates. LANTON uses two momentum parameters: $\beta_1$ for the gradient momentum and $\beta_2$ for the gradient noise momentum. All LMO-based methods (SCION, D-Muon, LANTON) apply layer-group learning-rate scaling; for SCION and D-Muon we adopt the best tuned scales reported in their original papers. All the hyperparameter settings are summarized in Table \ref{tbl:hyperparameter_gpt_s} and \ref{tbl:hyperparameter_gpt_m}.

\begin{table}[h]
\centering
\caption{The hyperparameter settings in GPT2-Small experiments.}
\label{tbl:hyperparameter_gpt_s}
\scalebox{0.8}{
\setlength{\tabcolsep}{6pt}
\renewcommand{\arraystretch}{1.15}
\begin{tabular}{l|cccc}
\toprule
\textbf{Method} & $\eta_{\max}$  & \text{Moment} & $\text{Scale}$\\
\hline
\text{AdamW}          & $1\times10^{-4}$   & $\beta_1=0.9, \beta_2=0.95$      & - \\
\text{Muon}           & $(3\times10^{-3}, 3\times10^{-4})$   & $\beta_1=0.9, \beta_2=0.95, \beta_3=0.95$ & - \\
\text{MARS}           & $1\times10^{-3}$   & $\beta_1=0.9, \beta_2=0.95$      & -  \\
\text{SCION}          & $3\times10^{-4}$   & $\beta=0.9$                      &$r_1=50, r_2=3000$ \\
\text{D-Muon}         & $1\times10^{-3}$   & $\beta_1=0.9, \beta_2=0.95, \beta_3=0.95$      &  $r=0.2$ \\
\text{LANTON}         & $5\times10^{-3}$   & $\beta_1=0.95, \beta_2=0.9$      &  $r_1=300, r_2=1.0$ \\
\bottomrule
\end{tabular}
}% end scalebox
\end{table}

\begin{table}[h]
\centering
\caption{The hyperparameter settings in GPT2-Medium experiments.}
\label{tbl:hyperparameter_gpt_m}
\scalebox{0.8}{
\setlength{\tabcolsep}{6pt}
\renewcommand{\arraystretch}{1.15}
\begin{tabular}{l|cccc}
\toprule
\textbf{Method} & $\eta_{\max}$  & \text{Moment} & $\text{Scale}$\\
\hline
\text{AdamW}          & $1\times10^{-4}$   & $\beta_1=0.9, \beta_2=0.95$      & - \\
\text{Muon}           & $(3\times10^{-3}, 3\times10^{-4})$   & $\beta_1=0.9, \beta_2=0.95, \beta_3=0.95$ & - \\
\text{MARS}           & $1\times10^{-3}$   & $\beta_1=0.9, \beta_2=0.95$      & -  \\
\text{SCION}          & $2\times10^{-4}$   & $\beta=0.9$                      &$r_1=50, r_2=3000$ \\
\text{D-Muon}         & $5\times10^{-4}$   & $\beta_1=0.9, \beta_2=0.95, \beta_3=0.95$      &  $r=0.2$ \\
\text{LANTON}         & $3\times10^{-3}$   & $\beta_1=0.95, \beta_2=0.9$      &  $r_1=300, r_2=1.0$ \\
\bottomrule
\end{tabular}}
\end{table}

\subsection{Hyperparameter Settings in LLaMA Experiments}
% We use grid search to tune the best base learning rate $\eta_{\max}$ for each algorithm within the range of $\{1\times 10^{-4}, 3\times 10^{-4}, 5\times 10^{-4}, 3\times 10^{-3}, 5\times 10^{-3} \}$. Muon algorithm has one more learning rate for non-hidden layers. The minimal decayed learning rate is set as $\eta_{\min} = 1/10 \eta_{\max} $.  Muon and D-Muon has three momentum parameters, where $\beta_1$ and $\beta_2$ is used in AdamW optimizer and $\beta_3$ is used for hidden layers. LANTON has two momentum parameters, $\beta_1$ is used for gradient and $\beta_2$ is used for gradient noise. LMO-based algorithms (SCION, D-Muon and LANTON) scale base learning rates for different layer group. For SCION and D-Muon, we set the scale parameters following the best tuned ones mentioned in their original paper.

The best base learning rate for each algorithm is grid searched  over $\{1\times10^{-4},\,3\times10^{-4},\,5\times10^{-4}, \,  8\times10^{-4}, \, 1\times10^{-3},\,3\times10^{-3},\,5\times10^{-3}\}$. The decayed layer rate is set as $\eta_{\min}=1/10 \eta_{\max}$ on C4 and $\eta_{\min}=1/20 \eta_{\max}$ on Minipile. We keep the momentum and scale parameters as that in GPT2 experiments. The hyperparameter choices on C4 and Minipile are summarized in  \cref{tbl:hyperparameter_c4,tbl:hyperparameter_pile}, respectively.
 
\begin{table}[h]
\centering
\caption{The hyperparameter settings on C4.}
\label{tbl:hyperparameter_c4}

\scalebox{0.8}{
\setlength{\tabcolsep}{6pt}
\renewcommand{\arraystretch}{1.15}
\begin{tabular}{l|ccccc}
\toprule
\textbf{Method} & $\eta_{\max}$ & $\eta_{\min}$ & \text{Moment} & $\text{Scale}$\\
\hline
\text{AdamW}          & $3\times10^{-4}$ & $3\times10^{-5}$  & $\beta_1=0.9, \beta_2=0.95$      & - \\
\text{Muon}           & $(5\times10^{-3}, 3\times10^{-4})$   & $(5\times10^{-4}, 3\times10^{-5})$ & $\beta_1=0.9, \beta_2=0.95, \beta_3=0.95$ & - \\
\text{MARS}           & $1\times10^{-3}$ & $1\times10^{-4}$  & $\beta_1=0.9, \beta_2=0.95$      & -  \\
\text{SCION}          & $5\times10^{-4}$ & $5\times10^{-5}$  & $\beta=0.9$                      &$r_1=50, r_2=3000$ \\
\text{D-Muon}         & $5\times10^{-3}$ & $5\times10^{-4}$  & $\beta_1=0.9, \beta_2=0.95, \beta_3=0.95$      &  $r=0.2$ \\
\text{LANTON}         & $5\times10^{-3}$ & $5\times10^{-4}$  & $\beta_1=0.95, \beta_2=0.9$      &  $r_1=300, r_2=1.0$ \\
\bottomrule
\end{tabular}
}% end scalebox
\end{table}

\begin{table}[!h]
\centering
\caption{The hyperparameter settings on Minipile.}
\label{tbl:hyperparameter_pile}
\scalebox{0.8}{
\setlength{\tabcolsep}{6pt}
\renewcommand{\arraystretch}{1.15}
\begin{tabular}{l|ccccc}
\toprule
\textbf{Method} & $\eta_{\max}$ & $\eta_{\min}$ & \text{Moment} & $\text{Scale}$\\
\hline
\text{AdamW}          & $8\times10^{-4}$ & $4\times10^{-5}$  & $\beta_1=0.9, \beta_2=0.95$      & - \\
\text{Muon}           & $(5\times10^{-3}, 5\times10^{-4})$   & $(2.5\times10^{-4}, 2.5\times10^{-5})$ & $\beta_1=0.9, \beta_2=0.95, \beta_3=0.95$ & - \\
\text{MARS}           & $1\times10^{-3}$ & $5\times10^{-5}$  & $\beta_1=0.9, \beta_2=0.95$      & -  \\
\text{SCION}          & $5\times10^{-4}$ & $2.5\times10^{-5}$  & $\beta=0.9$                      &$r_1=50, r_2=3000$ \\
\text{D-Muon}         & $5\times10^{-3}$ & $2.5\times10^{-4}$  & $\beta_1=0.9, \beta_2=0.95, \beta_3=0.95$      &  $r=0.2$ \\
\text{LANTON}         & $5\times10^{-3}$ & $2.5\times10^{-4}$  & $\beta_1=0.95, \beta_2=0.9$      &  $r_1=300, r_2=1.0$ \\
\bottomrule
\end{tabular}
}% end scalebox
\end{table}

\newpage
\section{Robustness} %\textcolor{red}{Combine I and L}
\label{app:robustness}
\subsection{Base Learning Rate Choice}

The training and validation loss curves with different base learning rates are presented in Figure \ref{fig:robustness}.

\begin{figure}[!h]
        \centering
        \includegraphics[width=0.3\linewidth]{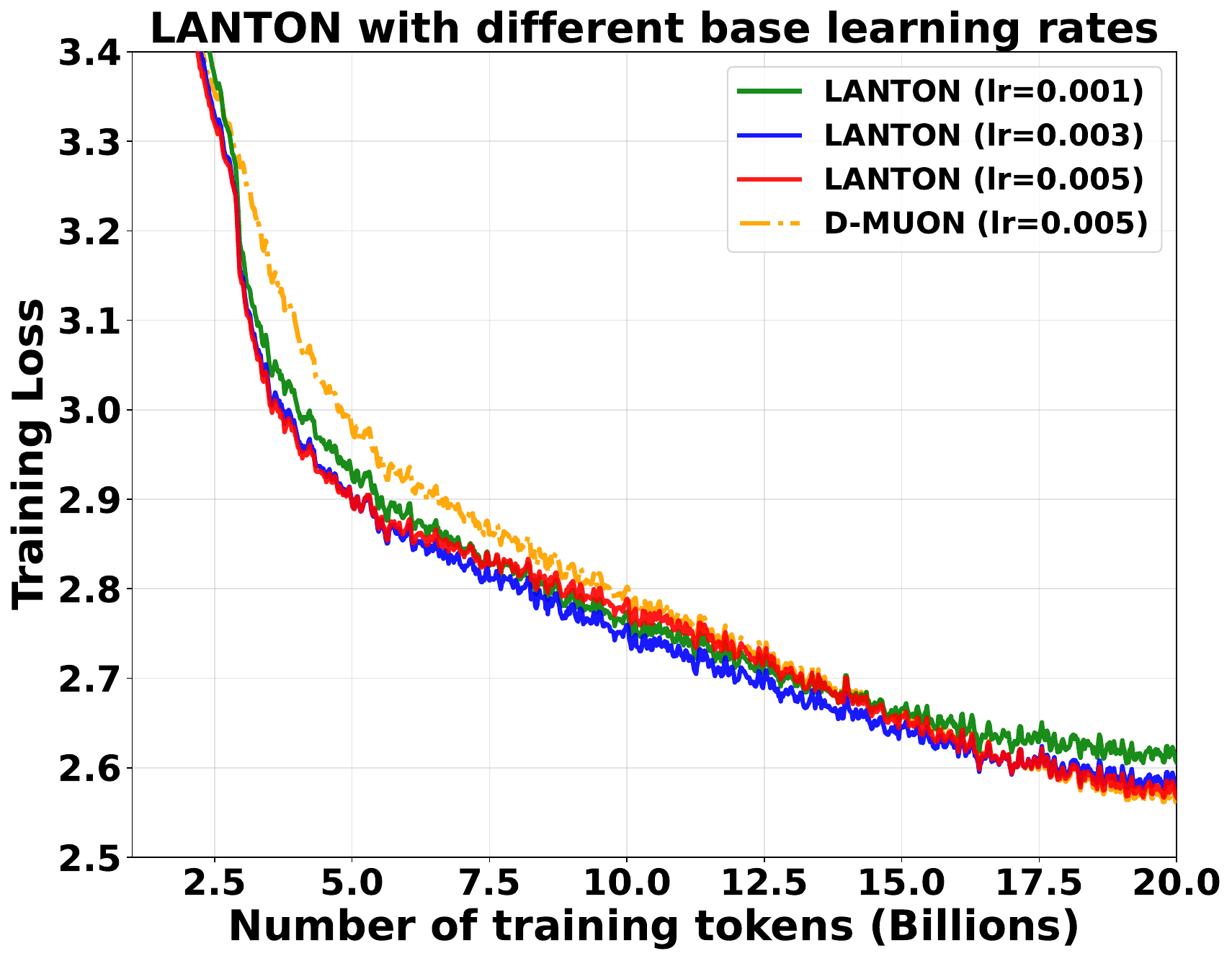}
        \includegraphics[width=0.3\linewidth]{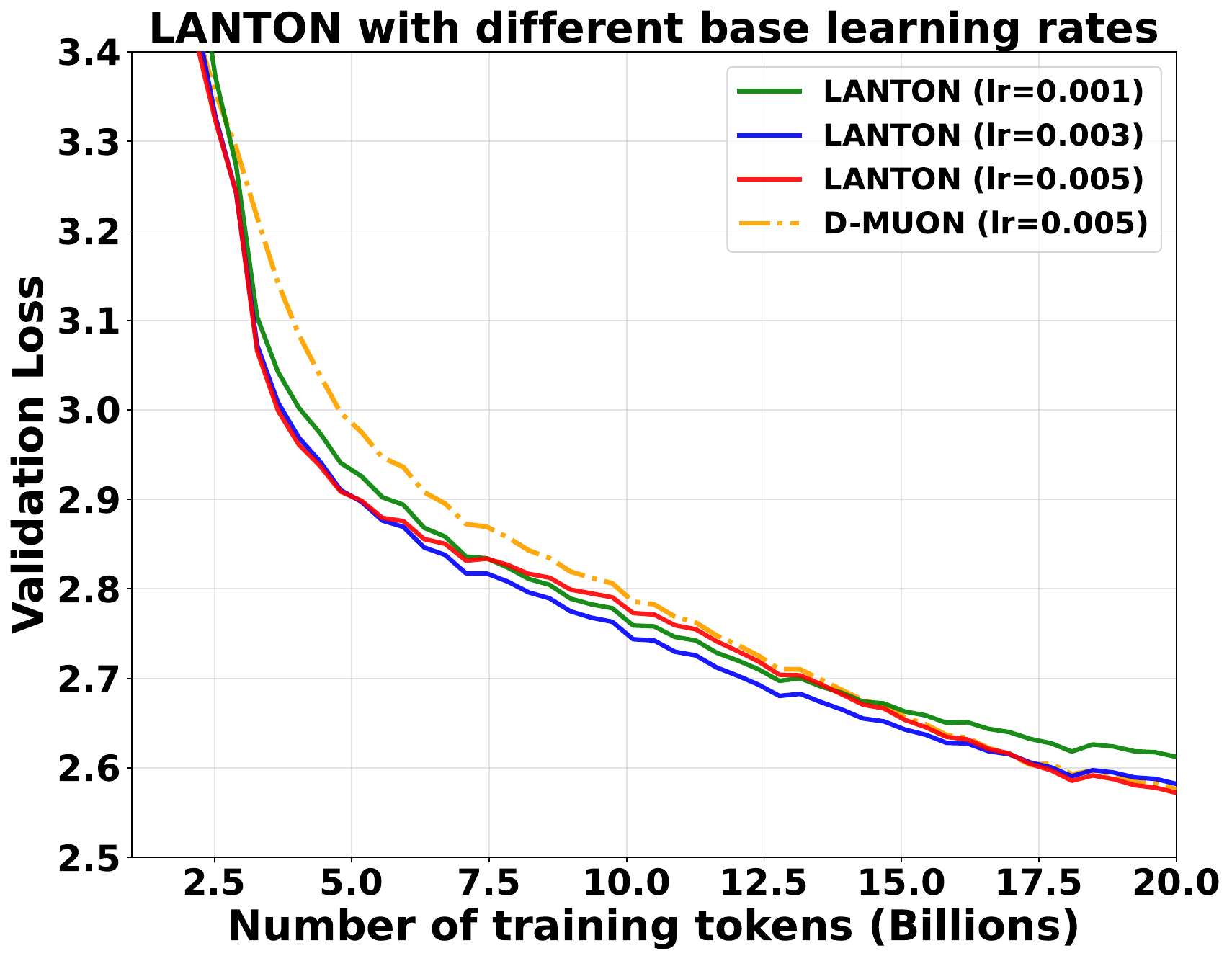}
        \caption{LANTON is robust to the choices of base learning rates.}
        \label{fig:robustness}
\end{figure}

\subsection{Robustness to Batch Size}
\label{app:ablation_bs}

To assess the influence of batch size on stochastic gradient variance estimation, we trained GPT (124M) models on openwebtext-100k with batch sizes $\text{BS} = \{8, 16, 32, 48, 64\}$ for one epoch (the number of training tokens is fixed to 46 million). For each batch size, we independently tuned the learning rate to its best-performing values ($1.0\times 10^{-2}$ for BS=8, $5.0\times 10^{-3}$ for other BS settings), ensuring a fair comparison across different settings. As shown in training loss curve in \cref{fig:ablation_bs}, smaller batches yield noisier trajectories while larger batches produce smoother curves, yet all settings converge to nearly the same final training and validation loss (approximately 4.0).

These results demonstrate that our method is highly robust to batch-size variation: the convergence behavior and final performance are reasonably good and consistent across a wide range of batch sizes. Among the configurations, $\text{BS}=16$ provides the best model performance, which is used in the main experimental settings.

\begin{figure}[!h]
        \centering
        \includegraphics[width=0.3\linewidth]{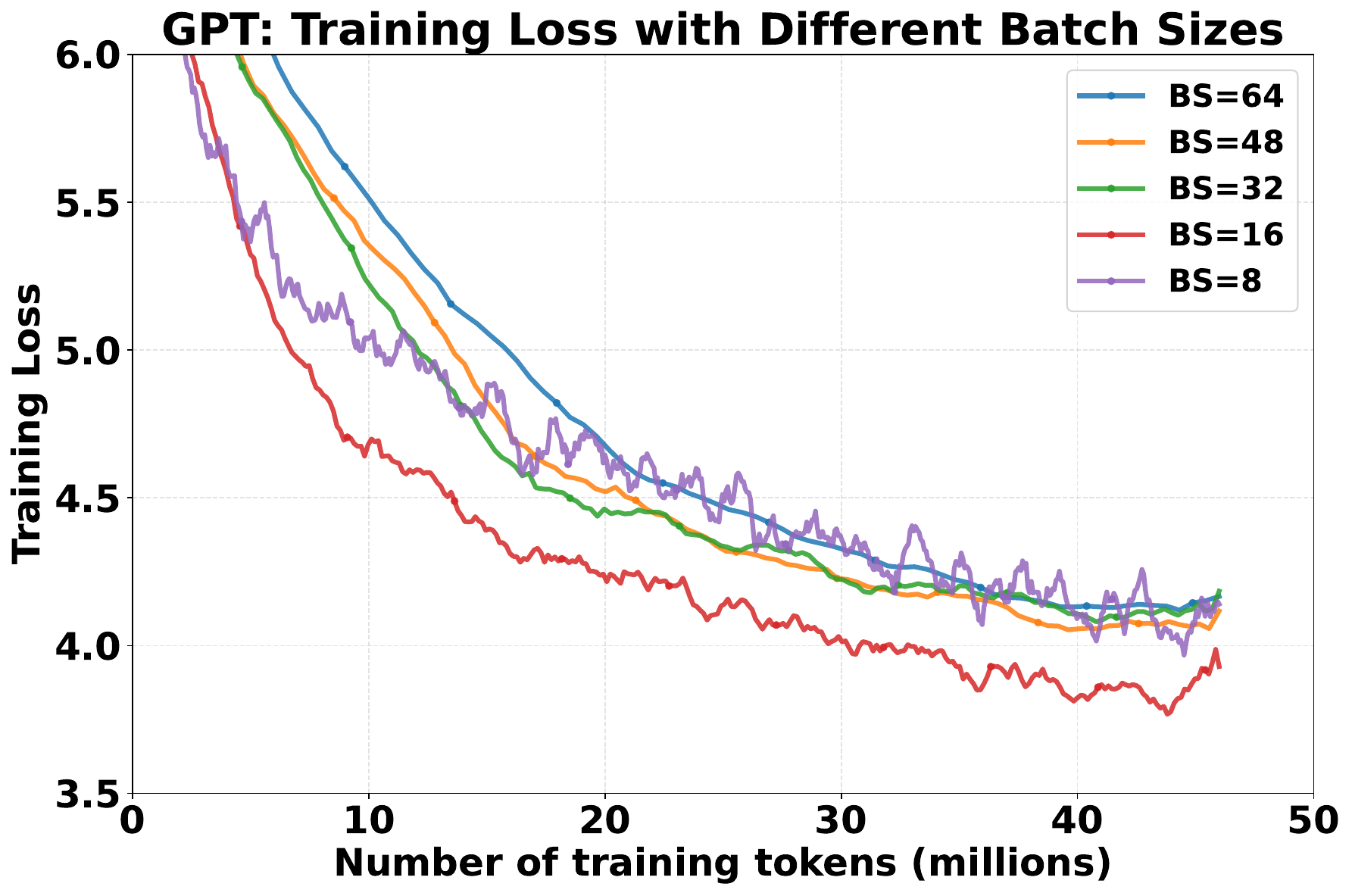}
        \includegraphics[width=0.3\linewidth]{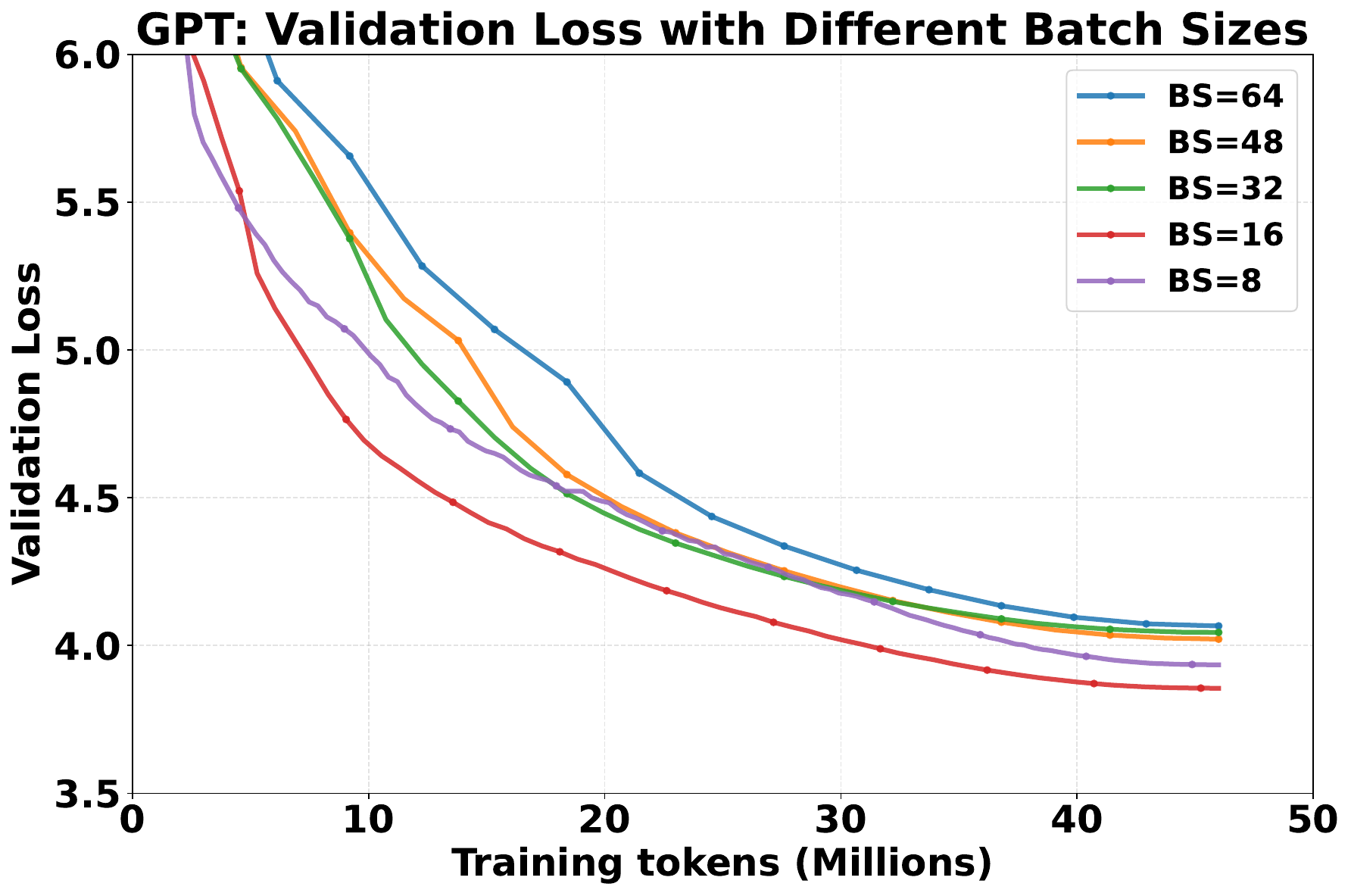}
        \caption{Training and validation loss vs. batch sizes (BS).}
        \label{fig:ablation_bs}
\end{figure}

% \section{Running Time \textcolor{red}{Replace section 6.4}}
% \label{app:running_time}
% \input{icml2026/appendix/running_time}

\section{Sample Efficiency with Fixed Token Budget}
\label{app:sample_eff}
To study the sample efficiency of our algorithm under various token budgets, we double the budget of tokens for D-Muon (i.e., $40$B tokens) as that in LANTON  (i.e., $20$B tokens), and keep other experimental settings the same as that in Section \ref{sec:llama_exp}, including the base learning rate, scale hyperparameters and batch size. Both algorithms use cosine learning rate decay, but the difference is that D-Muon has $2\times$ total training steps since it has $2\times$ more training tokens. Figure \ref{fig:sample_eff} shows that D-Muon and LANTON reach comparable training/validation losses when D-Muon uses about $1.5\times$ more tokens than LANTON (i.e., $30$B tokens for D-Muon and $20$B tokens for LANTON for reaching $\sim2.57$ loss), demonstrating that the noise-adaptive learning rates can improve sample efficiency.

% \begin{figure*}[t]
%     \centering
%     \subfigure[Comparison of sample efficiency.]{
%     \includegraphics[width=0.25\linewidth]{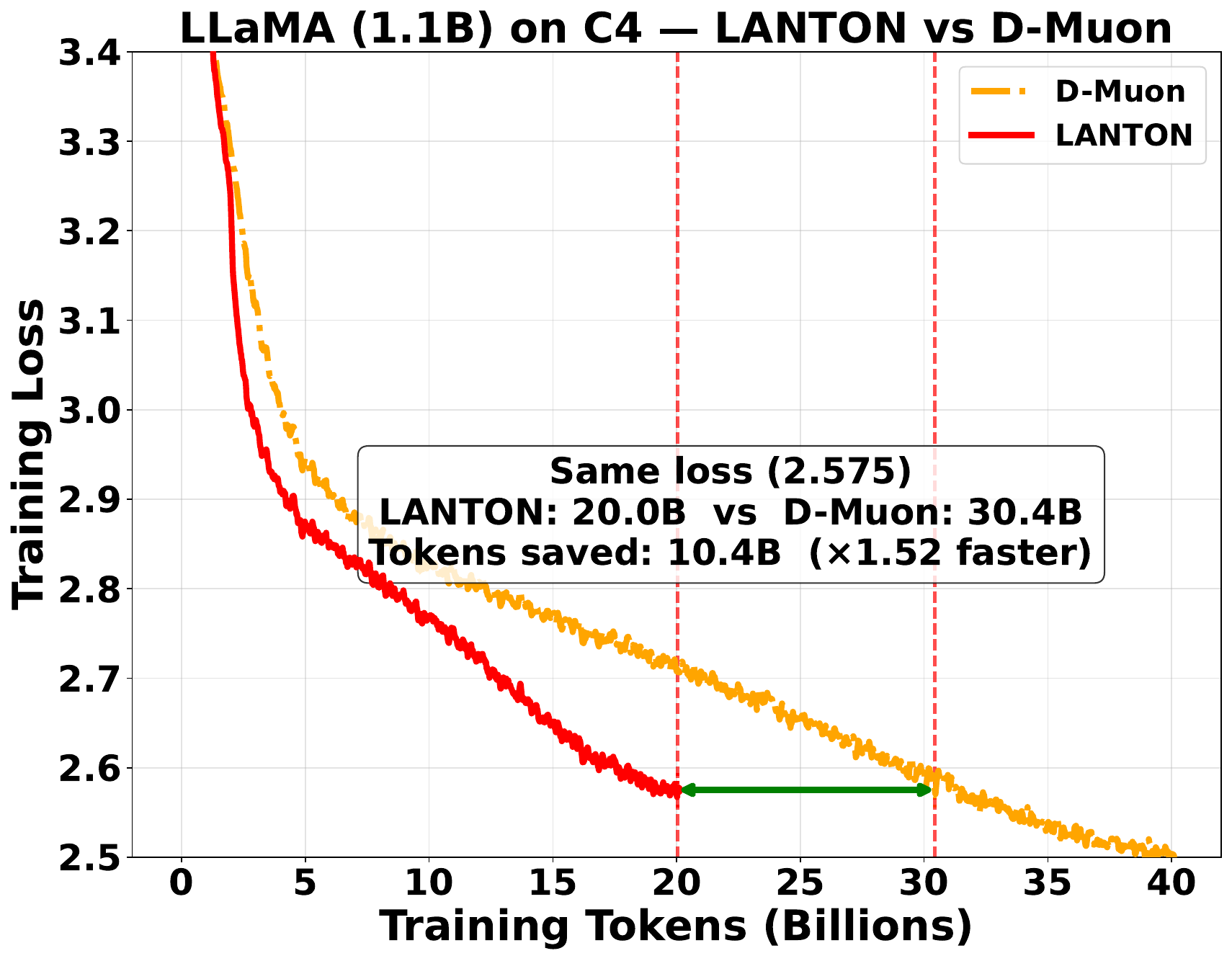}
%     \includegraphics[width=0.25\linewidth]{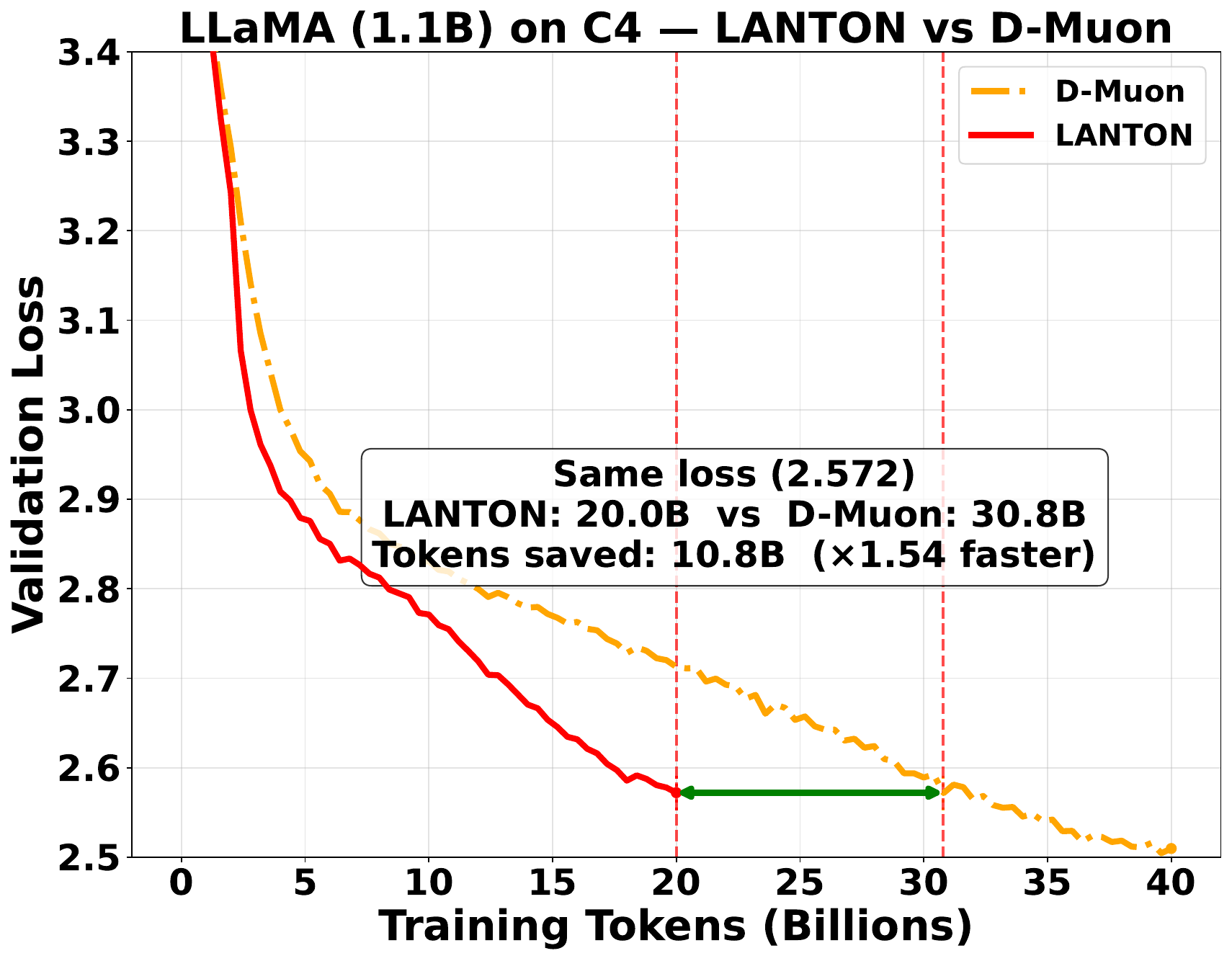}}
%   \label{fig:sample_eff}
%   \subfigure[Running time on 1.1B model.]{
%   \includegraphics[width=0.24\linewidth]{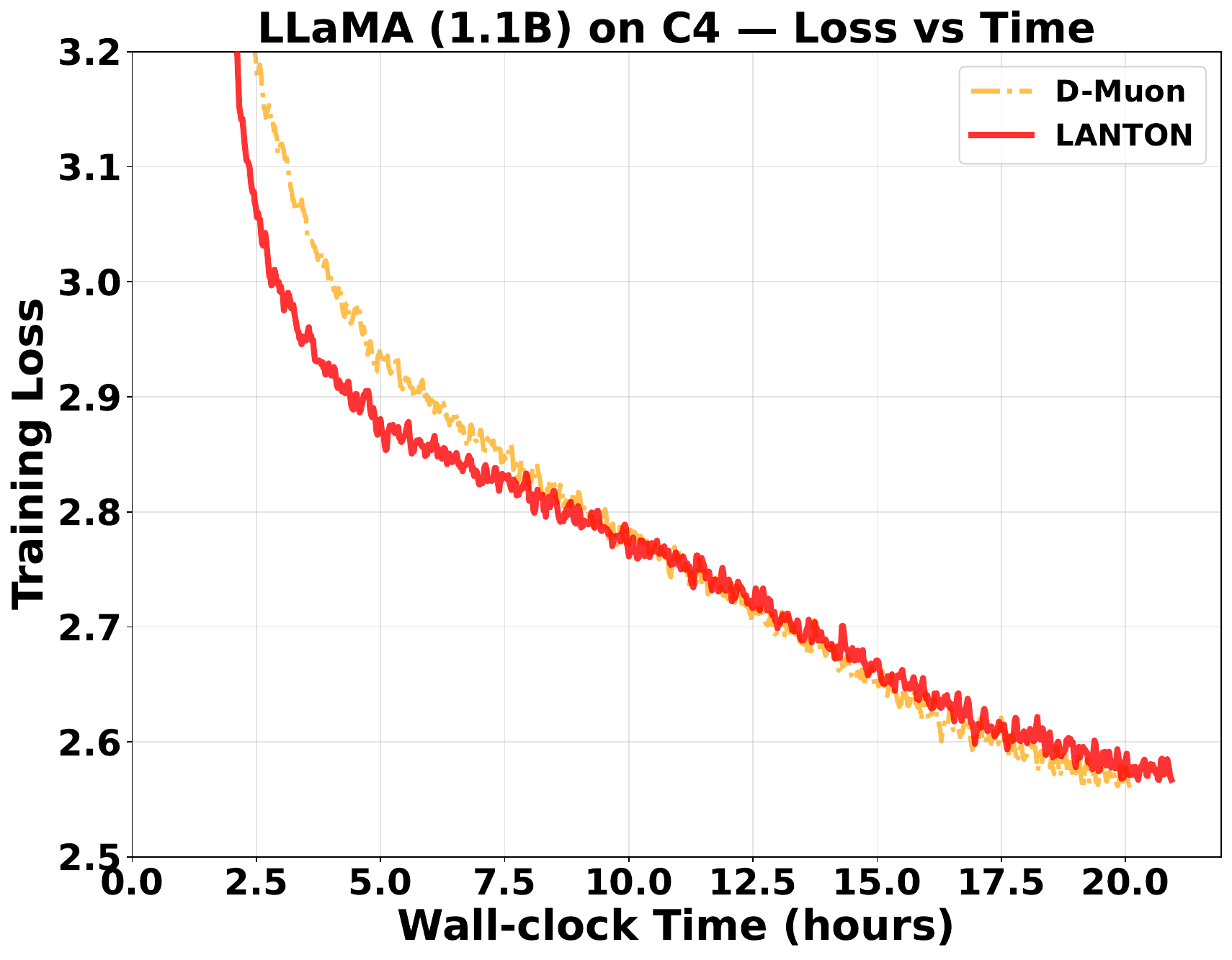}
%   \includegraphics[width=0.24\linewidth]{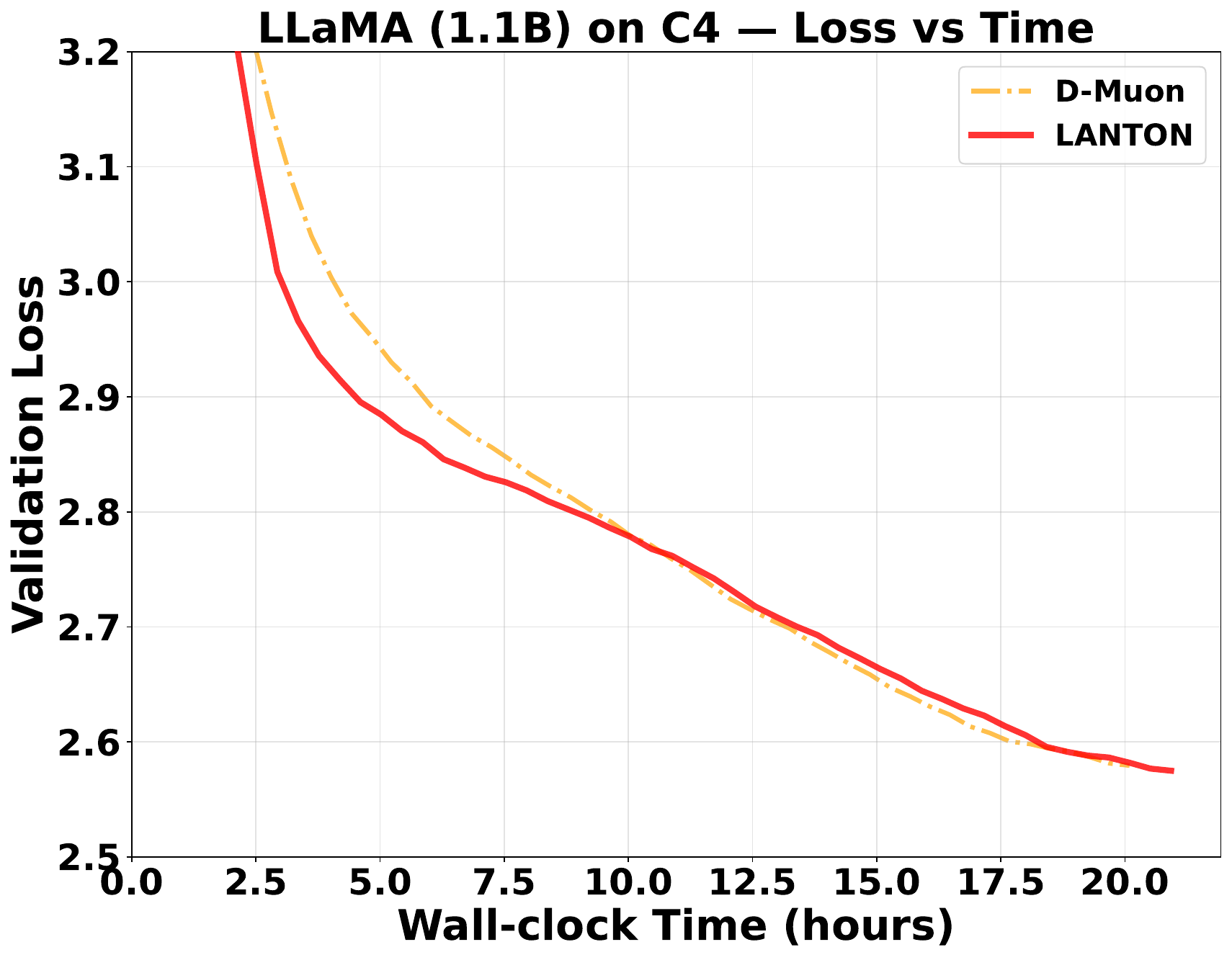}}
%   \label{fig:running_time_1b}
% \end{figure*}

\begin{figure*}[t]
    \centering

    \begin{subfigure}{0.48\linewidth}
        \centering
        \includegraphics[width=0.48\linewidth]{fig/speed_train.pdf}
        \includegraphics[width=0.48\linewidth]{fig/speed_val.pdf}
        \caption{Comparison of sample efficiency.}
        \label{fig:sample_eff}
    \end{subfigure}
    \hfill
    \begin{subfigure}{0.48\linewidth}
        \centering
        \includegraphics[width=0.48\linewidth]{fig/train_loss_vs_time.pdf}
        \includegraphics[width=0.48\linewidth]{fig/val_loss_vs_time.pdf}
        \caption{Running time on the 1.1B model.}
        \label{fig:running_time_1b}
    \end{subfigure}
    \caption{Sample efficiency and runtime comparison of \textsc{LANTON} and baselines.}
    \label{fig:overall_runtime}
\end{figure*}

\begin{table}[h]
\centering
\caption{The comparison of running time (LLaMA 1.1B).}
\label{tbl:running_time}
\scalebox{0.8}{
\setlength{\tabcolsep}{6pt}
\renewcommand{\arraystretch}{1.15}
\begin{tabular}{l|cc}
\toprule
\textbf{Method} & Time (second)/10 steps  & Total running time (hours)\\
\hline
\text{AdamW}          & $64.55$        & $18.53$ \\
\text{Muon}           & $69.62$        & $19.96$ \\
\text{MARS}           & $69.01$        & $19.78$ \\
\text{SCION}          & $71.53$        & $20.49$ \\
\text{D-Muon}         & $70.07$        & $20.08$ \\
\text{LANTON}         & $73.08$        & $20.92$ \\
\bottomrule
\end{tabular}
}% end scalebox
\end{table}

% \begin{figure}[!h]
%         \centering
%         \subfigure[Running time on 1.1B model.]{\includegraphics[width=0.24\linewidth]{fig/train_loss_vs_time.pdf}
%         \includegraphics[width=0.24\linewidth]{fig/val_loss_vs_time.pdf}\label{fig:running_time_1b}}
%         \subfigure[Running time on 2B model.]{\includegraphics[width=0.24\linewidth]{fig/train_loss_vs_time_2B.pdf}
%         \includegraphics[width=0.24\linewidth]{fig/val_loss_vs_time_2B.pdf}\label{fig:running_time_2b}}
%         \label{fig:running_time}
% \end{figure}

\section{Evolution of Effective Learning Rate}
\label{app:lr_stats}

The early-stage speedup arises because gradient noise varies significantly across layers at the beginning of training. As shown in \cref{fig:lr_stats}, the hidden layers (in subfigure (a)) start with an averaged effective learning-rate mean of $0.0028$ and a standard deviation of $0.0007$, indicating notable layer-wise differences that LANTON can exploit to accelerate optimization in the early stage.
By the end of training, cosine decay drives all learning rates toward very small values, and the hidden-layer learning rates converge to a mean of $0.00016$ with a much smaller standard deviation of $0.00008$. The reduced variance shows that layerwise learning rates become nearly uniform in the later stage of the training, and therefore layerwise learning rate is equivalent to using the same learning rate in the same group and the benefit diminishes.

Importantly, LANTON achieves faster early loss descent while still reaching comparable or better final performance, demonstrating that its advantage to accelerate training with noise-adaptive layer-wise learning rates.

% \textcolor{red}{($\star\star$) curve (x: iteration, y: std): lrs' std in group}?

% \begin{figure}[!h]
%         \centering
%         \subfigure[]{\includegraphics[width=0.3\linewidth]{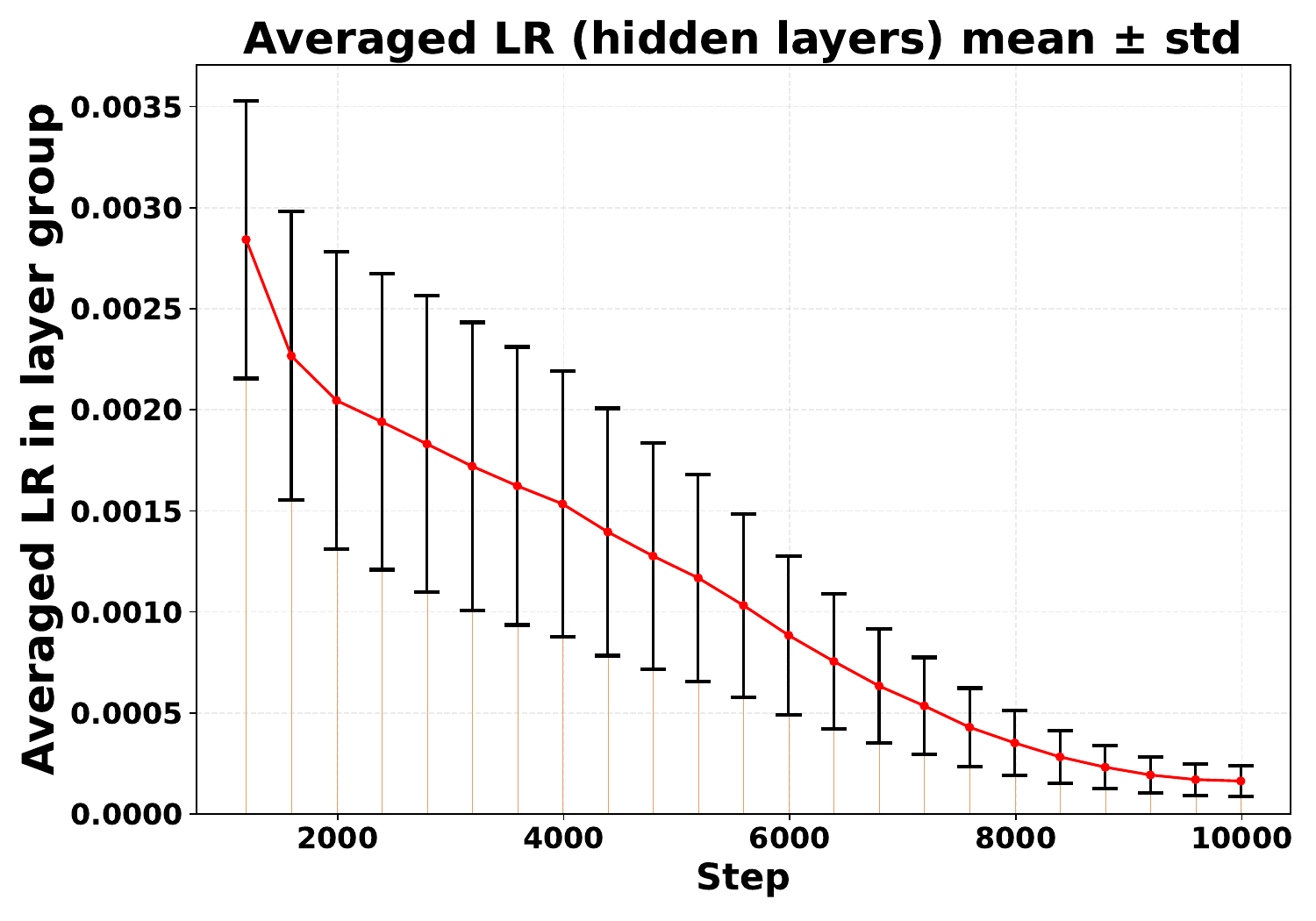}}
%         \subfigure[]{\includegraphics[width=0.3\linewidth]{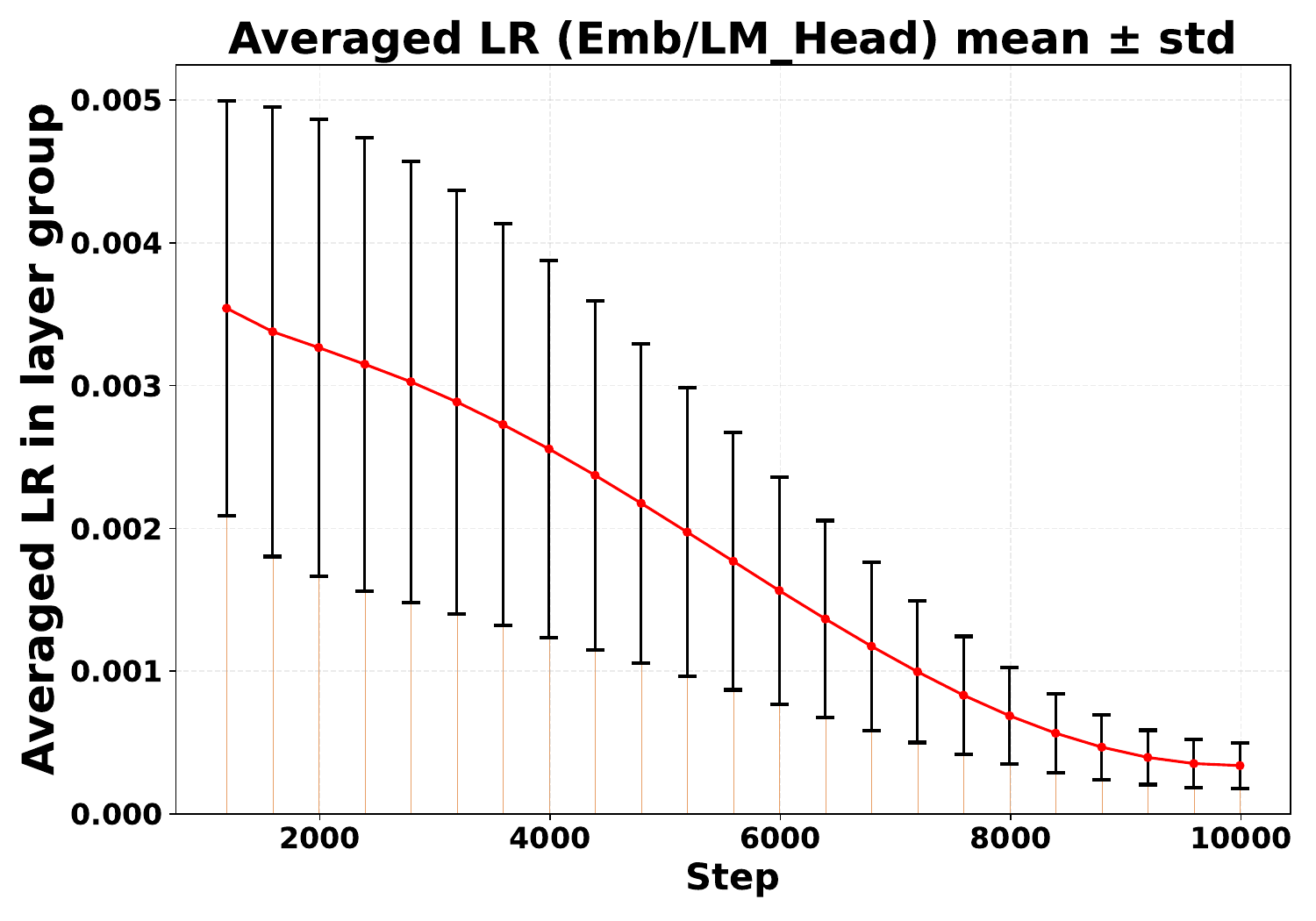}}
%         \subfigure[]{\includegraphics[width=0.3\linewidth]{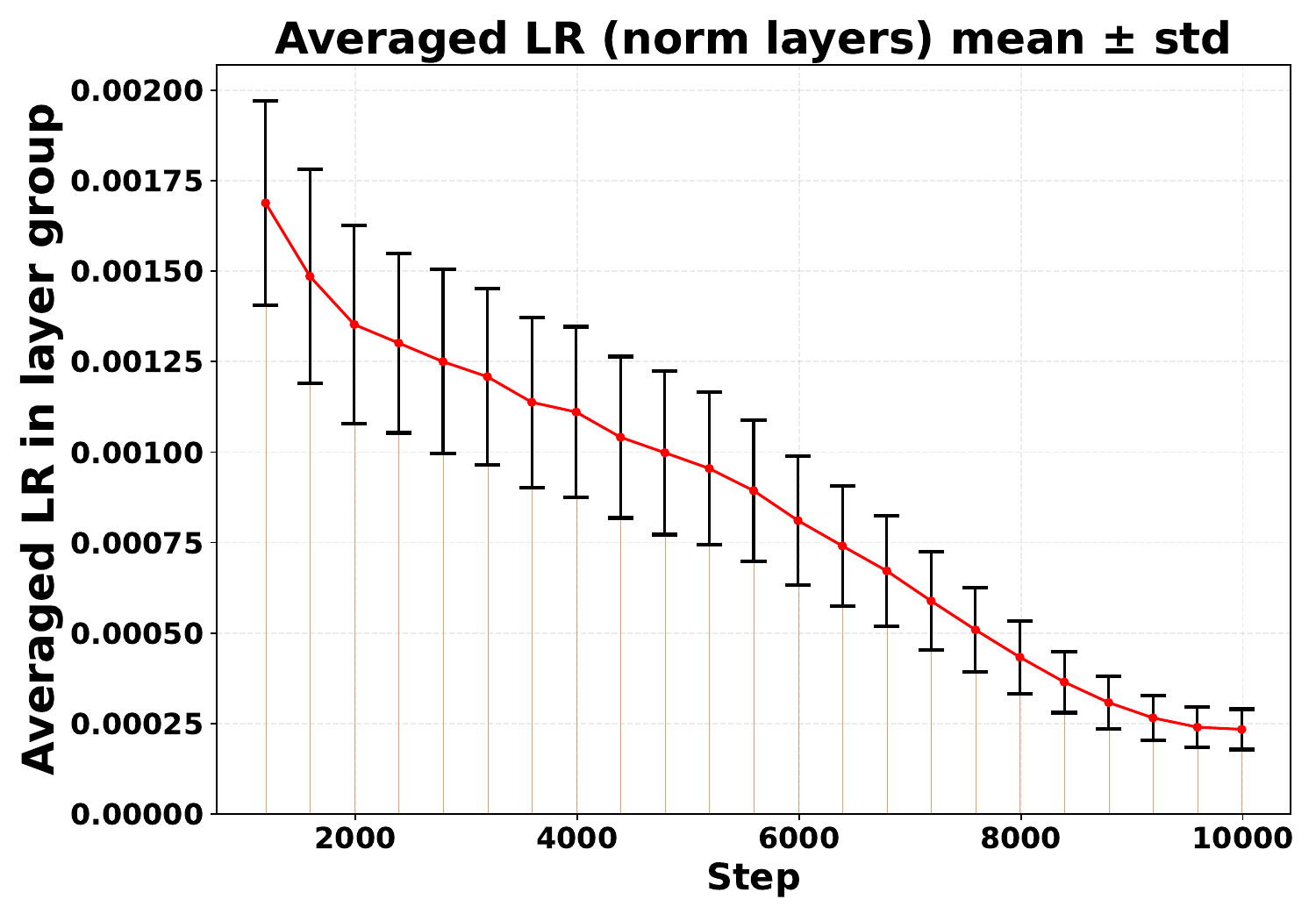}}
%         \caption{The statistics of learning rate in 3 layer groups: (a) start: $0.0028\pm0.0007$, end: $0.00016\pm0.00008$; (b) start: $0.0035\pm0.0015$, end: $0.00034\pm0.00016$; (c) start: $0.0017\pm0.0003$, end: $0.0002\pm0.00006$. }
%         \label{fig:lr_stats}
% \end{figure}

\begin{figure}[t]
  \centering

  \begin{subfigure}{0.3\linewidth}
    \centering
    \includegraphics[width=\linewidth]{fig/muon_param_effective_lr_bar.pdf}
    \caption{}
  \end{subfigure}
  \hfill
  \begin{subfigure}{0.3\linewidth}
    \centering
    \includegraphics[width=\linewidth]{fig/embedding_effective_lr_bar.pdf}
    \caption{}
  \end{subfigure}
  \hfill
  \begin{subfigure}{0.3\linewidth}
    \centering
    \includegraphics[width=\linewidth]{fig/norm_param_effective_lr_bar.pdf}
    \caption{}
  \end{subfigure}

  \caption{The statistics of learning rate in three layer groups:
  (a) start: $0.0028\pm0.0007$, end: $0.00016\pm0.00008$;
  (b) start: $0.0035\pm0.0015$, end: $0.00034\pm0.00016$;
  (c) start: $0.0017\pm0.0003$, end: $0.0002\pm0.00006$.}
  \label{fig:lr_stats}
\end{figure}

\section{Gradient Noise Estimation: Option I vs. Option II}
\label{app:ablation_option}

We compared the performance of Options 1 and 2 in \cref{alg:muon}. As described in line~7, our main experiments use Option 1. For Option~2, estimating gradient noise requires two independent mini-batches per iteration; therefore, under a fixed one-epoch budget, Option~2 performs only half as many optimization steps as Option~1.

Figure \ref{fig:ablation_options} reports the training and validation curves for both settings. With the same one-epoch budget, Option~1 achieves much lower final training and validation loss than Option~2 because it performs more gradient updates.

\begin{figure}[!h]
        \centering
        \includegraphics[width=0.3\linewidth]{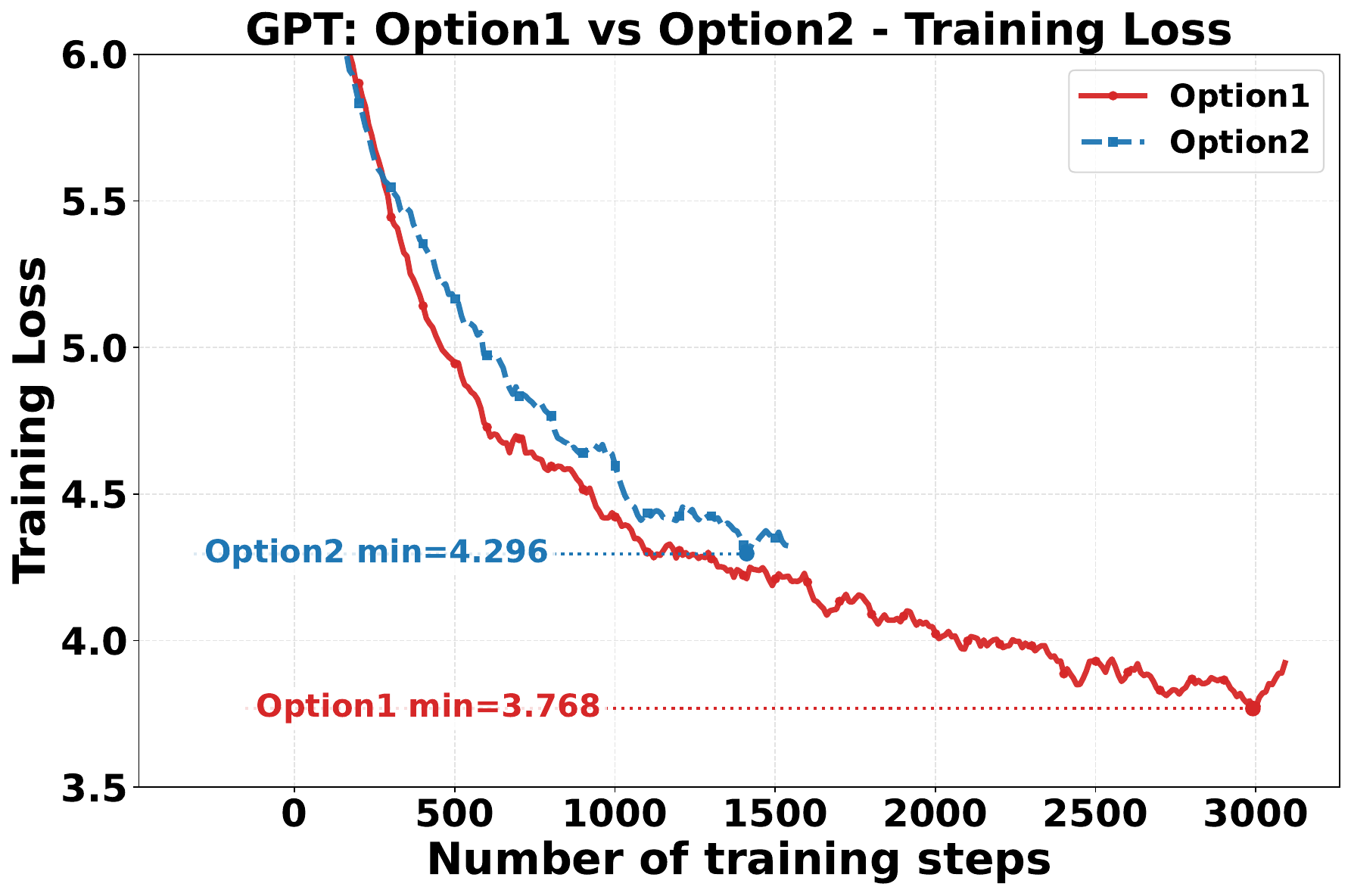}
        \includegraphics[width=0.3\linewidth]{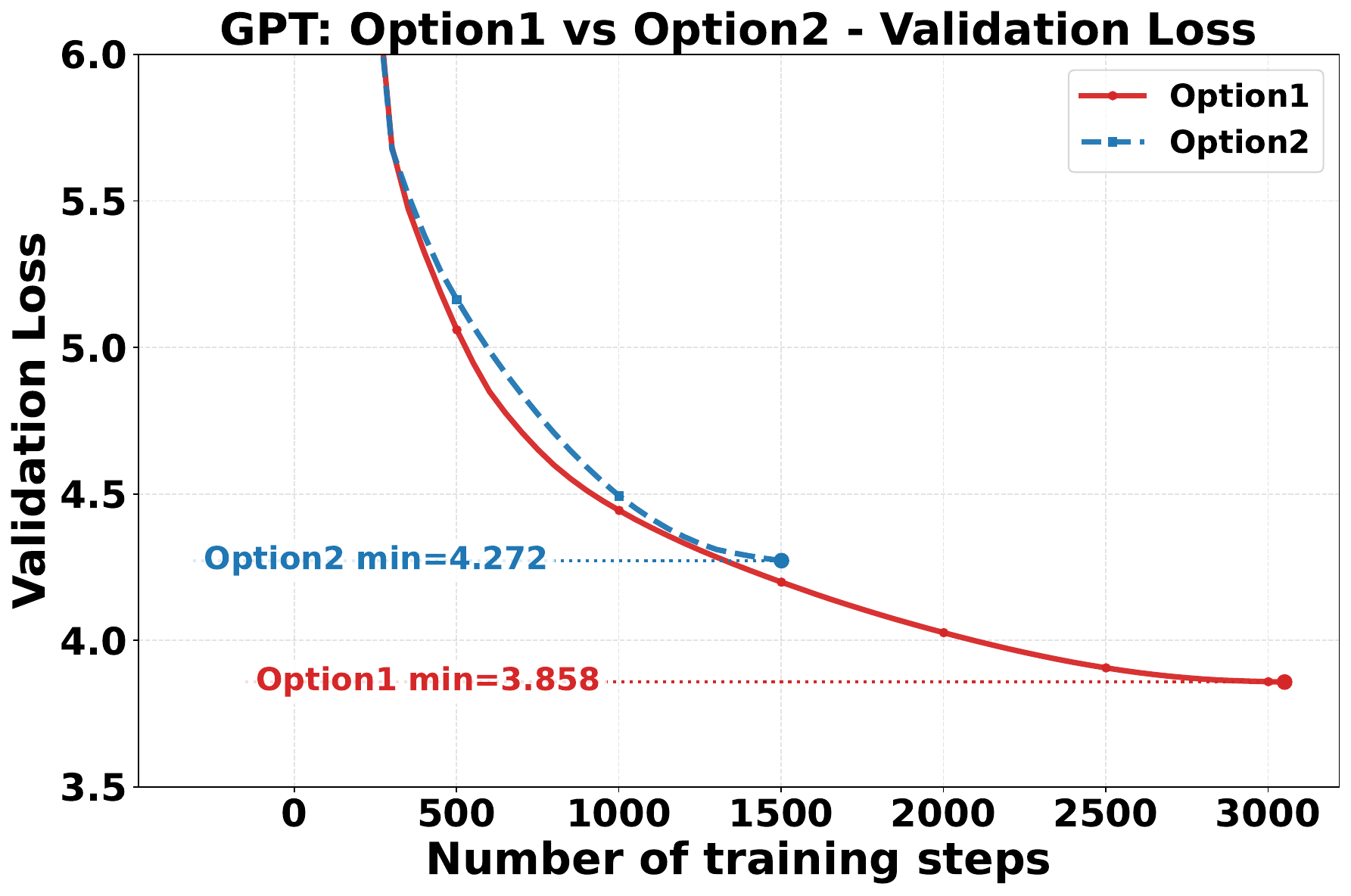}
        \caption{Training and validation loss with two gradient noise estimation options.}
        \label{fig:ablation_options}
\end{figure}

\section{License of Models and Datasets} 
\label{app:license}

\paragraph{GPT2}
OpenAI's GPT2 models are distributed by MIT License. We use only the open-source implementation of the GPT2 architecture in Hugging Face Transformers and do not redistribute Meta’s model weights. 

\paragraph{LLaMA}
We follow Meta Llama 2 Community License Agreement. We use only the open-source implementation of the LLaMA architecture in Hugging Face Transformers and do not redistribute Meta’s model weights. 

\paragraph{C4}
The English portion of the C4 (Colossal Clean Crawled Corpus) dataset comes from Hugging Face (allenai/c4), which is distributed under the Open Data Commons Attribution (ODC-By 1.0) license.
\paragraph{Minipile} It can be accessed from Hugging Face (JeanKaddour/minipile), which is distributed under MIT License.
\paragraph{Openwebtext} It can be accessed from Hugging Face (Skylion007/openwebtext), which is distributed under Creative Commons cc0-1.0 license.

% \section{The Use of Large Language Models (LLMs)}
% LLMs are not involved in our research methodology. Their use is limited to polish the writing.

\end{document}